\DeclarePairedDelimiter\ceil{\lceil}{\rceil}
\DeclareMathOperator*{\argmax}{arg\,max}
\DeclareMathOperator*{\argmin}{arg\,min}
\newcommand{\distas}[1]{\mathbin{\overset{#1}{\kern\z@\sim}}}%
\newsavebox{\mybox}\newsavebox{\mysim}
\newcommand{\distras}[1]{%
  \savebox{\mybox}{\hbox{\kern3pt$\scriptstyle#1$\kern3pt}}
  \savebox{\mysim}{\hbox{$\sim$}}%
  \mathbin{\overset{#1}{\kern\z@\resizebox{\wd\mybox}{\ht\mysim}{$\sim$}}}%
}
\newtheorem{thm}{Theorem}[section]
\newtheorem{cor}[thm]{Corollary}
\newtheorem{prop}[thm]{Proposition}
\newtheorem{lem}[thm]{Lemma}
\newtheorem{defn}[thm]{Definition}
\newtheorem{assumption}{Assumption}
\renewcommand{\L}{\mathcal{L}}
\newcommand{\D}{\mathscr{D}}
\newcommand{\degg}{\text{deg}}
\newcommand{\diam}{\text{diam}}
\newcommand{\Length}{T}
\newcommand{\Lip}{F}
\newcommand{\K}{\mathcal{K}}
\newcommand{\kappaOne}{\beta_{1}}
\newcommand{\kappaTwo}{\beta_{2}}
\newcommand{\NumSim}{N_{\text{sim}}}
\newcommand{\Prob}{\mathbb{P}}
\newcommand{\Ex}{\mathbb{E}}
\newcommand{\Var}{\text{Var}}
\newcommand{\X}{\mathcal{X}}
\newcommand{\fmax}{f_{\text{max}}}
\newcommand{\fmin}{f_{\text{min}}}
\newcommand{\Deg}{\text{Deg}}
\begin{document}

\title{Balancing Geometry and Density: Path Distances on High-Dimensional Data}

\author{Anna Little\footnote{Department of Mathematics, University of Utah, Salt Lake City, UT 84112, USA (\url{little@math.utah.edu})}
	\and Daniel McKenzie\footnote{Department of Mathematics, UCLA, Los Angeles, CA 90095, USA (\url{mckenzie@math.ucla.edu})}
        \and James M. Murphy\footnote{Department of Mathematics, Tufts University, Medford, MA 02155, USA \url{jm.murphy@tufts.edu}}        }

\maketitle

\begin{abstract}New geometric and computational analyses of power-weighted shortest-path distances (PWSPDs) are presented.  By illuminating the way these metrics balance geometry and density in the underlying data, we clarify their key parameters and illustrate how they provide multiple perspectives for data analysis.  Comparisons are made with related data-driven metrics, which illustrate the broader role of density in kernel-based unsupervised and semi-supervised machine learning.   Computationally, we relate PWSPDs on complete weighted graphs to their analogues on weighted nearest neighbor graphs, providing high probability guarantees on their equivalence that are near-optimal.  Connections with percolation theory are developed to establish estimates on the bias and variance of PWSPDs in the finite sample setting.  The theoretical results are bolstered by illustrative experiments, demonstrating the versatility of PWSPDs for a wide range of data settings.  Throughout the paper, our results generally require only that the underlying data is sampled from a compact low-dimensional manifold, and depend most crucially on the intrinsic dimension of this manifold, rather than its ambient dimension.
\end{abstract}

\section{Introduction}

The analysis of high-dimensional data is a challenge in modern statistical and machine learning.  In order to defeat the \emph{curse of dimensionality} \cite{Hughes1968, Gyorfi2006, Bellman2015}, distance metrics that efficiently and accurately capture intrinsically low-dimensional latent structure in high-dimensional data are required.  Indeed, this need to capture low-dimensional linear and nonlinear structure in data has led to the development of a range of data-dependent distances and related dimension reduction methods, which have been widely employed in applications \cite{Mahalanobis1936generalized, Tenenbaum2000global, Belkin2003laplacian, Donoho2003hessian, Coifman2006diffusion, Maaten2008visualizing}.  Understanding how these metrics trade off fundamental properties in the data (e.g. local versus global structure, geometry versus density) when making pointwise comparisons is an important challenge in their use, and may be understood as a form of model selection in unsupervised and semi-supervised machine learning problems.

\subsection{Power-Weighted Shortest Path Distances}

In this paper we analyze \emph{power}-\emph{weighted shortest path distances (PWSPDs)} and develop their applications to problems in machine learning.  These metrics compute the shortest path between two points in the data, accounting for the underlying density of the points along the path.  Paths through low-density regions are penalized, so that the optimal path must balance being ``short" (in the sense of the classical geodesic distance) with passing through high-density regions.  We consider a finite data set $\mathcal{X} = \{x_{i}\}_{i=1}^{n}\subset \mathbb{R}^{D}$, which we usually assume to be intrinsically low-dimensional, in the sense that there exists a compact $d$-dimensional Riemannian {\em data manifold} $\mathcal{M}\subset\mathbb{R}^{D}$ and a probability density function $f(x)$ supported on $\mathcal{M}$ such that $\{x_i\}_{i=1}^{n}\distras{i.i.d.} f(x)$. 

\begin{defn}\label{defn:PWSPD}For  $p \in [1,\infty)$ and for $x,y\in\mathcal{X}$, the \emph{(discrete) $p$-weighted shortest path distance (PWSPD)} from $x$ to $y$ is:\begin{align}
\label{eqn:DPWSPD}
\ell_p(x,y) =\min_{\pi=\{x_{i_{j}}\}_{j=1}^\Length}\left(\sum_{j=1}^{\Length-1} \|x_{i_{j}}-x_{i_{j+1}}\|^p\right)^{\frac{1}{p}},
\end{align}
where $\pi$ is a path of points in $\mathcal{X}$ with $x_{i_{1}}=x$ and $x_{i_{\Length}}=y$ and $\|\cdot\|$ is the Euclidean norm.
\end{defn}
Early uses of density-based distances for interpolation \cite{Saul1997_Variational} led to the formulation of PWSPD in the context of unsupervised and semi-supervised learning and applications \cite{fischer2001path, Vincent2003_Density, Bousquet2004_Measure, Sajama2005_Estimating, Chang2008_Robust, Bijral2011_Semi, Moscovich2017_Minimax, mckenzie2019power, Little2020path, Zhang2021_Hyperspectral, Borghini2020_Intrinsic}.  
It will occasionally be useful to think of $\ell_{p}^{p}(\cdot,\cdot)$ as the path distance in the complete graph on $\mathcal{X}$ with edge weights $\|x_i - x_j\|^{p}$, which we shall denote $\mathcal{G}_{\mathcal{X}}^{p}$. When $p=1$, $\ell_{1}(x,y) = \|x - y\|$, i.e. the Euclidean distance.  As $p$ increases, the largest elements in the set of path edge lengths $\left\{\left\|x_{i_{j}}-x_{i_{j+1}}\right\|\right\}_{j=1}^{\Length-1}$ begin to dominate the optimization (\ref{eqn:DPWSPD}), so that paths through higher density regions (with shorter edge lengths) are promoted.  When $p\rightarrow\infty$, $\ell_{p}$ converges (up to rescaling by the number of edges achieving maximal length) to the longest-leg path distance $\displaystyle\ell_{\infty}(x,y)=\min_{\pi=\{x_{i_{j}}\}_{j=1}^\Length}\max_{j=1,\dots,\Length-1} \|x_{i_{j}}-x_{i_{j+1}}\|$ \cite{Little2020path} and is thus driven by the density function $f$.  Outside these extremes, $\ell_{p}$ balances taking a ``short" path and taking one through regions of high density. Note that $\ell_p$ can be defined for $p<1$, but it does not satisfy the triangle inequality and is thus not a metric ($\ell_{p}^{p}$ however {\em is} a metric for all $p > 0$).  This case was studied in \cite{alamgir2012shortest}, where it is shown to have counterintuitive properties that should preclude its use in machine learning and data analysis.  

While (\ref{eqn:DPWSPD}) is defined for finite data, it admits a corresponding continuum formulation.

\begin{defn}\label{defn:CPWSPD}Let $(\mathcal{M},g)$ be a compact, $d$-dimensional Riemannian manifold and $f$ a continuous density function on $\mathcal{M}$ that is lower bounded away from zero (i.e. $f_{\min} := \min_{x\in\mathcal{M}}f(x) > 0$ on $\mathcal{M}$).  For $p \in [1,\infty)$ and $x,y\in\mathcal{M}$, the \emph{(continuum) $p$-weighted shortest path distance} from $x$ to $y$ is: \begin{equation}\label{eqn:CPWSPD}\L_p(x,y) = \left(\inf_{\gamma}\int_0^1 \frac{1}{f(\gamma(t))^{\frac{p-1}{d}}} \sqrt{g\left(\gamma'(t), \gamma'(t)\right)}dt\right)^{\frac{1}{p}},\end{equation}where $\gamma:[0,1]\rightarrow\mathcal{M}$ is a $\mathcal{C}^{1}$ path with $\gamma(0)=x, \gamma(1)=y$.
\end{defn}
Note $\L_1$ is simply the geodesic distance on $\mathcal{M}$. However for $p>1$ and a nonuniform density, the optimal path $\gamma$ is generally not the geodesic distance on $\mathcal{M}$: $\L_p$ favors paths which travel along high-density regions, and detours off the classical $\L_1$ geodesics are thus acceptable. The parameter $p$ controls how large of a detour is optimal; for large $p$, optimal paths may become highly nonlocal and different from classical geodesic paths. 

It is known \cite{Hwang2016_Shortest, Groisman2018nonhomogeneous} that when $f$ is continuous and positive, for $p>1$ and all  $x,y\in \mathcal{M}$,  
\begin{align}
\label{equ:cont_limit_PD}
\displaystyle\lim_{n\rightarrow\infty} n^{\frac{p-1}{pd}}\ell_{p}(x,y) =C_{p,d}\L_{p}(x,y)
\end{align}
for an absolute constant $C_{p,d}$ depending only on $p$ and $d$, i.e. that the discrete PWSPD computed on an i.i.d. sample from $f$ (appropriately rescaled) is a consistent estimator for the continuum PWSPD.  In particular,  \eqref{equ:cont_limit_PD} is established by \cite{Groisman2018nonhomogeneous} for $C^1$, isometrically embedded manifolds and by \cite{Hwang2016_Shortest} for smooth, compact manifolds without boundary and for $\ell_p$ defined using geodesic distance. We thus define the normalized (discrete) path metric 
\begin{align}
\label{equ:normDPWSPD}
\tilde{\ell}_p(x,y) &:= n^{\frac{p-1}{pd}}\ell_{p}(x,y)\, .
\end{align}
The $n^{\frac{p-1}{pd}}$ normalization factor accounts for the fact that for $p>1$, $\ell_{p}$ converges uniformly to 0 as $n\rightarrow \infty$ \cite{mckenzie2019power}.  Note that the $1/p$ exponent in \eqref{eqn:DPWSPD} and \eqref{equ:cont_limit_PD} is necessary to obtain a metric that is homogeneous.  Moreover, as $p\rightarrow\infty$, $\L_{p}$ is constant on regions of constant density, but $\L_{p}^p$ is not.  Indeed, consider a uniform distribution on $[0,1]^{d}$, which has density $f=\mathbbm{1}_{[0,1]^{d}}$. Then for all $x,y\in [0,1]^{d}$ and for all $p$, $\L_{p}^{p}(x,y)=\|x-y\|$.  On the other hand, for all $x,y\in [0,1]^{d}$, $\L_p(x,y)=\|x-y\|^{1/p} \rightarrow 1$ as $p \rightarrow \infty$, i.e. all points are equidistant in the limit $p\rightarrow\infty$. Thus the $1/p$ exponent in \eqref{eqn:DPWSPD} and \eqref{equ:cont_limit_PD} is necessary to obtain an entirely density-based metric for large $p$.  

In practice, it is more efficient to compute PWSPDs in a sparse graph instead of a complete graph.  It is thus natural to define PWSPDs \emph{with respect to a subgraph $\mathcal{H}$ of $\mathcal{G}_{\mathcal{X}}^{p}$}.

\begin{defn}\label{defn:PWSPD_G}Let $\mathcal{H}$ be any subgraph of $\mathcal{G}_{\mathcal{X}}^{p}$. For $x,y\in X$, let $\mathcal{P}_{\mathcal{H}}(x,y)$ be the set of paths connecting $x$ and $y$ in $\mathcal{H}$.  For $p \in [1,\infty)$ and for $x,y\in\mathcal{X}$, the \emph{(discrete) $p$-weighted shortest path distance (PWSPD) with respect to $\mathcal{H}$} from $x$ to $y$ is: \[\displaystyle\ell_p^{\mathcal{H}}(x,y) =\min_{\pi=\{x_{i_{j}}\}_{j=1}^{\Length}\in \mathcal{P}_{\mathcal{H}}(x,y)}\left (\sum_{j=1}^{\Length-1} \|x_{i_{j}} - x_{i_{j+1}}\|^p\right)^{\frac{1}{p}}.\]
\end{defn}
Clearly $\ell_{p}^{\mathcal{G}_{\mathcal{X}}^{p}}(\cdot,\cdot) = \ell_{p}(\cdot,\cdot)$. In order to compute all-pairs PWSPDs in a complete graph with $n$ nodes (i.e. $\ell_{p}(x_{i},x_{j})$ for all $x_{i},x_{j}\in \mathcal{X}$), a direct application of Dijkstra's algorithm has complexity $O(n^{3})$.  Let $\mathcal{G}^{p,k}_{\mathcal{X}}$ denote the {\em $k$NN graph}, constructed from $\mathcal{G}^{p}_{\mathcal{X}}$ by retaining only edges $\{x,y\}$ if  $x$ is amongst the $k$ nearest neighbors of $y$ in $\mathcal{X}$ (we say: ``$x$ is a $k$NN of $y$'' for short) or vice versa. In some cases the PWSPDs with respect to $\mathcal{G}^{p,k}_{\mathcal{X}}$ are known to coincide with those computed in $\mathcal{G}_{\mathcal{X}}^{p}$ \cite{Groisman2018nonhomogeneous,chu2020exact}. If so,  we say the $k$NN graph is a {\em $1$-spanner} of $\mathcal{G}_{\mathcal{X}}^{p}$.  This provides a significant computational advantage, since $k$NN graphs are much sparser, and reduces the complexity of computing all-pairs PWSPD to $O(kn^{2})$ \cite{johnson1977efficient}. 

\subsection{Summary of Contributions}

This article develops new analyses, computational insights, and applications of PWSPDs, which may be summarized in three major contributions. First, we establish that when $\frac{p}{d}$ is not too large, PWSPDs locally are density-rescaled Euclidean distances.  We give precise error bounds that improve over known bounds \cite{Hwang2016_Shortest} and are tight enough to prove the local equivalence of Gaussian kernels constructed with PWSPD and density-rescaled Euclidean distances.  We also develop related theory which clarifies the role of density in machine learning kernels more broadly.  A range of machine learning kernels that normalize in order to mitigate or leverage differences in underlying density are considered and compared to PWSPD.  Relatedly, we analyze how PWSPDs become increasingly influenced by the underlying density as $p\rightarrow\infty$.  We also illustrate the role of density and benefits of PWSPDs on illustrative data sets.

Second, we improve and extend known bounds on $k$ \cite{Groisman2018nonhomogeneous,mckenzie2019power, chu2020exact} guaranteeing that the $k$NN graph is a $1$-spanner of $\mathcal{G}_{\mathcal{X}}^{p}$. Specifically, we show that for any $1<p<\infty$, the $k$NN graph is a $1$-spanner of $\mathcal{G}_{\mathcal{X}}^{p}$ with probability exceeding $1-1/n$ if $k \ge C_{p,d,f,\mathcal{M}}\cdot\log(n)$, for an explicit constant $C_{p,d,f,\mathcal{M}}$ that depends on the density power $p$, intrinsic dimension $d$, underlying density $f$, and the geometry of the manifold $\mathcal{M}$, but is crucially independent of $n$.  These results are proved both in the case that the manifold is isometrically embedded and in the case that the edge lengths are in terms of intrinsic geodesic distance on the manifold.  Our results provide an essential computational tool for the practical use of PWSPDs, and their key dependencies are verified numerically with extensive large-scale experiments.

Third, we bound the convergence rate of PWSPD to its continuum limit using a percolation theory framework, thereby quantifying the \cite{Hwang2016_Shortest, Groisman2018nonhomogeneous} asymptotic convergence result (\ref{equ:normDPWSPD}).  Specifically, we develop bias and variance estimates by relating results on Euclidean first passage percolation (FPP) to the PWSPD setting.  Surprisingly, these results suggest that the variance of PWSPD is essentially independent of $p$, and depends on the intrinsic dimension $d$ in complex ways.  Numerical experiments verify our theoretical analyses and suggest several conjectures related to Euclidean FPP that are of independent interest.  

\subsection{Notation}

We shall use the notation in Table \ref{tab:notation} consistently, though certain specialized notation will be introduced as required. We assume throughout that the data $\mathcal{X}$ is drawn from a compact Riemannian data manifold $(\mathcal{M},g)$, with additional assumptions imposed on $\mathcal{M}$ as needed; we do not rigorously consider the more general case that $\mathcal{X}$ is drawn from a distribution supported \emph{near} $\mathcal{M}$.  If $\mathcal{M}\subset\mathbb{R}^{D}$, we assume that it is isometrically embedded in $\mathbb{R}^{D}$, i.e. $g$ is the unique metric induced by restricting the Euclidean metric on $\mathbb{R}^{D}$ to $\mathcal{M}$, unless otherwise stated.  If an event holds with probability $1-c/n$, where $n = |\mathcal{X}|$ and $c$ is independent of $n$, we say it holds \emph{with high probability (w.h.p.)}.

\begin{table}[htbp!]
\vspace{0.1in}
\begin{center}
\begin{small}
\begin{tabular}{c|c}
\toprule
\textsc{Notation} & \textsc{Definition}  \\
\midrule
$\mathcal{X}$ & $\mathcal{X} = \{x_i\}_{i=1}^{n}\subset\mathbb{R}^{D}$, a finite data set \\
$D$ & ambient dimension of data set $\mathcal{X}$ \\
$d$           & intrinsic dimension of data set $\mathcal{X}$ \\
$\|v\|_{p}$  & $\|v\|_{p}=(\sum_{i=1}^{D}|v_{i}|^{p})^{\frac{1}{p}}$, the Euclidean $p$-norm of $v\in\mathbb{R}^{D}$\\
$\|v\|$ & $\|v\|_{2}$, the Euclidean $2$-norm\\
$|c|$ & the absolute value of $c\in\mathbb{R}$\\
$\mathcal{G}_{\mathcal{X}}^{p}$ & complete graph on $\mathcal{X}$ with edge weight $\|x_{i}-x_{j}\|^{p}$ between $x_{i},x_{j}\in\mathcal{X}$\\
$\{x,y\}$ & edge between nodes $x,y$ in a graph\\
$(\mathcal{M},g)$ & a Riemannian manifold with associated metric $g$\\
$\kappa$ & measure of curvature on $\mathcal{M}$; see Definition \ref{defn:LocalFlatness}\\
$\kappa_{0}$ & measure of regularity on $\mathcal{M}$; see Definition \ref{defn:V}\\
$\zeta$ & reach of a manifold $\mathcal{M}$; see Definition \ref{defn:reach}\\
$f(x)$        & probability density function from which $\mathcal{X}$ is drawn \\
$\fmin$, $\fmax$ & minimum and maximum values of density $f$ defined on compact manifold $\mathcal{M}$\\
$\{\pi_{i}\}_{i=1}^{T}$, $\gamma(t)$ & discrete, continuous path \\
$\ell_{p}(x,y)$ & discrete PWSPD, see \eqref{eqn:DPWSPD} \\
$ \tilde{\ell}_{p}(x,y)$ & rescaled version of $\ell_{p}(x,y)$, see \eqref{equ:normDPWSPD} \\
$\ell_{p}^{\mathcal{H}}(x,y)$ & discrete PWSPD defined on the subgraph $\mathcal{H}\subset\mathcal{G}_{\mathcal{X}}^{p}$; see Definition \ref{defn:PWSPD_G}\\
$\L_{p}(x,y)$ & continuum PWSPD, see \eqref{eqn:CPWSPD} \\
$\D(x,y)$ & geodesic distance on manifold $\mathcal{M}$ \\
$\D_{f,\text{Euc}}(x,y)$ & density-based \emph{stretch} of Euclidean distance with respect to $f$ \\
$W, \Deg, L$ & weight, degree, and Laplacian matrices associated to a graph\\
$\delta(\cdot, \cdot)$ & arbitrary metric \\
$B_{\delta}(x,\epsilon)$ & $\{y \ | \ \delta(x,y)\leq \epsilon\}$,  ball of radius $\epsilon>0$ centered at $x$ with respect to $\delta$\\
$B(x,\epsilon)$ & Euclidean ball of radius $\epsilon>0$ centered at $x$, dimension determined by context\\
$\mathcal{D}_{\alpha, p}(x,y)$ & $p$-elongated set of radius $\alpha$ based at points $x,y$; see Definition \ref{defn:p_elongated_set}\\
$k$ & number of nearest neighbors (NN), sometimes dependent on $n$ (i.e. $k=k(n)$)\\
$\mu,\chi$       & percolation time, fluctuation constants \\
$\lambda$  & intensity parameter in a Poisson point process \\
$\bar{A}$ & complement of the set $A$\\
$\Ex[\xi], \Var[\xi]$ & expectation, variance of a random variable $\xi$\\
$\diam(A)$ & $\sup_{x,y\in A}\|x-y\|$, the Euclidean diameter of a set $A$\\
$\text{vol}(A)$ & volume of a set $A$, with dimension depending on context \\
$\bar{A}$ & complement of a set $A$\\
$\partial A$ & boundary of a set $A$\\
$ a \lesssim b $ &  $a\le C b$ for a constant $C$ independent of the dependencies of $a,b$\\
$ a \propto b $ & quantity $a$ is proportional to quantity $b$, i.e. $a\lesssim b$ and $b\lesssim a$\\
\bottomrule
\end{tabular}
\label{tab:notation}
\end{small}
\end{center}
\vskip -0.1in
\caption{\label{tab:notation} Notation used throughout the paper.} 
\end{table}

\section{Local Analysis: Density and Kernels}
\label{sec:LocalAnalysis}

Density-driven methods are commonly used for unsupervised and semi-supervised learning \cite{Cheng1995_Mean, Ester1996_Density, Coifman2006diffusion, Rinaldo2010_Generalized, Bijral2011_Semi, Azizyan2013_Density, Rodriguez2014_Clustering}.  Despite this popularity, the role of density is not completely clear in this context.  Indeed, some methods seek to leverage variations in density while others mitigate it.  In this section, we explore the role that density plays in popular machine learning kernels, including those used in self-tuning spectral clustering and diffusion maps. We compare with the effect of density in $\ell_p$-based kernels, and illustrate the primary advantages and disadvantages on toy data sets. 

\subsection{Role of Density in Graph Laplacian Kernels}
\label{subsec:RoleDensityOtherKernels}

A large family of algorithms  \cite{Belkin2003laplacian, belkin2007convergence, Shi2000, Ng2002, VonLuxburg2007} view data points as the nodes of a graph, and define the corresponding edge weights via a kernel function.  In general, by kernel we mean a function $\mathcal{K}:\mathbb{R}^{D}\times\mathbb{R}^{D}\rightarrow\mathbb{R}$ that captures a notion of \emph{similarity} between elements of $\mathbb{R}^{D}$.  More precisely, we suppose that $\mathcal{K}$ is of the form $\mathcal{K}(x_{i},x_{j})=h(\delta(x_{i},x_{j}))$ for some metric $\delta$ on $\mathbb{R}^{D}$ and smooth, positive, rapidly decaying (hence integrable) function $h:\mathbb{R}\rightarrow\mathbb{R}$.  Our technical results will pertain exclusively to the Gaussian kernel $\mathcal{K}(x_{i},x_{j})=\exp(-\delta(x_{i},x_{j})^{2}/\epsilon^{2})$ for some metric $\delta$ and scaling parameter $\epsilon>0$, albeit more general kernels have been considered in the literature \cite{antil2021fractional, Damelin2010_Energy, berry2016variable}.  Given $\mathcal{X}\subset\mathbb{R}^{D}$, one first defines a weight matrix $W\in\mathbb{R}^{n\times n}$ by $W_{ij} = \K(x_{i},x_{j})$ for some kernel $\mathcal{K}$, and diagonal degree matrix $\Deg\in\mathbb{R}^{n\times n}$ by $\Deg_{ii} = \sum_{j=1}^{n} W_{ij}$.  A \emph{graph Laplacian} $L$ is then defined using $W, \Deg$.  Then, the $K$ lowest frequency eigenvectors of $L$, denoted $\phi_1,\ldots,\phi_K$, define a $K$-dimensional spectral embedding of the data by $x_{i}\mapsto (\phi_{1}(x_{i}),\phi_{2}(x_{i}),\dots,\phi_{K}(x_{i}))$, where $\phi_{j}(x_{i})=(\phi_{j})_{i}$.  Commonly, a standard clustering algorithm such as $K$-means is then applied to the spectral embedding.  This procedure is known as \textit{spectral clustering} (SC). In unnormalized SC, $L=\Deg-W$, while in normalized SC either the random walk Laplacian $L_{\text{RW}} = \Deg^{-1}L$ or the symmetric normalized Laplacian $L_{\text{SYM}}= \Deg^{-1/2}L\Deg^{-1/2}$ is used.

Many modifications of this general framework have been considered. Although SC is better able to handle irregularly shaped clusters than many traditional algorithms  \cite{Arias2011_Clustering, Schiebinger2015_Geometry}, it is often unstable in the presence of low degree points and sensitive to the choice of scaling parameter $\epsilon$ when using the Gaussian kernel \cite{VonLuxburg2007}. These shortcomings motivated \cite{zelnik2005self} to apply SC with the \emph{self-tuning kernel} $W_{ij} = \exp\left(-\frac{\|x_i-x_j\|^2}{\sigma_{i,k}\sigma_{j,k}} \right),$ where $\sigma_{i,k}$ is Euclidean distance of $x_i$ to its $k^{\text{th}}$ NN.  To clarify how the data density influences this kernel, consider how $\sigma_{i,k}$ relates to the $k$NN density estimator at $x_i$:
\begin{align}
\label{equ:KNN_dens_est}
f_n(x_i) &:= \frac{k}{n \text{vol}(B(0,1))\sigma_{i, k}^d}\,.
\end{align}
It is known \cite{loftsgaarden1965nonparametric} that if $k=k(n)$ is such that $k(n) \rightarrow \infty$ while $k(n)/n \rightarrow 0$, then $f_n(x_i)$ is a consistent estimator of $f(x_i)$, as long as $f$ is continuous and positive.
Furthermore, if $f$ is uniformly continuous and  $k(n)/\log n \rightarrow \infty$ while $k(n)/n \rightarrow 0$, then $\sup_{i} |f_n(x_i)-f(x_i)| \rightarrow 0$ with probability 1  \cite{devroye1977strong}. Although these results assume the density $f$ is supported in $\mathbb{R}^d$, the density estimator (\ref{equ:KNN_dens_est}) is consistent in the general case when $f$ is supported on a $d$-dimensional Riemannian manifold $\mathcal{M} \subseteq \mathbb{R}^D$ for $\log n \ll k(n) \ll n$ \cite{farahmand2007manifold}.  For such $k(n)$, $\sigma_{i,k} \rightarrow \epsilon_{n,d}f(x_i)^{-\frac{1}{d}}$ for some constant $\epsilon_{n,d}$ depending on $n, d$. Thus, for $n$ large the kernel for self-tuning spectral clustering is approximately:
\begin{align}
\label{equ:self_tune_kernel_asym}
W_{ij} &\approx \exp\left(-f(x_i)^{\frac{1}{d}}f(x_j)^{\frac{1}{d}}\frac{\|x_i-x_j\|^2}{\epsilon_{n,d}^2} \right).
\end{align}
Relative to a standard SC kernel, (\ref{equ:self_tune_kernel_asym}) weakens connections in high density regions and strengthens connections in low density regions.

Diffusion maps \cite{Coifman2005_Geometric, Coifman2006diffusion} is a more general framework which reduces to SC for certain parameter choices.  More specifically, \cite{Coifman2006diffusion} considered the family of kernels
\begin{align}
\label{equ:DMkernel}
W_{ij} &= \frac{\exp\left(-\|x_i-x_j\|^2 / \epsilon^2\right)}{\degg(x_i)^{a}\degg(x_j)^{a}} \quad, \quad \degg(x_i) = \sum_{j=1}^{n} \exp\left(-\|x_i-x_j\|^2 / \epsilon^2\right)
\end{align}parametrized by $a\in [0,1]$, which determines the degree of density normalization. 
Since $\degg(x_i) \propto f(x_i) + O(\epsilon^2)$, $\degg(x_i)$ is a kernel density estimator of the density $f(x_i)$ \cite{berry2016local} and, up to higher order terms,
\begin{align}
\label{equ:DMkernel_density_form}
W_{ij}&\propto \frac{\exp\left(-\|x_i-x_j\|^2 / \epsilon^2\right)}{f(x_i)^{a}f(x_j)^{a}}.
\end{align}
Note that $f$ has an effect on the kernel similar to the self-tuning kernel (\ref{equ:self_tune_kernel_asym}): connections in high density regions are weakened, and connections in low density regions are strengthened. 
Let $L_{\text{RW}}^{a, \epsilon}$ denote the discrete random walk Laplacian using the weights $W_{ij}$ given in (\ref{equ:DMkernel}).  The discrete operator $-L_{\text{RW}}^{a,\epsilon}/\epsilon^2$ converges to the continuum Kolmogorov operator $\mathscr{L} \psi = \Delta\psi+(2-2a)\nabla\psi\cdot\frac{\nabla f}{f}$
as $n\rightarrow \infty, \epsilon\rightarrow 0^{+}$ for Laplacian operator $\Delta$ and gradient $\nabla$, both taken with respect to the Riemannian metric inherited from the ambient space  \cite{Belkin2003laplacian, Coifman2006diffusion, berry2016local}. When $a=0$, we recover standard spectral clustering; there is no density renormalization in the kernel but the limiting operator is density dependent. When $a=1$, $-L_{\text{RW}}^{1,\epsilon}/\epsilon^2 \rightarrow \Delta$; in this case the discrete operator is density dependent but the limiting operator is purely geometric, since the density term is eliminated.  We note that Laplacians and diffusion maps with various metrics and norms have been considered in a range of settings \cite{Xu2010clustering, Boninsegna2015_Investigating, Van2018recovering, Kileel2020_Manifold}.

\subsection{Local Characterization of PWSPD-Based Kernels}

While the kernels discussed in Section \ref{subsec:RoleDensityOtherKernels} compensate for discrepancies in density, PWSPD-based kernels strengthen connections through high-density regions and weaken connections through low-density regions. To illustrate more clearly the role of density in PWSPD-based kernels, we first show that locally the continuum PWSPD $\L_p^p$ is well-approximated by the density-based \emph{stretch} of Euclidean distance $\displaystyle \D_{f,\text{Euc}}(x,y) = \frac{\|x-y\|}{\left(f(x)f(y)\right)^{\frac{p-1}{2d}}},$ as long as $f$ does not vary too rapidly and $\mathcal{M}$ does not curve too quickly.  This is quantified in Lemma \ref{lem:local_equivalence},  which is then used to prove Theorem \ref{thm:metric_difference_euc}, which bounds the local deviation of $\L_p$ from $\D_{f,\text{Euc}}^{1/p}$. Finally, Corollary \ref{cor:kernel_equivalence} establishes that Gaussian kernels constructed with $\L_p$ and $\D_{f,\text{Euc}}^{1/p}$ are locally similar. 
Throughout this section we assume $\mathcal{M}\in S(d,\kappa,\epsilon_0)$ as defined below. 
\begin{defn}\label{defn:LocalFlatness}
	An isometrically embedded Riemannian manifold $\mathcal{M}\subset\mathbb{R}^{D}$ is an element of $S(d,\kappa,\epsilon_0)$ if it is compact with dimension $d$, $\text{vol}(\mathcal{M})=1$, and $\D(x,y) \leq \|x-y\|(1+\kappa\|x-y\|^2)$ for all $x,y\in \mathcal{M}$ such that $\D(x,y) \leq \epsilon_0$, where $\D(\cdot,\cdot)$ is geodesic distance on $\mathcal{M}$.
\end{defn}	
The condition $\D(x,y) \leq \|x-y\|(1+\kappa\|x-y\|^2)$ for all $x,y\in \mathcal{M}$ such that $\D(x,y) \leq \epsilon_0$ is equivalent to an upper bound on the second fundamental form: $\|II_{x}\| \leq \kappa$ for all $x\in\mathcal{M}$ \cite{antil2021fractional, Malik2019connecting}.  Note that this is also equivalent to a positive lower bound on the \emph{reach} \cite{Federer1959_Curvature} of $\mathcal{M}$ (e.g. Proposition 6.1 in \cite{Niyogi2008_Finding} and Proposition A.1 in \cite{Aamari2019estimating}); see Definition \ref{defn:reach}.

Let $B_{\L_p^p}(x,\epsilon)$ and $B_{\D}(x,\epsilon)$ denote, respectively, the (closed) $\L_p^p$ and geodesic balls centered at $x$ of radius $\epsilon$.  Let $\fmax = \max_y \{f(y):y\in \mathcal{M}\}$, $\fmin = \min_y \{f(y):y\in \mathcal{M}\}$ be the global density maximum and minimum.  Define the following local quantities:
\begin{align*}
\fmin(x,\epsilon) &=\min_y \left\{f(y) :  y \in B_{\D}(x,\epsilon(1+\kappa\epsilon^2)) \right\}, \\
\fmax(x,\epsilon) &=\max_y \{f(y) : y \in B_{\L_p^p}(x,\epsilon(1+\kappa\epsilon^2)/\fmin(x,\epsilon)^{\frac{p-1}{d}}) \}.
\end{align*}
Let $\rho_{x,\epsilon}  = \fmax(x,\epsilon)/\fmin(x,\epsilon)$, which characterizes the local discrepancy in density in a ball of radius $O(\epsilon)$ around the point $x$. 

The following Lemma establishes that $\L_{p}^p$ and $\D_{f,\text{Euc}}$ are locally equivalent, and that discrepancies depend on $(\rho_{x,\epsilon})^{\frac{p-1}{d}}$  and the curvature constant $\kappa$. We note similar estimates appear in \cite{alamgir2012shortest} for the special case $p=0$.  The proof appears in Appendix \ref{app:Proofs_for_LocalAnalysis}.

\begin{lem}
	\label{lem:local_equivalence}
	Let $\mathcal{M} \in S(d,\kappa,\epsilon_0)$. Then for all $y \in \mathcal{M}$ with $\D(x,y)\leq \epsilon_0$ and $\|x-y\|\leq \epsilon$,
	\begin{align}
	\label{equ:local_equivalence}
	\frac{1}{(\rho_{x,\epsilon})^{\frac{p-1}{d}}} \D_{f,\text{Euc}}(x,y) &\leq \L_p^p(x,y) \leq (\rho_{x,\epsilon})^{\frac{p-1}{d}} (1+\kappa \epsilon^2)\D_{f,\text{Euc}}(x,y) \, .
	\end{align}
\end{lem}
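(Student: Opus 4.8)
\emph{Plan.} The plan is to prove the two bounds separately: for the upper bound I would use a minimizing geodesic from $x$ to $y$ as the competitor path, and for the lower bound I would run a dichotomy on whether an arbitrary $\mathcal{C}^1$ competitor path stays inside a suitable $\L_p^p$-ball. Two elementary facts will be used throughout: from the definition of $S(d,\kappa,\epsilon_0)$, since $d(x,y)\le\epsilon_0$ and $\|x-y\|\le\epsilon$ we have $d(x,y)\le\|x-y\|(1+\kappa\|x-y\|^2)\le\|x-y\|(1+\kappa\epsilon^2)\le\epsilon(1+\kappa\epsilon^2)$; and for an isometrically embedded submanifold $\|x-y\|\le d(x,y)$ because straight lines are shortest in the ambient space. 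Also $(p-1)/d\ge 0$, so a valid inequality between positive quantities is preserved upon raising both sides to the power $(p-1)/d$.

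\emph{Upper bound.} First I would take $\gamma$ to be a minimizing geodesic from $x$ to $y$, which exists and is $\mathcal{C}^1$ by compactness of $\mathcal{M}$. Its image lies in $B_d(x,d(x,y))\subseteq B_d(x,\epsilon(1+\kappa\epsilon^2))$, so $f(\gamma(t))\ge\fmin(x,\epsilon)$ along $\gamma$, and its Riemannian length equals $d(x,y)\le\|x-y\|(1+\kappa\epsilon^2)$. Inserting $\gamma$ into the definition of $\L_p^p$ gives $\L_p^p(x,y)\le\|x-y\|(1+\kappa\epsilon^2)/\fmin(x,\epsilon)^{(p-1)/d}\le\epsilon(1+\kappa\epsilon^2)/\fmin(x,\epsilon)^{(p-1)/d}$, so $y$ (and trivially $x$) lies in the $\L_p^p$-ball appearing in the definition of $\fmax(x,\epsilon)$; hence $f(x),f(y)\le\fmax(x,\epsilon)$, so $(f(x)f(y))^{1/2}\le\fmax(x,\epsilon)$. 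Multiplying and dividing $\L_p^p(x,y)\le\|x-y\|(1+\kappa\epsilon^2)/\fmin(x,\epsilon)^{(p-1)/d}$ by $(f(x)f(y))^{(p-1)/(2d)}$ and applying this bound yields $\L_p^p(x,y)\le(1+\kappa\epsilon^2)(\rho_{x,\epsilon})^{(p-1)/d}d_{f,\text{euc}}(x,y)$.

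\emph{Lower bound.} Let $\gamma$ be any $\mathcal{C}^1$ path from $x$ to $y$ and set $R=\epsilon(1+\kappa\epsilon^2)/\fmin(x,\epsilon)^{(p-1)/d}$. If the image of $\gamma$ stays inside $\mathcal{B}_{\L_p^p}(x,R)$, then $f(\gamma(t))\le\fmax(x,\epsilon)$ along $\gamma$ by definition of $\fmax(x,\epsilon)$, so the weighted length of $\gamma$ is at least $d(x,y)/\fmax(x,\epsilon)^{(p-1)/d}\ge\|x-y\|/\fmax(x,\epsilon)^{(p-1)/d}$. Otherwise $\gamma$ exits $\mathcal{B}_{\L_p^p}(x,R)$; by continuity of $t\mapsto\L_p^p(x,\gamma(t))$ there is a $t^\ast$ with $\L_p^p(x,\gamma(t^\ast))\ge R$, and the weighted length of $\gamma|_{[0,t^\ast]}$ is a valid competitor for $\L_p^p(x,\gamma(t^\ast))$, so the weighted length of $\gamma$ is at least $R\ge\|x-y\|/\fmax(x,\epsilon)^{(p-1)/d}$ (using $\|x-y\|\le\epsilon(1+\kappa\epsilon^2)$ and $\fmin(x,\epsilon)\le\fmax(x,\epsilon)$). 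In either case the weighted length is at least $\|x-y\|/\fmax(x,\epsilon)^{(p-1)/d}$, and taking the infimum over $\gamma$ gives $\L_p^p(x,y)\ge\|x-y\|/\fmax(x,\epsilon)^{(p-1)/d}$. Finally, since $d(x,y)\le\epsilon(1+\kappa\epsilon^2)$ both $x$ and $y$ lie in $B_d(x,\epsilon(1+\kappa\epsilon^2))$, so $(f(x)f(y))^{1/2}\ge\fmin(x,\epsilon)$; rewriting $\|x-y\|/\fmax(x,\epsilon)^{(p-1)/d}=\big((f(x)f(y))^{1/2}/\fmax(x,\epsilon)\big)^{(p-1)/d}d_{f,\text{euc}}(x,y)$ and applying this bound yields $\L_p^p(x,y)\ge(\rho_{x,\epsilon})^{-(p-1)/d}d_{f,\text{euc}}(x,y)$.

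\emph{Main obstacle.} I expect the only genuine subtlety to be the lower bound for wandering paths: a competitor can leave every geodesic ball around $x$, so one cannot bound $f$ from below along its image directly. This is precisely why $\fmax(x,\epsilon)$ is defined as a maximum over an $\L_p^p$-ball rather than a geodesic ball; the dichotomy above trades a long geometric excursion for a large weighted length. Minor care is also needed regarding existence of minimizing geodesics (compactness of $\mathcal{M}$) and the fact that the infimum in $\L_p^p$ need not be attained (argue per path, then pass to the infimum).
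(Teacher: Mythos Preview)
Your proof is correct and follows essentially the same route as the paper: a minimizing geodesic as competitor for the upper bound, and for the lower bound the observation that any path of small weighted length must remain inside the $\L_p^p$-ball defining $\fmax(x,\epsilon)$, after which $f$ can be bounded above along the path. The only difference is a minor technical refinement: the paper simply takes a path $\gamma_0$ \emph{achieving} $\L_p^p(x,y)$ and argues it is trapped in the $\L_p^p$-ball, whereas your dichotomy (stay inside versus exit) avoids assuming the infimum is attained and makes the argument work per competitor path before passing to the infimum.
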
	

Note that corresponding bounds in terms of geodesic distance follow easily from the definition of $\L_{p}$: $f_{\max}(x,\epsilon)^{-\frac{p-1}{d}}\D(x,y) \leq \L^{p}_{p}(x,y) \leq f_{\min}(x,\epsilon)^{-\frac{p-1}{d}}\D(x,y)$.  Lemma \ref{lem:local_equivalence} thus establishes that the metrics $\L_p^p$ and $\D_{f,\text{Euc}}$ are locally equivalent when (i) $\rho_{x,\epsilon}$ is close to 1, (ii) $\frac{p-1}{d}$ is not too large, and (iii) $\kappa$ is not too large.  However, when $\frac{p-1}{d} \gg 1$, $\L_p^p$ balls may become highly nonlocal in terms of geodesics. 

The following Theorem establishes the local equivalence of $\L_p$ and $\D_{f,\text{Euc}}^{1/p}$ (and thus kernels constructed using these metrics). Assuming the density does not vary too quickly, Lemma \ref{lem:local_equivalence} can be used to show that locally the difference between $\D_{f,\text{Euc}}^{1/p}$ and $\L_p$ is small. Variations in density are controlled by requiring that $f$ is $\Lip$-Lipschitz with respect to geodesic distance, i.e. $|f(x)-f(y)|\leq \Lip \D(x,y)$.  This Lipschitz assumption allows us to establish a higher-order equivalence compared to existing results (e.g. Corollary 9 in \cite{Hwang2016_Shortest}), which we leverage to obtain the local kernel equivalence stated in Corollary \ref{cor:kernel_equivalence}.  The following analysis also establishes explicit dependencies of the equivalence on $d,p,\Lip,\kappa$.

\begin{thm}
	\label{thm:metric_difference_euc}
	Assume $\mathcal{M}\in S(d,\kappa,\epsilon_0)$ and that $f$ is a bounded  $\Lip$-Lipschitz density function on $\mathcal{M}$ with $\fmin>0$.  Let $\epsilon>0$ and let \[\rho = \displaystyle\max_{x\in\mathcal{M}} \rho_{x,\epsilon}, \ C_1 =  \frac{\Lip(\rho^{\frac{p-1}{d}}+1)(p-1)}{\fmin^{1+\frac{p-1}{pd}}pd}, \ C_2 = \frac{\kappa}{\fmin^{\frac{p-1}{pd}}p}.\]  Then for all $x, y \in \mathcal{M}$ such that $\D(x,y)\leq \epsilon_0$ and $\|x-y\| \leq \epsilon$, \[|\L_p(x,y)-\D_{f,\text{Euc}}^{1/p}(x,y)| \leq C_1\epsilon^{1+\frac{1}{p}}+C_2\epsilon^{2+\frac{1}{p}}+O(\epsilon^{3+\frac{1}{p}}).\]

\end{thm}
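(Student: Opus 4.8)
The plan is to bootstrap Theorem~\ref{thm:metric_difference_euc} out of Lemma~\ref{lem:local_equivalence}: raise the two-sided bound \eqref{equ:local_equivalence} to the power $1/p$ and then show that the multiplicative constants appearing there are $1+O(\epsilon)$ under the Lipschitz and curvature hypotheses. All quantities in \eqref{equ:local_equivalence} are positive, so applying $t\mapsto t^{1/p}$ and using $\rho_{x,\epsilon}\le\rho$ gives, with $s:=\tfrac{p-1}{pd}$,
\[
\rho^{-s}\,d_{f,\text{euc}}^{1/p}(x,y)\ \le\ \L_p(x,y)\ \le\ \rho^{s}(1+\kappa\epsilon^{2})^{1/p}\,d_{f,\text{euc}}^{1/p}(x,y),
\]
hence
\[
\bigl|\L_p(x,y)-d_{f,\text{euc}}^{1/p}(x,y)\bigr|\ \le\ \max\!\bigl(1-\rho^{-s},\ \rho^{s}(1+\kappa\epsilon^{2})^{1/p}-1\bigr)\,d_{f,\text{euc}}^{1/p}(x,y).
\]
Since $\rho\ge1$ one has $1-\rho^{-s}=(\rho^{s}-1)/\rho^{s}\le\rho^{s}-1\le\rho^{s}(1+\kappa\epsilon^{2})^{1/p}-1$, so the maximum is the last expression, which, using $(1+\kappa\epsilon^{2})^{1/p}=1+\tfrac{\kappa}{p}\epsilon^{2}+O(\epsilon^{4})$, equals $(\rho^{s}-1)+\tfrac{\rho^{s}\kappa}{p}\epsilon^{2}+O(\epsilon^{4})$. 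Combined with the elementary bound $d_{f,\text{euc}}^{1/p}(x,y)=\|x-y\|^{1/p}\bigl(f(x)f(y)\bigr)^{-s/2}\le\epsilon^{1/p}\fmin^{-s}$, the theorem is reduced to showing $\rho^{s}-1=O(\epsilon)$ with the correct leading coefficient.

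The crux is therefore the bound on $\rho-1$, for which I would use the $L$-Lipschitz hypothesis together with a containment estimate for $\L_p^p$-balls. If $z\in\mathcal{B}_{\L_p^p}(x,R)$ and $\gamma$ minimizes $\L_p^p$ from $x$ to $z$, then every initial subpath of $\gamma$ is again a minimizer and thus stays in $\mathcal{B}_{\L_p^p}(x,R)$; so whenever $R\le\epsilon(1+\kappa\epsilon^{2})/\fmin(x,\epsilon)^{\frac{p-1}{d}}$ we have $f(\gamma(t))\le\fmax(x,\epsilon)$ along $\gamma$, whence $\L_p^p(x,z)\ge\fmax(x,\epsilon)^{-\frac{p-1}{d}}d(x,z)$, i.e.\ $\mathcal{B}_{\L_p^p}(x,R)\subseteq B_d\bigl(x,\fmax(x,\epsilon)^{\frac{p-1}{d}}R\bigr)$. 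Taking $R$ to be exactly the radius in the definition of $\fmax(x,\epsilon)$ shows that ball sits inside $B_d\bigl(x,\rho_{x,\epsilon}^{\frac{p-1}{d}}\epsilon(1+\kappa\epsilon^{2})\bigr)$, while $\fmin(x,\epsilon)$ is controlled on $B_d(x,\epsilon(1+\kappa\epsilon^{2}))$; the Lipschitz bound then gives $\fmax(x,\epsilon)-\fmin(x,\epsilon)\le L\epsilon(1+\kappa\epsilon^{2})\bigl(\rho_{x,\epsilon}^{\frac{p-1}{d}}+1\bigr)$. Dividing by $\fmin(x,\epsilon)\ge\fmin$, maximizing over $x$, and using the a priori bound $\rho\le\fmax/\fmin$ (so $\rho^{(p-1)/d}$ is already finite) yields $\rho-1\le L\epsilon(1+\kappa\epsilon^{2})\bigl(\rho^{\frac{p-1}{d}}+1\bigr)/\fmin=O(\epsilon)$. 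Finally, since $s=\tfrac{p-1}{pd}<1$, concavity of $t\mapsto t^{s}$ gives $\rho^{s}-1\le s(\rho-1)\le\tfrac{(p-1)L(\rho^{(p-1)/d}+1)}{pd\,\fmin}\,\epsilon(1+\kappa\epsilon^{2})$.

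Substituting this last estimate into the display of the first paragraph, multiplying through by $d_{f,\text{euc}}^{1/p}(x,y)\le\epsilon^{1/p}\fmin^{-s}$, and sorting by powers of $\epsilon$ produces the leading term $\tfrac{(p-1)L(\rho^{(p-1)/d}+1)}{pd\,\fmin^{\,1+(p-1)/(pd)}}\,\epsilon^{1+1/p}=C_{1}\epsilon^{1+1/p}$; the next term, $\tfrac{\rho^{s}\kappa}{p\,\fmin^{(p-1)/(pd)}}\epsilon^{2+1/p}$, equals $C_{2}\epsilon^{2+1/p}+O(\epsilon^{3+1/p})$ because $\rho^{s}=1+O(\epsilon)$; and all remaining pieces (the $\kappa\epsilon^{2}$ factor inside the $\rho^{s}-1$ bound, the $O(\epsilon^{4})$ remainder, etc.) are $O(\epsilon^{3+1/p})$. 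This gives exactly the claimed inequality, valid once $\epsilon$ is small enough that, say, $\kappa\epsilon^{2}<1$. I expect the second paragraph to be the main obstacle: the estimate for $\rho-1$ is mildly self-referential, since the radius of the ball defining $\fmax(x,\epsilon)$ itself carries the factor $\rho_{x,\epsilon}^{(p-1)/d}$, so one must first invoke the crude bound $\rho\le\fmax/\fmin$ and only then extract $\rho=1+O(\epsilon)$ with a sharp constant; the remaining work is routine Taylor bookkeeping.
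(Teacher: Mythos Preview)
Your proposal is correct and follows essentially the same route as the paper: raise Lemma~\ref{lem:local_equivalence} to the $1/p$ power, use the Lipschitz hypothesis to show $\rho_{x,\epsilon}=1+O(\epsilon)$ via the containment $\mathcal{B}_{\L_p^p}(x,R)\subseteq B_d(x,\rho_{x,\epsilon}^{(p-1)/d}R)$, and then collect powers of $\epsilon$. The only differences are cosmetic---you invoke concavity of $t\mapsto t^{s}$ where the paper Taylor-expands $(1+C\epsilon)^{s}$, and you bound the final difference against $d_{f,\text{euc}}^{1/p}$ whereas the paper bounds it against $\L_p$---and your explicit remark about the self-referential nature of the $\rho-1$ estimate (needing the crude bound $\rho\le\fmax/\fmin$ first) is in fact cleaner than what the paper writes.
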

\begin{proof}
	We first show that $\rho_{x,\epsilon}$ is close to 1. Let $y_1\in B_{\L_p^p}(x,\epsilon(1+\kappa\epsilon^2)/\fmin(x,\epsilon)^{\frac{p-1}{d}}) $ satisfy $f(y_1) = \fmax(x,\epsilon)$ and $y_2 \in B_{\D}(x,\epsilon(1+\kappa\epsilon^2))$ satisfy $f(y_2) = \fmin(x,\epsilon)$ (since these sets are compact, these points must exist). Then by the Lipschitz condition:
	\begin{align*}
	|\rho_{x,\epsilon}-1| &= \frac{|f(y_1) - f(y_2)|}{f(y_2)} 
	\leq \frac{\Lip \D(y_1,y_2)}{f(y_2)} 
	\leq \frac{\Lip \D(x,y_1)+\Lip\D(x,y_2)}{f(y_2)} \, .
	\end{align*}
	Let $\gamma_2(t)$ be a path achieving $\L_p^p(x,y_1)$. Note that
	\begin{align*}
	\frac{\D(x,y_1)}{\fmax(x,\epsilon)^{\frac{p-1}{d}}} &\leq \int_0^1 \frac{1}{f(\gamma_2(t))^{\frac{p-1}{d}}} |\gamma_2'(t)|\ dt = \L_p^p(x,y_1) \leq \frac{\epsilon(1+\kappa\epsilon^2)}{\fmin(x,\epsilon)^{\frac{p-1}{d}}}
	\end{align*}  
	so that $\D(x,y_1) \leq \rho_{x,\epsilon}^{\frac{p-1}{d}} \epsilon(1+\kappa\epsilon^2), \quad \D(x,y_2) \leq \epsilon(1+\kappa\epsilon^2).$
	We thus obtain
	\begin{align}
	\label{equ:rho_close_to_one}
	\rho_{x,\epsilon}&\leq 1+\Lip\left(\frac{\rho_{x,\epsilon}^{\frac{p-1}{d}}+1}{\fmin(x,\epsilon)}\right) \epsilon(1+\kappa\epsilon^2) \, .
	\end{align}
	Letting $C_{x,\epsilon} = \Lip(\rho_{x,\epsilon}^{\frac{p-1}{d}}+1)/\fmin(x,\epsilon)$, Taylor expanding around $\epsilon=0$ and (\ref{equ:rho_close_to_one}) give
	$\rho_{x,\epsilon}^{\frac{p-1}{pd}} \leq (1+C_{x,\epsilon}\epsilon(1+\kappa\epsilon^2))^{\frac{p-1}{pd}} = 1 +C_{x,\epsilon}\frac{(p-1)}{pd}\epsilon + O(\epsilon^3)
	$.
	Applying Lemma \ref{lem:local_equivalence} yields $(\rho_{x,\epsilon})^{-\frac{p-1}{pd}} \D_{f,\text{Euc}}^{1/p}(x,y) \leq \L_p(x,y) \leq (\rho_{x,\epsilon})^{\frac{p-1}{pd}} (1+\kappa \epsilon^2)^{\frac{1}{p}}\D_{f,\text{Euc}}^{1/p}(x,y)$, which gives
	\begin{align*}
	&\frac{\D_{f,\text{Euc}}^{1/p}(x,y)}{\left(1 +C_{x,\epsilon}\frac{(p-1)}{pd}\epsilon + O(\epsilon^3)\right)} \leq \L_p(x,y) \leq \left(1 +C_{x,\epsilon}\frac{(p-1)}{pd}\epsilon +\frac{\kappa}{p} \epsilon^2+O(\epsilon^3)\right)\D_{f,\text{Euc}}^{1/p}(x,y).
	\end{align*}
	Rewriting the above yields:\begin{small} 
	\begin{align*}
	\left(1 -C_{x,\epsilon}\frac{(p-1)}{pd}\epsilon -\frac{\kappa}{p} \epsilon^2+O(\epsilon^3)\right)\L_p(x,y) &\leq \D_{f,\text{Euc}}^{1/p}(x,y) \leq \L_p(x,y)\left(1 +C_{x,\epsilon}\frac{(p-1)}{pd}\epsilon + O(\epsilon^3)\right).
	\end{align*}
	\end{small}
	We thus obtain
	\begin{align*}
	\left| \L_p(x,y) - \D_{f,\text{Euc}}^{1/p}(x,y)\right| &\leq \left(C_{x,\epsilon}\frac{(p-1)}{pd}\epsilon +\frac{\kappa}{p} \epsilon^2+O(\epsilon^3)\right)\L_p(x,y)\\
	&\leq \left(C_{x,\epsilon}\frac{(p-1)}{pd}\epsilon +\frac{\kappa}{p} \epsilon^2+O(\epsilon^3)\right)\frac{\epsilon^{\frac{1}{p}}(1+\kappa\epsilon^2)^{\frac{1}{p}}}{\fmin(x,\epsilon)^{\frac{p-1}{pd}}} \\
	&= \left(\frac{C_{x,\epsilon}}{\fmin(x,\epsilon)^{\frac{p-1}{pd}}}\frac{(p-1)}{pd}\epsilon^{1+{\frac{1}{p}}} +\frac{\kappa}{p\fmin(x,\epsilon)^{\frac{p-1}{pd}}} \epsilon^{2+{\frac{1}{p}}}+O(\epsilon^{3+{\frac{1}{p}}})\right) \, .
	\end{align*}
\end{proof}	

Note the coefficient $C_1$ increases exponentially in $p$; thus the equivalence between $\L_p$ and $\D_{f,\text{Euc}}^{1/p}$ is weaker for large $p$. We also emphasize that in a Euclidean ball of radius $\epsilon$, the metric $\L_p$ scales like $\epsilon^{\frac{1}{p}}$; Theorem \ref{thm:metric_difference_euc} thus guarantees that the relative error of approximating $\L_p$ with $\D_{f,\text{Euc}}^{1/p}$ is $O(\epsilon)$. 

When $\L_p$ is locally well-approximated by $\D_{f,\text{Euc}}^{1/p}$, the kernels constructed from these two metrics are also locally similar. The following Corollary leverages the error term in Theorem \ref{thm:metric_difference_euc} to make this precise for Gaussian kernels. It is a direct consequence of Theorem \ref{thm:metric_difference_euc} and Taylor expanding the Gaussian kernel, and its proof is given in Appendix~\ref{app:Proof_of_Cor}. Let $h_a(x)=\exp(-x^{2a})$ so that $h_1\left(\frac{\delta(\cdot,\cdot)}{\epsilon}\right)$ is the Gaussian kernel with metric $\delta(\cdot,\cdot)$ and scaling parameter $\epsilon>0$.  Note $h_1\left(\frac{\L_p}{\epsilon^{1/p}}\right) = h_{\frac{1}{p}}\left(\frac{\L_p^p}{\epsilon}\right)$.

\begin{cor}
	\label{cor:kernel_equivalence}
	Under the assumptions and notation of Theorem \ref{thm:metric_difference_euc}, for $\tilde{C}_i = C_i/\fmin^{\frac{p-1}{pd}}$,
	\begin{align*}
	 \frac{\left|h_{\frac{1}{p}}\left(\L_p^p(x,y)/\epsilon\right) - h_{\frac{1}{p}}\left(\D_{f,\text{Euc}}(x,y)/\epsilon\right) \right|}{h_{\frac{1}{p}}\left(\L_p^p(x,y)/\epsilon\right)} &\leq \tilde{C}_1 \epsilon + \left(\tilde{C}_2 +\frac{1}{2}\tilde{C}_1^2\right)\epsilon^2 + O(\epsilon^3) \, .
	\end{align*}
\end{cor}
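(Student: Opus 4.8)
The plan is to reduce the corollary directly to Theorem~\ref{thm:metric_difference_euc} by exploiting smoothness of the kernel $h_{1/p}$. First I would rewrite everything in terms of the auxiliary kernel $h_1(x)=\exp(-x^2)$: by the identity $h_1(\L_p/\epsilon^{1/p}) = h_{1/p}(\L_p^p/\epsilon)$ noted just above the statement (and the analogous identity with $d_{f,\text{euc}}$ in place of $\L_p^p$, using $d_{f,\text{euc}}^{1/p}/\epsilon^{1/p}$), the quantity to be bounded is
\[
\frac{\left|h_1\!\left(\L_p(x,y)/\epsilon^{1/p}\right) - h_1\!\left(d_{f,\text{euc}}^{1/p}(x,y)/\epsilon^{1/p}\right)\right|}{h_1\!\left(\L_p(x,y)/\epsilon^{1/p}\right)}.
\]
So the problem becomes: control the relative change in a Gaussian when its argument moves from $a := \L_p(x,y)/\epsilon^{1/p}$ to $b := d_{f,\text{euc}}^{1/p}(x,y)/\epsilon^{1/p}$, where by Theorem~\ref{thm:metric_difference_euc} (after dividing by $\epsilon^{1/p}$) we have $|a-b| \le \tilde C_1 \epsilon + \tilde C_2 \epsilon^2 + O(\epsilon^3)$ with $\tilde C_i = C_i/\fmin^{(p-1)/(pd)}$, and moreover both $a$ and $b$ are themselves $O(1)$ in $\epsilon$ (indeed $a,b = \|x-y\|^{1/p} \cdot (f(x)f(y))^{-(p-1)/(2pd)} \cdot (\text{bounded factor})$, which is bounded since $\|x-y\|\le\epsilon$ and $f$ is bounded below).

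Next I would estimate the ratio $h_1(b)/h_1(a) = \exp(a^2 - b^2) = \exp\big((a-b)(a+b)\big)$. Since $a+b = O(1)$ and $a - b = O(\epsilon)$, the exponent is $O(\epsilon)$, so
\[
\frac{h_1(b)}{h_1(a)} = 1 + (a-b)(a+b) + \tfrac12(a-b)^2(a+b)^2 + O(\epsilon^3),
\]
and hence the quantity of interest equals $\big|1 - h_1(b)/h_1(a)\big| = \big|(a-b)(a+b)\big| + O(\epsilon^2)$ at leading order, with an explicit second-order term. The remaining task is bookkeeping: substitute the bound on $|a-b|$ from Theorem~\ref{thm:metric_difference_euc}, and bound $(a+b)$. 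Here the natural move is to use the upper estimate $\L_p(x,y) \le (\rho_{x,\epsilon})^{(p-1)/(pd)}(1+\kappa\epsilon^2)^{1/p} d_{f,\text{euc}}^{1/p}(x,y)$ from Lemma~\ref{lem:local_equivalence}, together with $d_{f,\text{euc}}^{1/p} \le \epsilon^{1/p}/\fmin^{(p-1)/(pd)}$ (using $\|x-y\|\le\epsilon$ and $f\ge\fmin$), to conclude $a, b \le \fmin^{-(p-1)/(pd)}(1+O(\epsilon))$; this is precisely where the extra $\fmin^{-(p-1)/(pd)}$ factor converting $C_i$ to $\tilde C_i$ enters. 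Plugging in, the first-order term is $|a-b|\cdot(a+b) \le \big(\tilde C_1\epsilon + \tilde C_2 \epsilon^2\big)\cdot 2\fmin^{-(p-1)/(pd)}$... but comparing with the claimed bound $\tilde C_1\epsilon + (\tilde C_2 + \tfrac12\tilde C_1^2)\epsilon^2$, it appears the intended normalization absorbs the factor $(a+b)/2$-type constant into the definition of $\tilde C_i$ or treats $a+b$ as effectively giving the stated coefficients; I would follow the paper's convention of absorbing these $O(1)$ density constants and the factor-of-two into $\tilde C_1, \tilde C_2$ so that the first-order coefficient is exactly $\tilde C_1$ and the $\tfrac12 \tilde C_1^2 \epsilon^2$ term arises from the $\tfrac12(a-b)^2(a+b)^2$ contribution.

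The main obstacle I anticipate is not conceptual but rather getting the constants to match exactly: one must be careful that the $(a+b)$ factor, the $(1+\kappa\epsilon^2)^{1/p}$ and $(1+O(\epsilon))$ corrections from Lemma~\ref{lem:local_equivalence}, and the cross terms in expanding $\exp((a-b)(a+b))$ all collapse cleanly into the stated $\tilde C_1\epsilon + (\tilde C_2 + \tfrac12\tilde C_1^2)\epsilon^2 + O(\epsilon^3)$. In particular the $\tfrac12\tilde C_1^2\epsilon^2$ term must come from squaring the leading $\tilde C_1\epsilon$ part of $|a-b|$ (times the leading $O(1)$ value of $(a+b)^2$, normalized consistently), while the $\tilde C_2\epsilon^2$ term is the second-order part of $|a-b|$ times the same leading factor; any discrepancy in the $(a+b)$ normalization would shift these. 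I would therefore set up the expansion symbolically, track the $\epsilon^0$, $\epsilon^1$, $\epsilon^2$ coefficients of $a$, $b$, and $a+b$ separately using Theorem~\ref{thm:metric_difference_euc} and Lemma~\ref{lem:local_equivalence}, and verify the collapse; everything beyond that is routine Taylor expansion of $\exp$.
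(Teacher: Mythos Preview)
Your approach is essentially identical to the paper's: both rewrite the ratio as $|1-\exp(a^{2}-b^{2})|$ with $a=\L_p/\epsilon^{1/p}$ and $b=d_{f,\text{euc}}^{1/p}/\epsilon^{1/p}$, factor $a^{2}-b^{2}=(a-b)(a+b)$, use Theorem~\ref{thm:metric_difference_euc} for the difference, bound the sum via $\fmin$, and Taylor-expand the exponential. One bookkeeping correction: dividing Theorem~\ref{thm:metric_difference_euc} by $\epsilon^{1/p}$ gives $|a-b|\le C_1\epsilon+C_2\epsilon^{2}+O(\epsilon^{3})$ with the \emph{un}-tilded constants---the extra $\fmin^{-(p-1)/(pd)}$ that converts $C_i$ to $\tilde C_i$ enters only through the bound on $a+b$ (as you correctly note later), and the paper simply writes $\L_p+d_{f,\text{euc}}^{1/p}\le \epsilon^{1/p}(1+O(\epsilon^{2}))/\fmin^{(p-1)/(pd)}$, absorbing the factor of~$2$ you were worried about into the stated constants.
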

When $p-1$ is not too large relative to $d$, 
a kernel constructed with $\L_p$ is locally well-approximated by a kernel constructed with $\D_{f,\text{Euc}}^{1/p}$.  Thus, in a Euclidean ball of radius $\epsilon$, we may think of the Gaussian $\L_p$ kernel as:
\begin{align*}
h_1\left(\frac{\L_p(x_i,x_j)}{\epsilon^{1/p}}\right)  &
\approx h_\frac{1}{p}\left(\frac{\|x_i-x_j\|}{\epsilon (f(x_i)f(x_j))^{\frac{p-1}{2d}}}\right).
\end{align*}
Density plays a different role in this kernel compared with those of Section \ref{subsec:RoleDensityOtherKernels}. This kernel strengthens connections in high density regions and weakens them in low density regions. 

We note that the $\frac{1}{p}$-power in Definition \ref{defn:CPWSPD} has a large impact, in that $\L_p$-based and $\L_p^p$-based kernels have very different properties. More specifically, $h_1(\L_p^p/\epsilon)$ is a local kernel as defined in \cite{berry2016local}, so it is sufficient to analyze the kernel locally. However $h_1(\L_p/\epsilon^{1/p})$ is a non-local kernel, so that non-trivial connections between distant points are possible. The analysis in this Section thus establishes the global equivalence of  $h_1(\L_p^p/\epsilon)$ and $h_1(\D_{f,\text{Euc}}/\epsilon)$ (when $p$ is not too large relative to $d$) but only the local equivalence of $h_{\frac{1}{p}}(\L_p^p/\epsilon)$ and $h_{\frac{1}{p}}(\D_{f,\text{Euc}}/\epsilon)$.


\subsection{The Role of $p$: Examples}
This subsection illustrates the useful properties of PWSPDs and the role of $p$ on three synthetic data sets in $\mathbb{R}^{2}$: (1) \emph{Two Rings} data, consisting of two non-convex clusters that are well-separated by a low-density region; (2) \emph{Long Bottleneck} data, consisting of two isotropic clusters each with a density gap connected by a long, thin bottleneck; (3) \emph{Short Bottleneck} data, where two elongated clusters are connected by a short bottleneck.  The data sets are shown in Figures \ref{fig:TwoRings},  \ref{fig:LongBottleneck}, and \ref{fig:ShortBottleneck}, respectively.  We also show the PWSPD spectral embedding (denoted PWSPD SE) for various $p$, computed from a symmetric normalized Laplacian constructed with PWSPD. The scaling parameter $\epsilon$ for each data set is chosen as the $15^{\text{th}}$ percentile of pairwise PWSPD distances.

Different aspects of the data are emphasized in the low-dimensional PWSPD embedding as $p$ varies.  Indeed, in Figure \ref{fig:TwoRings}, we see the PWSPD embedding separates the rings for large $p$ but not for small $p$.  In Figure \ref{fig:LongBottleneck}, we see separation across the bottleneck for $p$ small, while for $p$ large there is separation with respect to the density gradients that appear in the two bells of the dumbbell.  Interestingly, separation with respect to both density and geometry is observed for $p=2$ (see Figure \ref{fig:LongBottleneck_PWSPD_p2}).  In Figure \ref{fig:ShortBottleneck}, the clusters are both elongated and lack robust density separation, but the PWSPD embedding well-separates the two clusters for moderate $p$.  In general, $p$ close to 1 emphasizes the geometry of the data, large $p$ emphasizes the density structure of the data, and moderate $p$ defines a metric balancing these two considerations. 
 
 \begin{figure}[htbp!]
	\centering
	\begin{subfigure}[t]{0.235\textwidth}
		\centering
		\includegraphics[width=\textwidth]{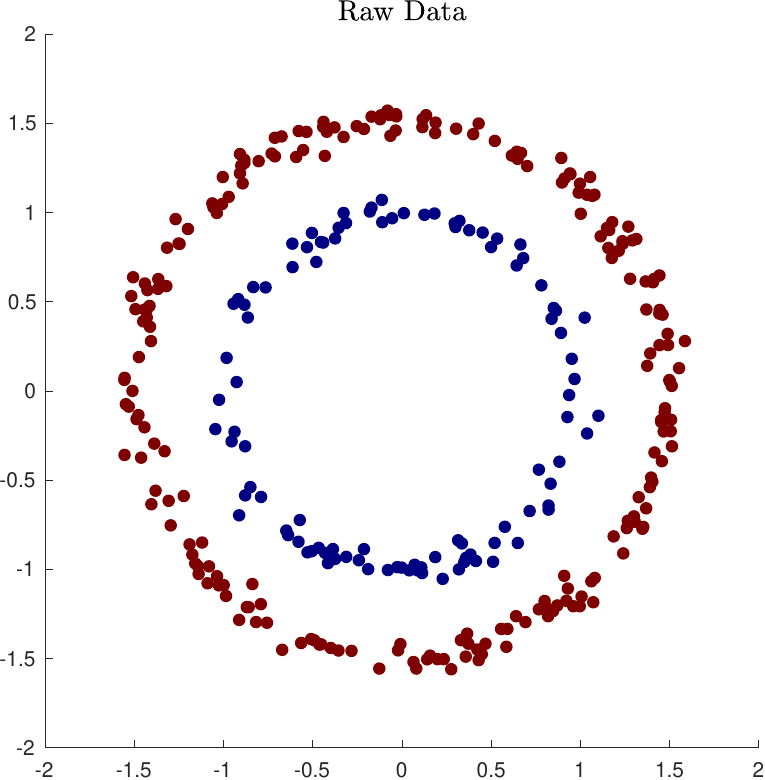}
		\caption{\tiny{Two Rings}}
	\end{subfigure}
	\hspace{.05cm}
	\begin{subfigure}[t]{0.235\textwidth}
		\centering
		\includegraphics[width=\textwidth]{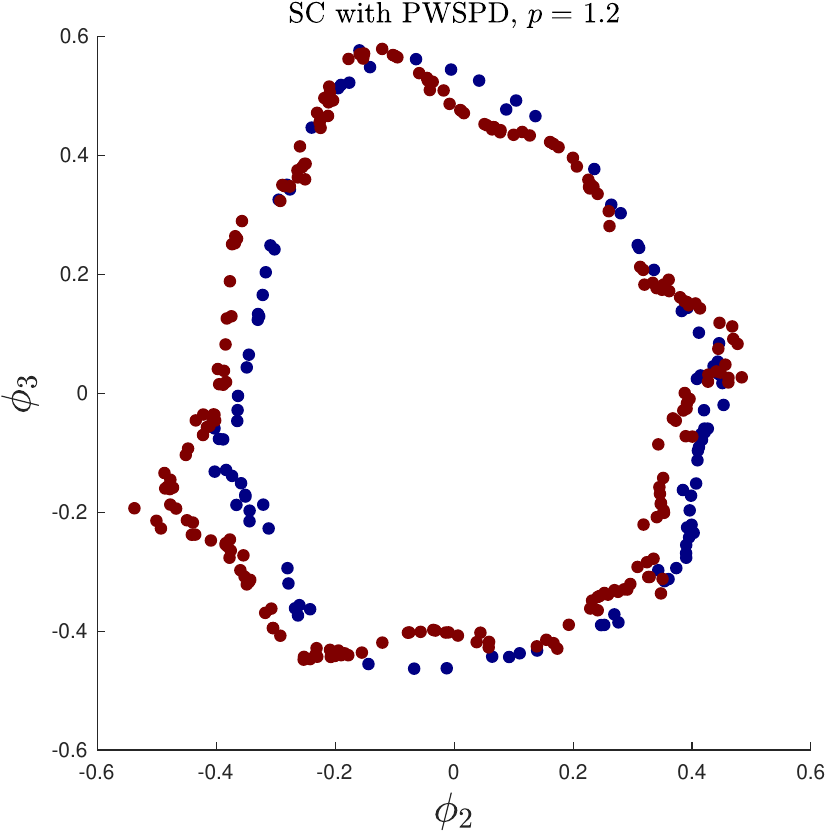}
		\caption{\tiny{PWSPD SE, $p=1.2$}}
	\end{subfigure}
	\hspace{.05cm}
	\begin{subfigure}[t]{0.235\textwidth}
		\centering
		\includegraphics[width=\textwidth]{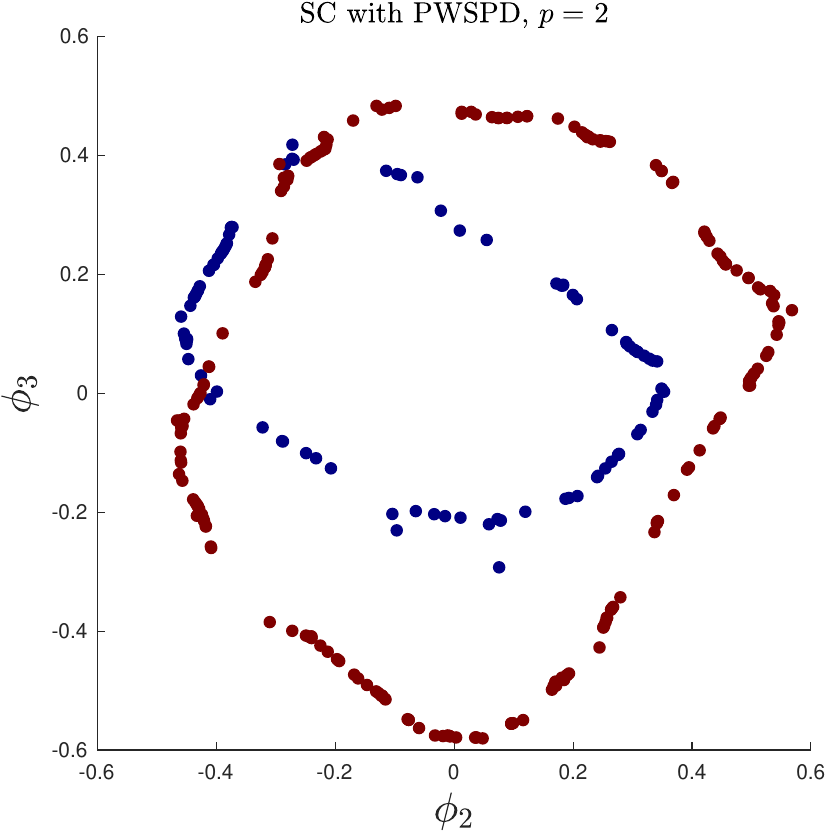}
		\caption{\tiny{PWSPD SE, $p=2$}}
	\end{subfigure}
	\hspace{.05cm}
	\begin{subfigure}[t]{0.235\textwidth}
		\centering
		\includegraphics[width=\textwidth]{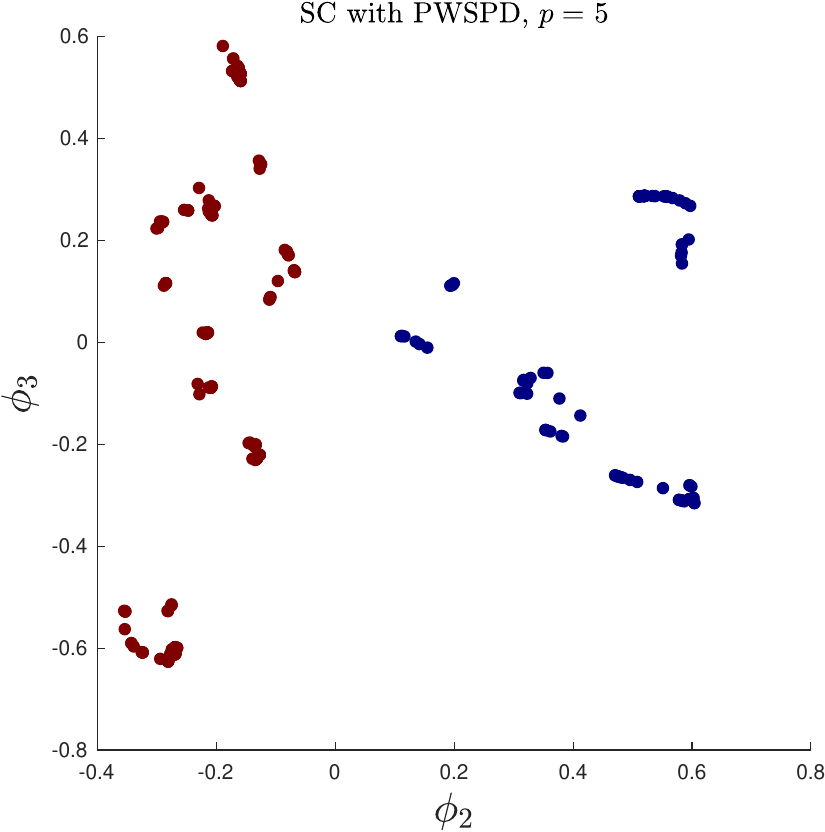}
		\caption{\tiny{PWSPD SE, $p=5$}}
		\label{fig:TwoRings_PWSPD_Embedding_p=5}
	\end{subfigure}
	\begin{subfigure}[t]{0.235\textwidth}
		\centering
		\includegraphics[width=\textwidth]{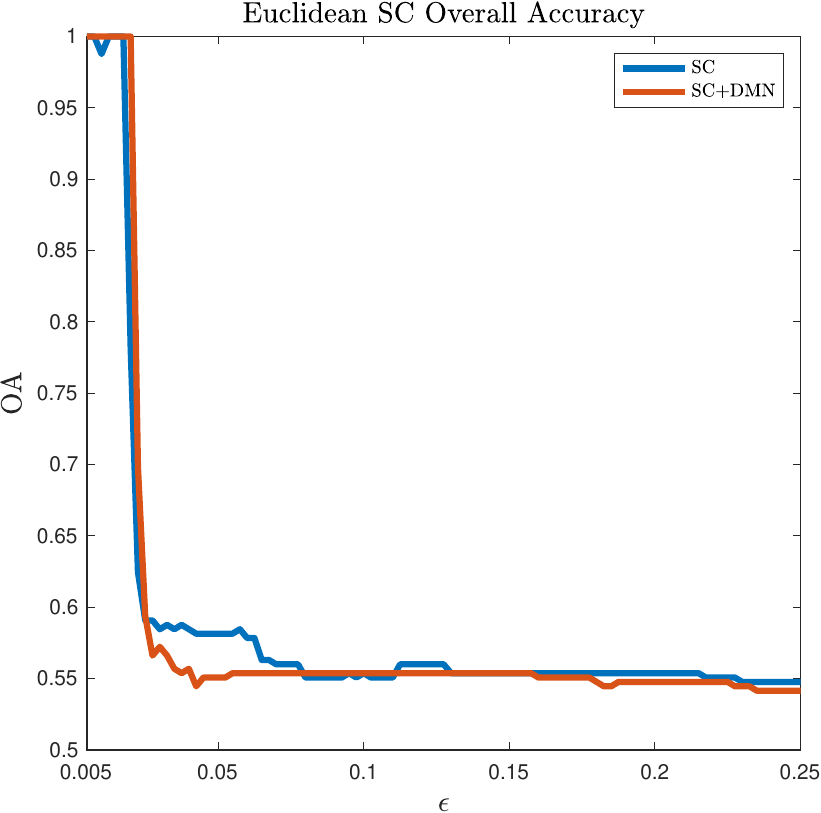}
		\subcaption{\tiny{OA, Euc. SC}}
		\label{fig:TwoRingsAccuracyPlots_OA_SC}
	\end{subfigure}
	\hspace{.05cm}
	\begin{subfigure}[t]{0.235\textwidth}
		\centering
		\includegraphics[width=1.1\textwidth]{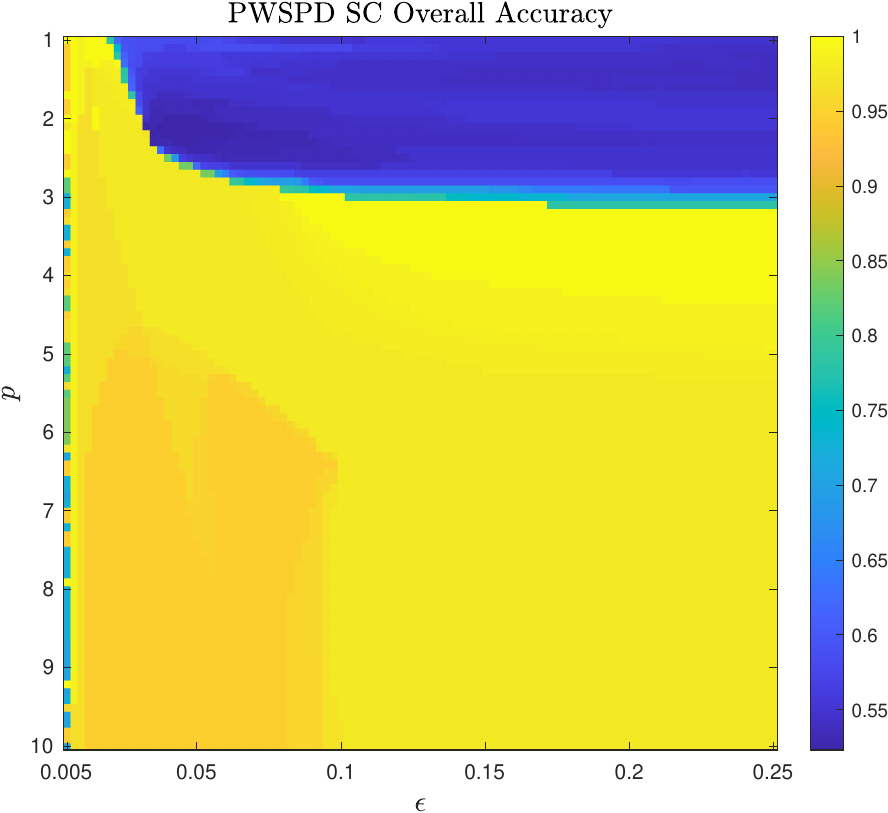}
		\subcaption{\tiny{OA, PWSPD SC}}
		\label{fig:TwoRingsAccuracyPlots_OA_SC_PWSPD}
	\end{subfigure}	
	\hspace{.05cm}
	\begin{subfigure}[t]{0.235\textwidth}
		\centering
		\includegraphics[width=\textwidth]{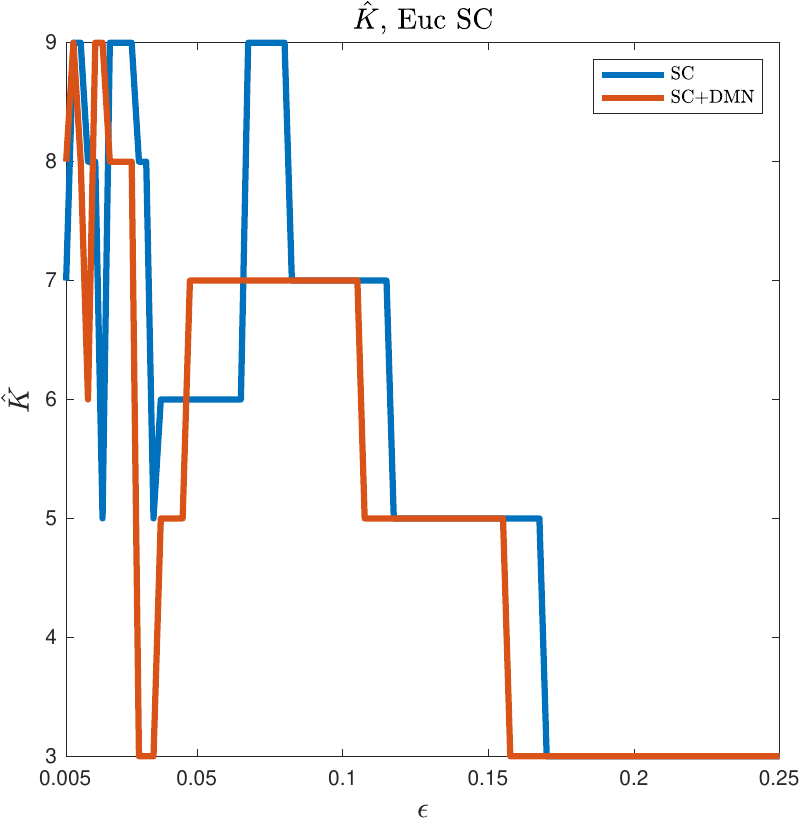}
		\subcaption{\tiny{$\hat{K}$, Euc. SC}}
		\label{fig:TwoRingsAccuracyPlots_K_SC}
	\end{subfigure}
	\hspace{.05cm}
	\begin{subfigure}[t]{0.235\textwidth}
		\centering
		\includegraphics[width=\textwidth]{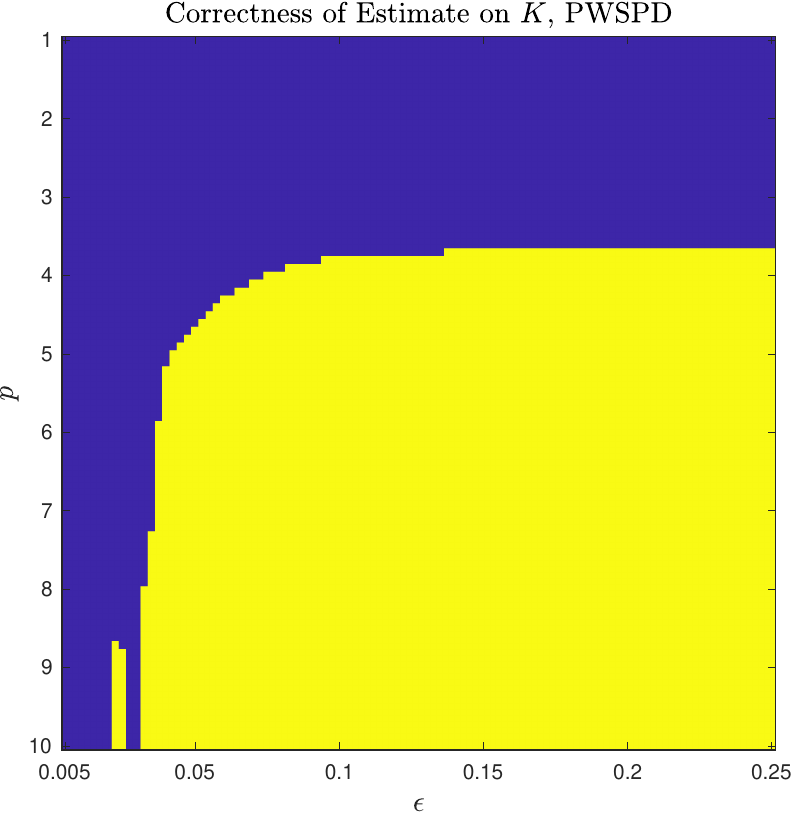}
		\subcaption{\tiny{$\hat{K}$, PWSPD SC (binarized)}}
		\label{fig:TwoRingsAccuracyPlots_K_SC_PWSPD_Binarized}
	\end{subfigure}
		\caption{\emph{Two Rings dataset}.
		Because the underlying cluster structure is density-driven, the PWSPD SE separates the clusters for large $p$ (Figure \ref{fig:TwoRings_PWSPD_Embedding_p=5}).  While taking $\epsilon$ small in Euclidean spectral clustering can allow for good clustering accuracy (see Figure \ref{fig:TwoRingsAccuracyPlots_OA_SC}), the range is narrow and does not permit accurate estimation of $K$ via the eigengap (see Figure \ref{fig:TwoRingsAccuracyPlots_K_SC}).  On the other hand, PWSPD consistently clusters well and correctly captures $K=2$ for a wide range of $(\epsilon,p)$ pairs (see Figures \ref{fig:TwoRingsAccuracyPlots_K_SC}, \ref{fig:TwoRingsAccuracyPlots_K_SC_PWSPD_Binarized}).  Generally, PWSPD allows for fully unsupervised clustering as long as $p$ is sufficiently large and $\epsilon$ not too small.}
	\label{fig:TwoRings}
\end{figure}

\begin{figure}[htbp!]
	\centering
	\begin{subfigure}[t]{0.185\textwidth}
		\centering
		\includegraphics[width=\textwidth]{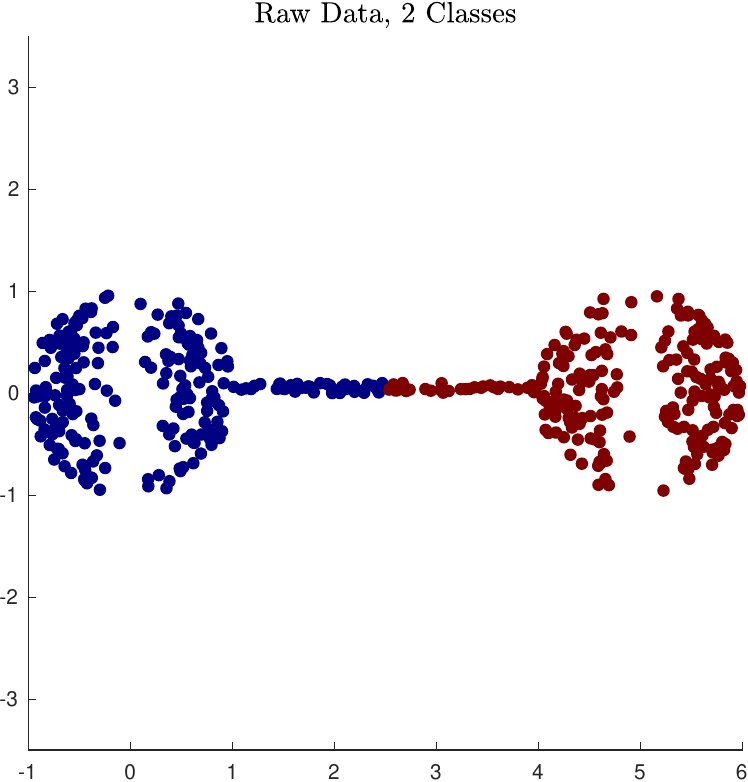}
		\subcaption{\tiny{L. Bottleneck, $K=2$}}
		\label{fig:LongBottleneck_K_2}
	\end{subfigure}
	\hspace{.05cm}
	\begin{subfigure}[t]{0.185\textwidth}
		\centering
		\includegraphics[width=\textwidth]{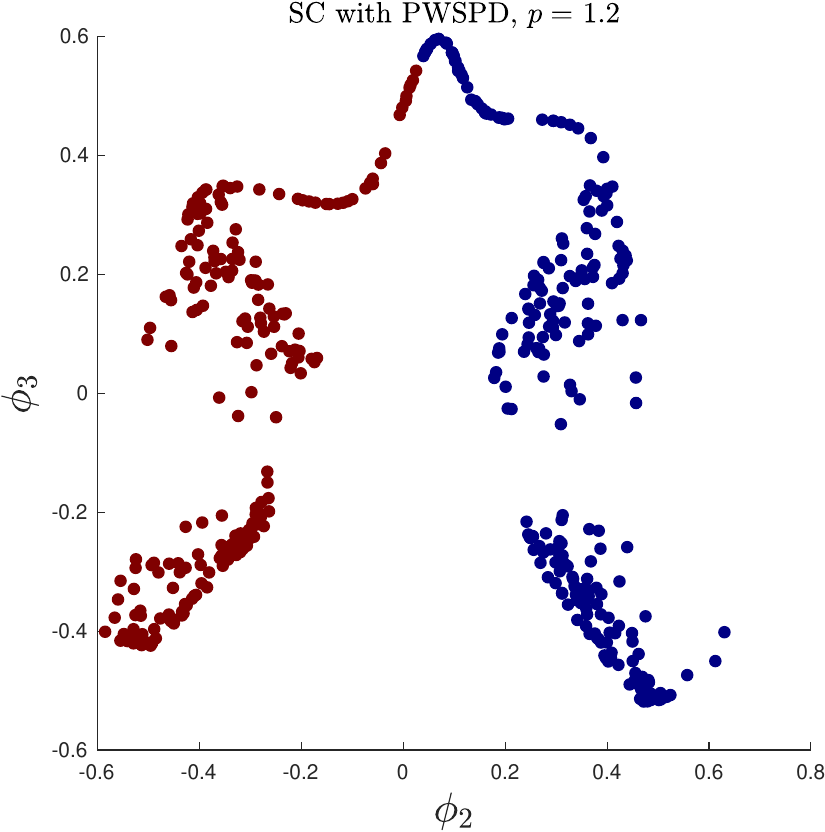}
		\subcaption{\tiny{PWSPD SE., $p=1.2$}}
		\label{fig:LongBottleneck_PWSPD_p12}
	\end{subfigure}
	\hspace{.05cm}
	\begin{subfigure}[t]{0.185\textwidth}
		\centering
		\includegraphics[width=1.1\textwidth]{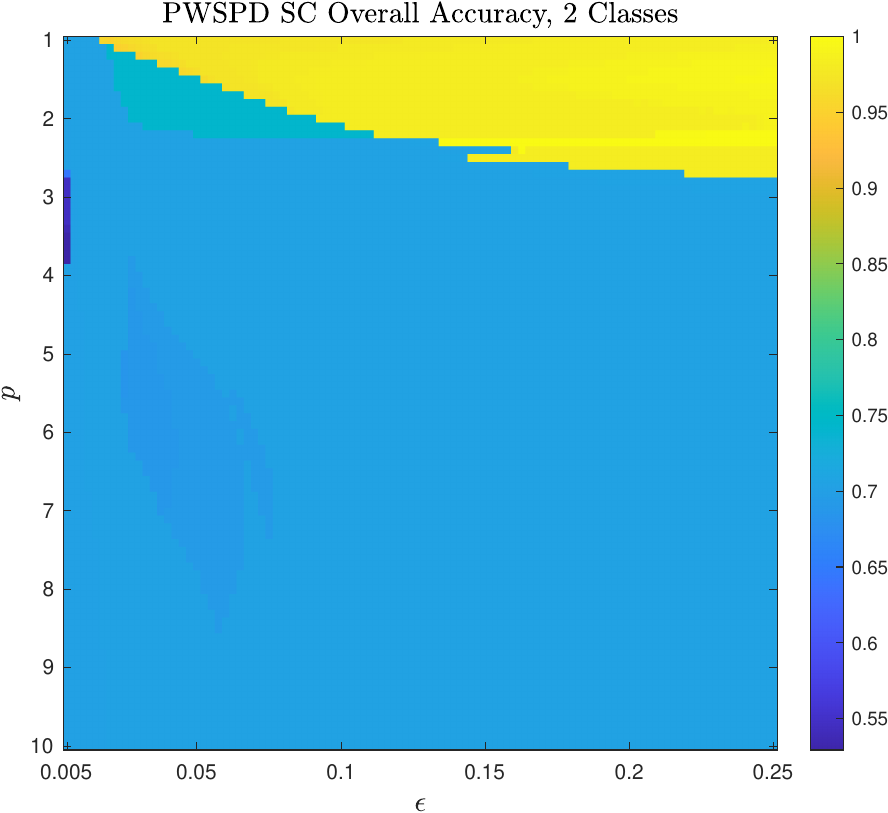}
		\caption{\tiny{OA, PWSPD SC}}
		\label{fig:LongBottleneckAccuracyPlot_SC_PWSPD_OA_2}
	\end{subfigure}
	\hspace{.05cm}
	\begin{subfigure}[t]{0.185\textwidth}
		\centering
		\includegraphics[width=\textwidth]{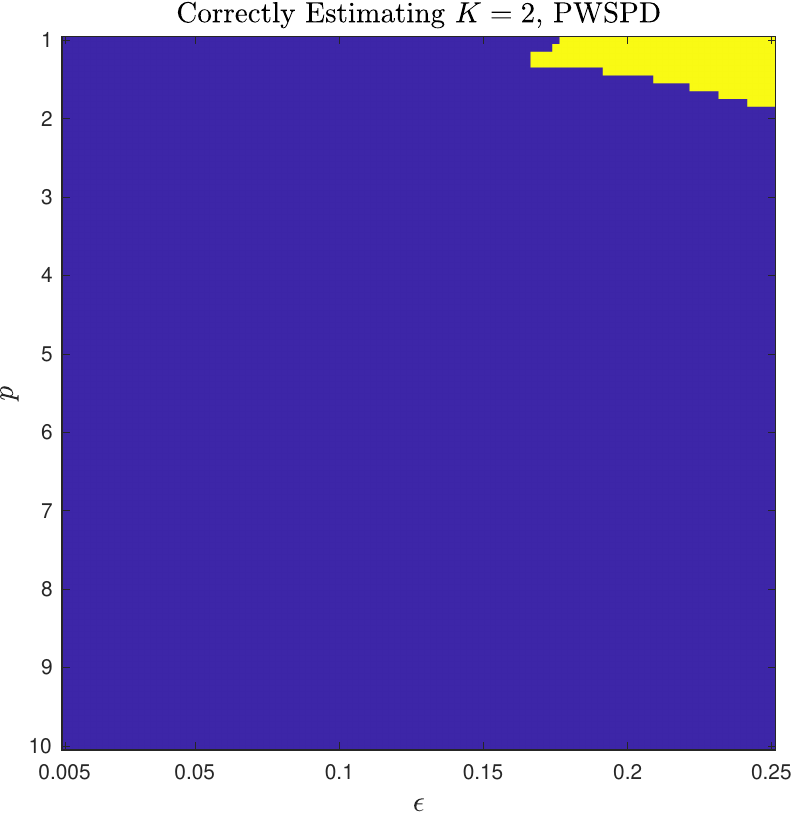}
		\caption{\tiny{$\hat{K}$, PWSPD SC}}
		\label{fig:LongBottleneckAccuracyPlot_SC_PWSPD_K_2}
	\end{subfigure}
	\hspace{.05cm}
	\begin{subfigure}[t]{0.185\textwidth}
		\centering
		\includegraphics[width=\textwidth]{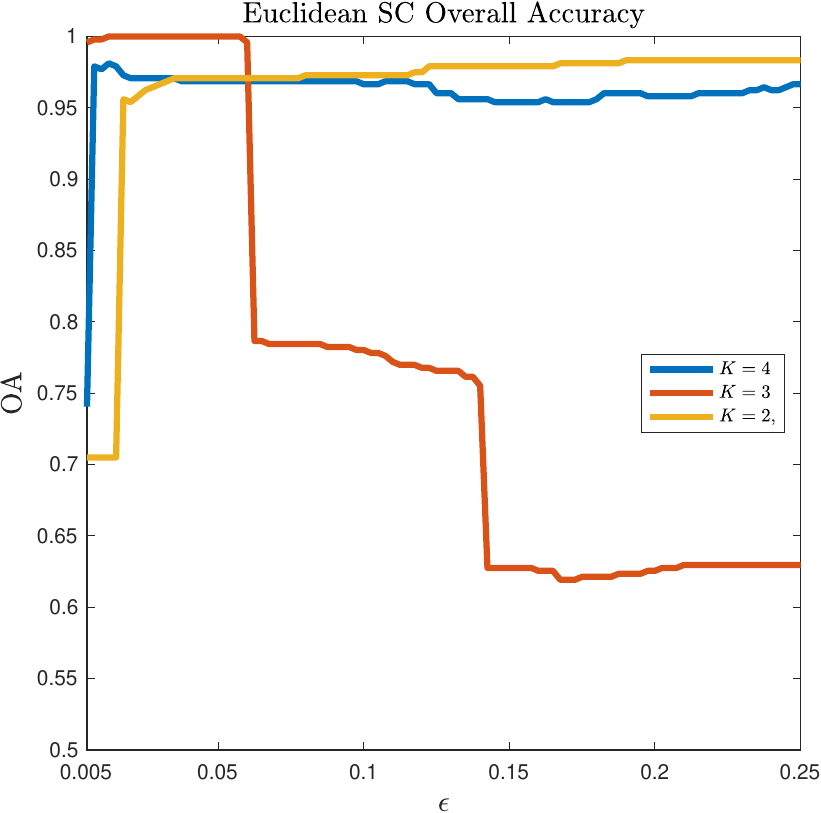}
		\caption{\tiny{OA, Euc. SC}}
		\label{fig:LongBottleneckAccuracyPlot_SC_OA}
	\end{subfigure}
	\begin{subfigure}[t]{0.185\textwidth}
		\centering
		\includegraphics[width=\textwidth]{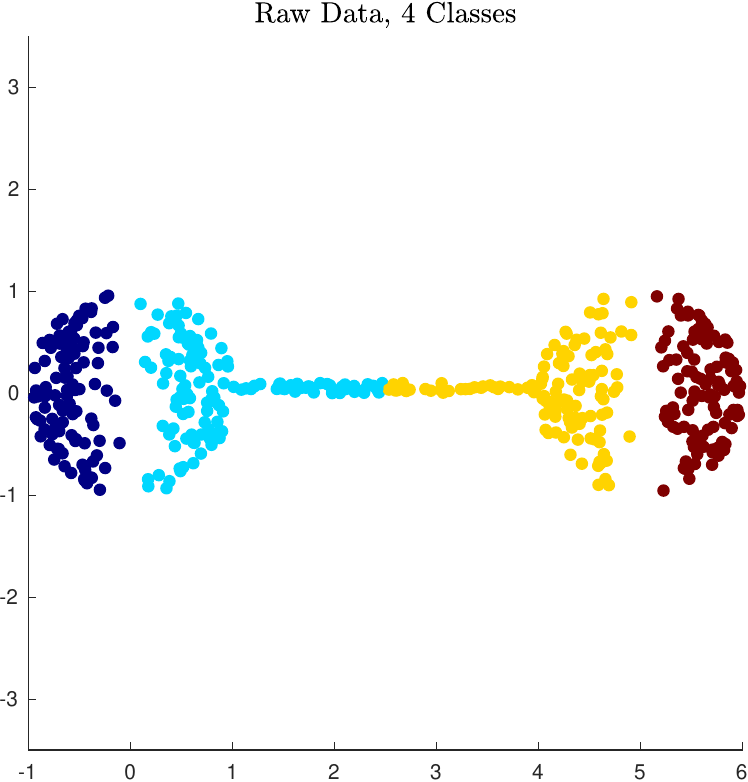}
		\subcaption{\tiny{L. Bottleneck, $K=4$}}
		\label{fig:LongBottleneck_K_4}
	\end{subfigure}
	\hspace{.05cm}
	\begin{subfigure}[t]{0.185\textwidth}
		\centering
		\includegraphics[width=\textwidth]{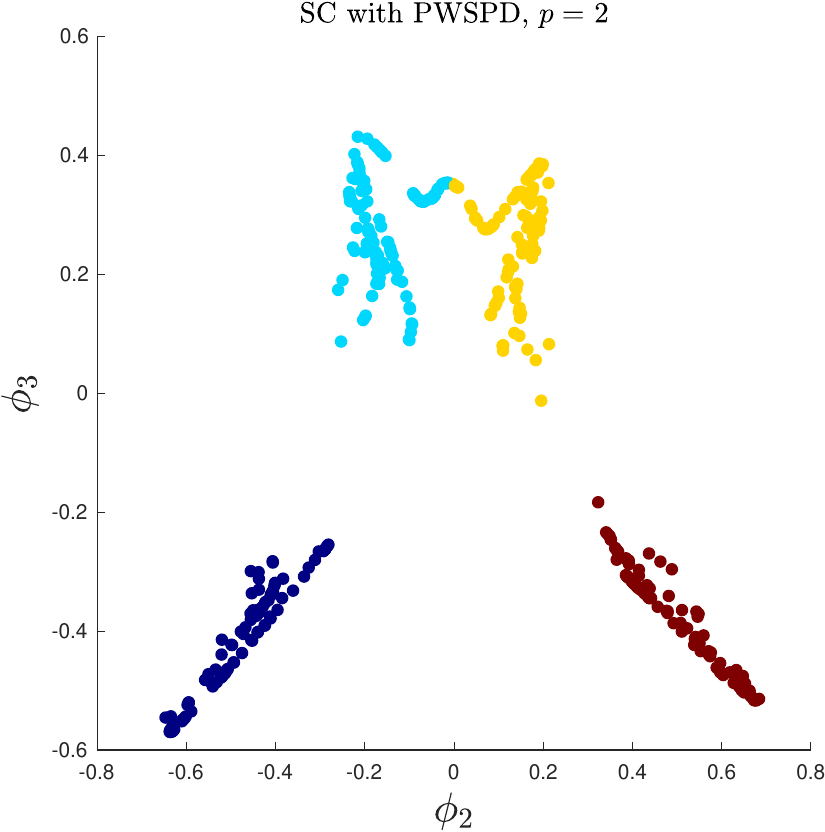}
		\subcaption{\tiny{PWSPD SE, $p=2$}}
		\label{fig:LongBottleneck_PWSPD_p2}
	\end{subfigure}
	\hspace{.05cm}
	\begin{subfigure}[t]{0.185\textwidth}
		\centering
		\includegraphics[width=1.1\textwidth]{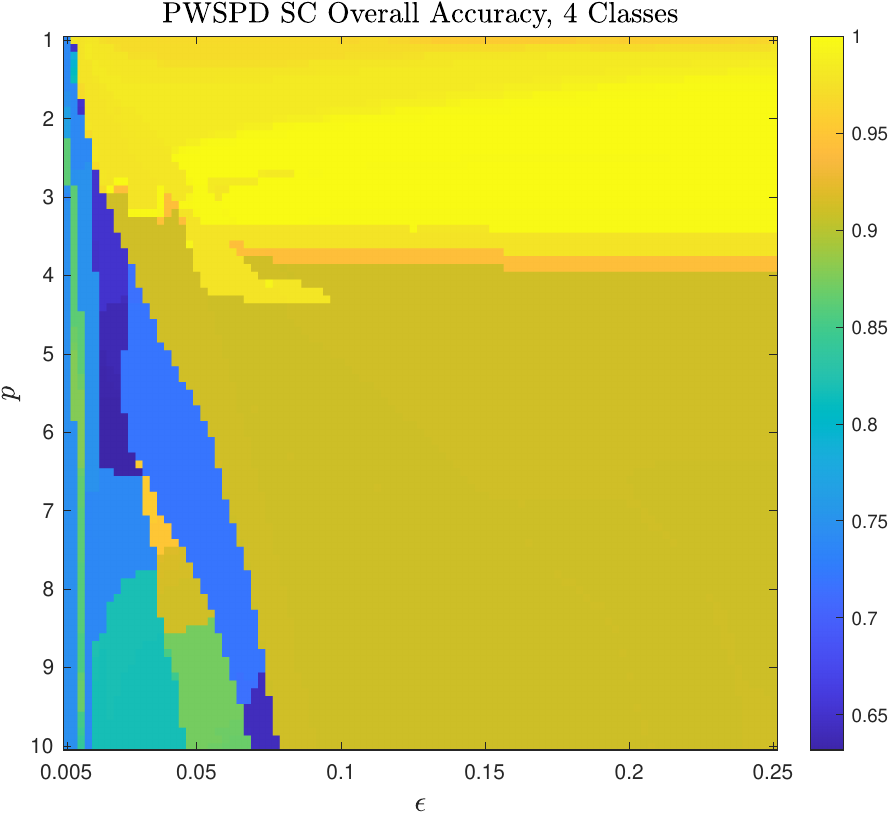}
		\caption{\tiny{OA, PWSPD SC}}
		\label{fig:LongBottleneckAccuracyPlot_SC_PWSPD_OA_4}
	\end{subfigure}
	\hspace{.05cm}
	\begin{subfigure}[t]{0.185\textwidth}
		\centering
		\includegraphics[width=\textwidth]{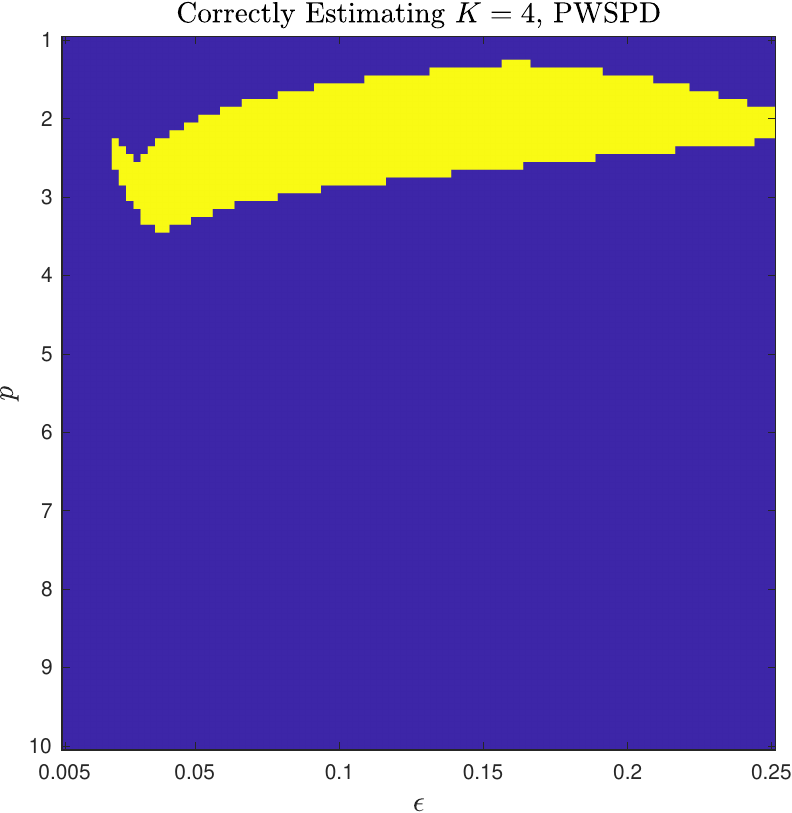}
		\caption{\tiny{$\hat{K}$, PWSPD SC}}
		\label{fig:LongBottleneckAccuracyPlot_SC_PWSPD_K_4}
	\end{subfigure}
	\hspace{.05cm}
	\begin{subfigure}[t]{0.185\textwidth}
		\centering
		\includegraphics[width=\textwidth]{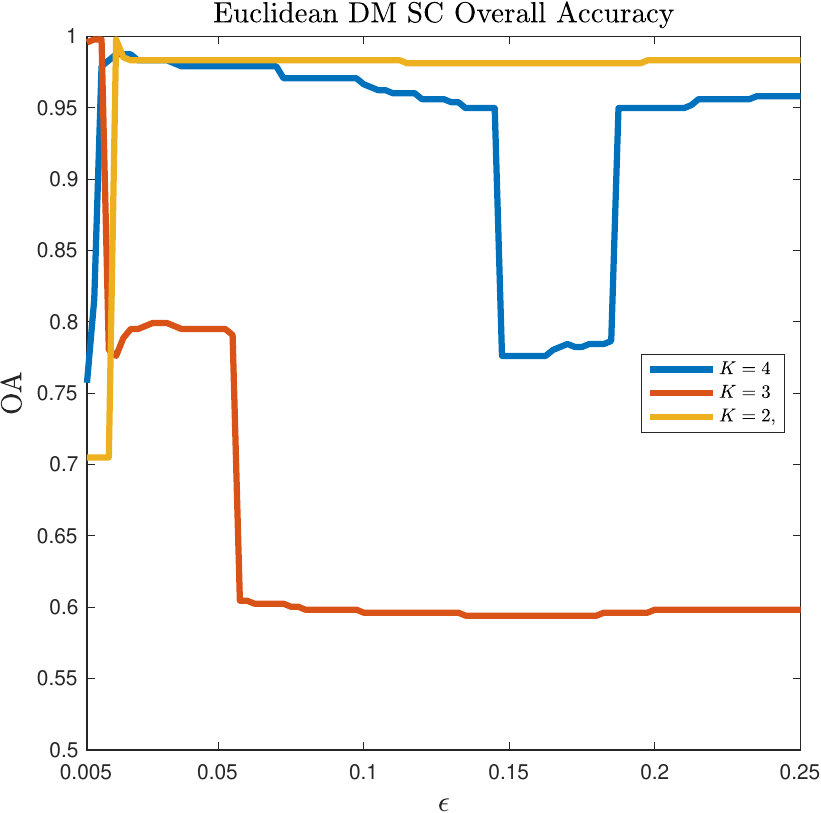}
		\caption{\tiny{OA, Euc SC+DMN}}
		\label{fig:LongBottleneckAccuracyPlot_SC_DMN_OA}
	\end{subfigure}	
	\begin{subfigure}[t]{0.185\textwidth}
		\centering
		\includegraphics[width=\textwidth]{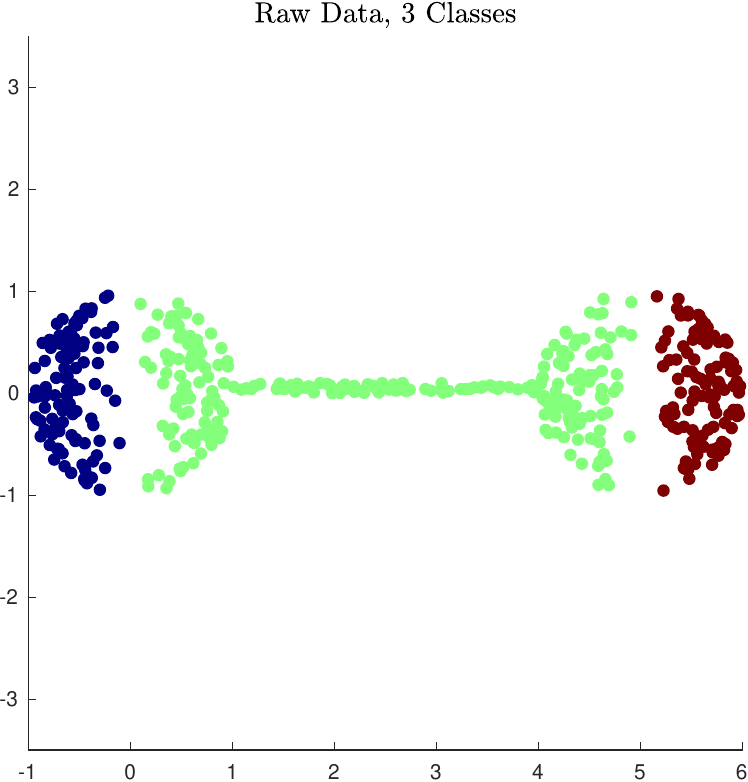}
		\subcaption{\tiny{L. Bottleneck, $K=3$}}
		\label{fig:LongBottleneck_K_3}
	\end{subfigure}
	\hspace{.05cm}
	\begin{subfigure}[t]{0.185\textwidth}
		\centering
		\includegraphics[width=\textwidth]{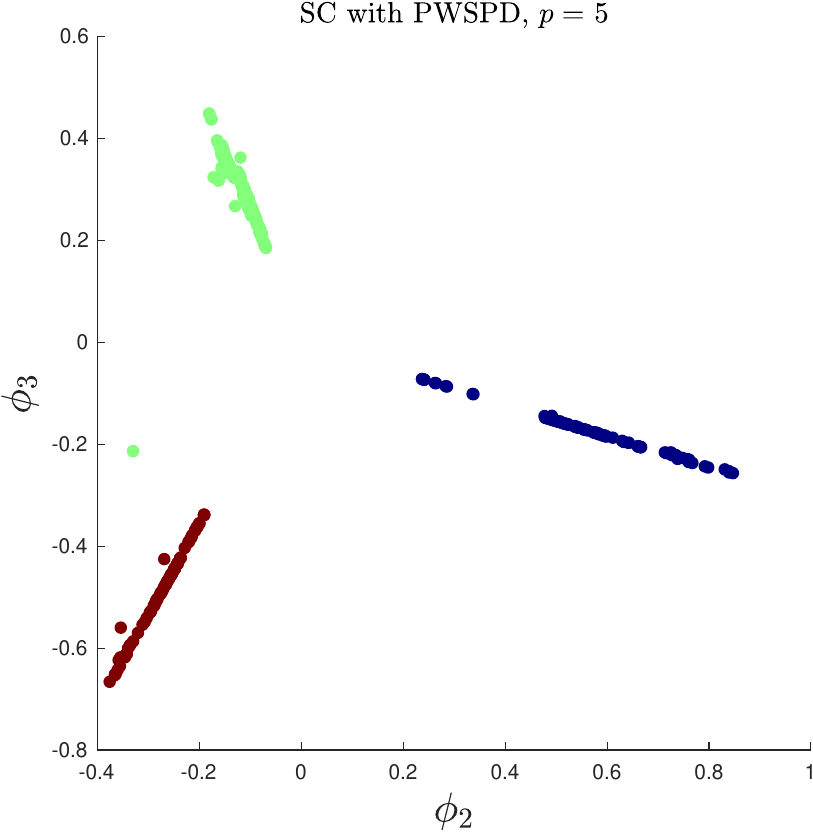}
		\subcaption{\tiny{PWSPD SE, $p=5$}}
		\label{fig:LongBottleneck_PWSPD_p5}
	\end{subfigure}
	\hspace{.05cm}
	\begin{subfigure}[t]{0.185\textwidth}
		\centering
		\includegraphics[width=1.1\textwidth]{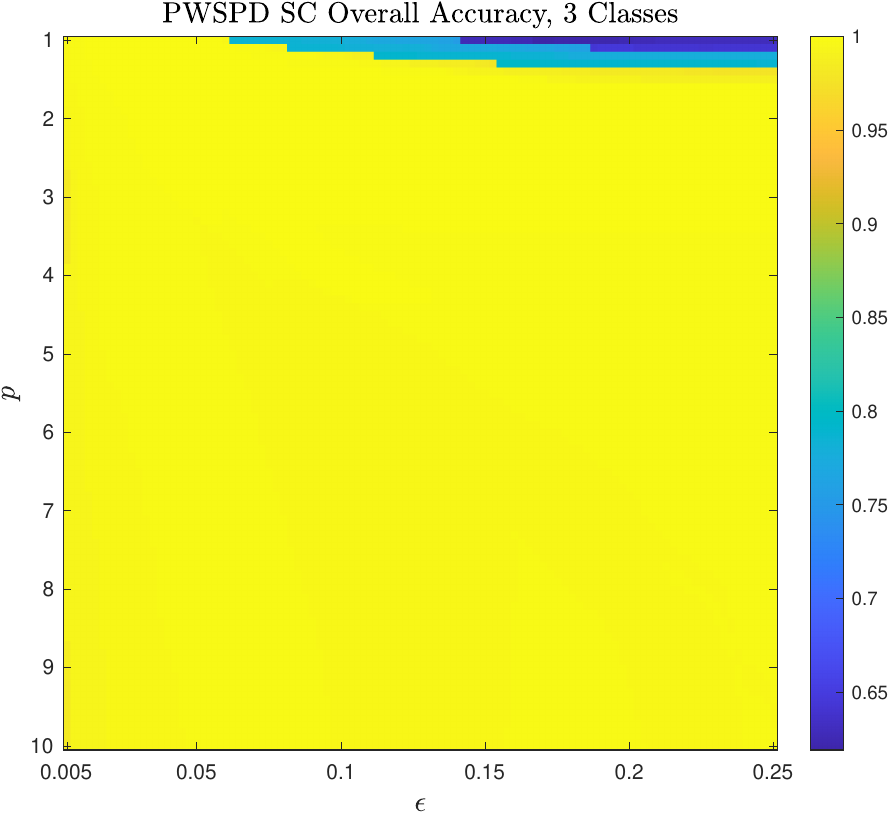}
		\caption{\tiny{OA, PWSPD SC}}
		\label{fig:LongBottleneckAccuracyPlot_SC_PWSPD_OA_3}
	\end{subfigure}
	\hspace{.05cm}
	\begin{subfigure}[t]{0.185\textwidth}
		\centering
		\includegraphics[width=\textwidth]{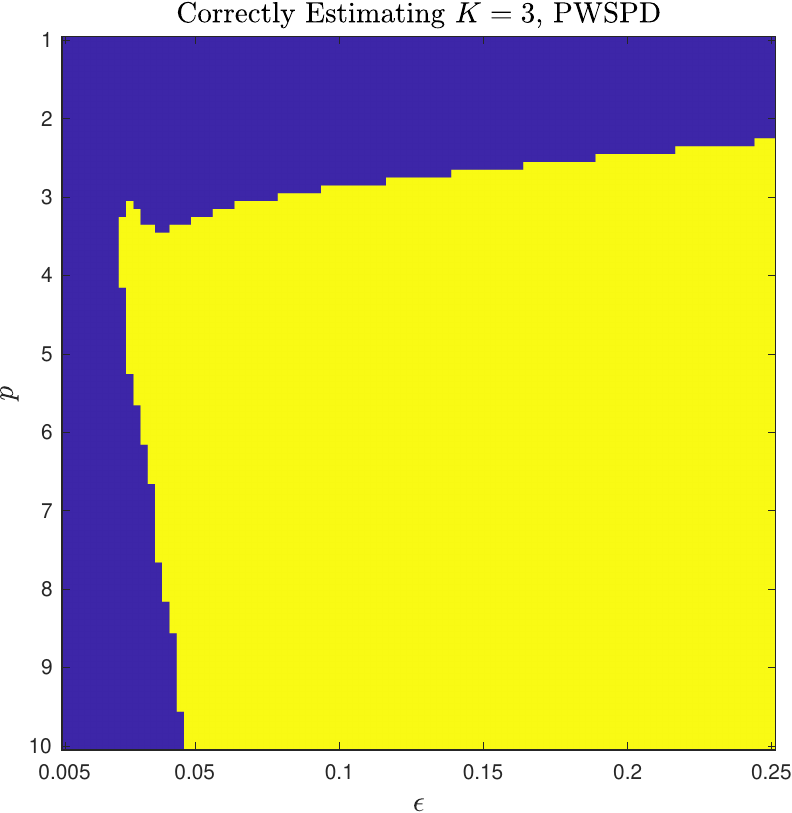}
		\caption{\tiny{$\hat{K}$, PWSPD SC}}
		\label{fig:LongBottleneckAccuracyPlot_SC_PWSPD_K_3}
	\end{subfigure}
	\hspace{.05cm}
	\begin{subfigure}[t]{0.185\textwidth}
		\centering
		\includegraphics[width=\textwidth]{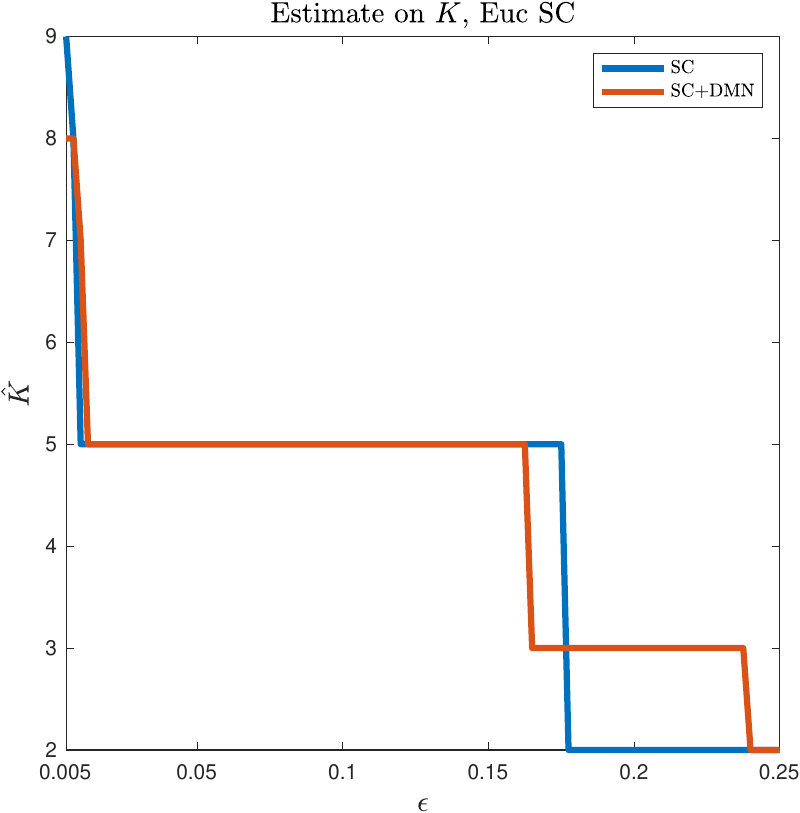}
		\caption{\tiny{$\hat{K}$, Euc. SC} }
		\label{fig:LongBottleneckAccuracyPlot_SC_K}
	\end{subfigure}
		\caption{\emph{Long Bottleneck dataset}.
		Different latent cluster structures exist in this data, driven by geometry (Figure \ref{fig:LongBottleneck_K_2}, $K=2$), density (Figure \ref{fig:LongBottleneck_K_3}, $K=3$), and a combination of geometry and density (Figure \ref{fig:LongBottleneck_K_4}, $K=4$).  When varying $p$, the PWSPD SE separates by geometry (Figure \ref{fig:LongBottleneck_PWSPD_p12}) for $p$ near 1, before separating by density for $p\gg1$ (Figure \ref{fig:LongBottleneck_PWSPD_p5}).  Given the correct choice of $\epsilon$ and a priori knowledge of $K$, any of the three natural clusterings can be learned by Euclidean SC (Figure \ref{fig:LongBottleneckAccuracyPlot_SC_OA}, \ref{fig:LongBottleneckAccuracyPlot_SC_DMN_OA}).  However, in the Euclidean SC case, correct estimation of $K$ fails to coincide with parameters that give good clustering results (Figure \ref{fig:LongBottleneckAccuracyPlot_SC_K}).  On the other hand, PWSPD SC is able to correctly estimate each of $K=2,3,4$ for some choice of $(\epsilon,p)$ parameters in the same region that such parameters yield high clustering accuracy (Figures \ref{fig:LongBottleneckAccuracyPlot_SC_PWSPD_OA_2}, \ref{fig:LongBottleneckAccuracyPlot_SC_PWSPD_K_2} for $K=2$; Figures \ref{fig:LongBottleneckAccuracyPlot_SC_PWSPD_OA_3}, \ref{fig:LongBottleneckAccuracyPlot_SC_PWSPD_K_3} for $K=3$; Figures \ref{fig:LongBottleneckAccuracyPlot_SC_PWSPD_OA_4}, \ref{fig:LongBottleneckAccuracyPlot_SC_PWSPD_K_4} for $K=4$).}
	\label{fig:LongBottleneck}
\end{figure}

\begin{figure}[h]
	\centering
	\begin{subfigure}[t]{0.235\textwidth}
		\centering
		\includegraphics[width=\textwidth]{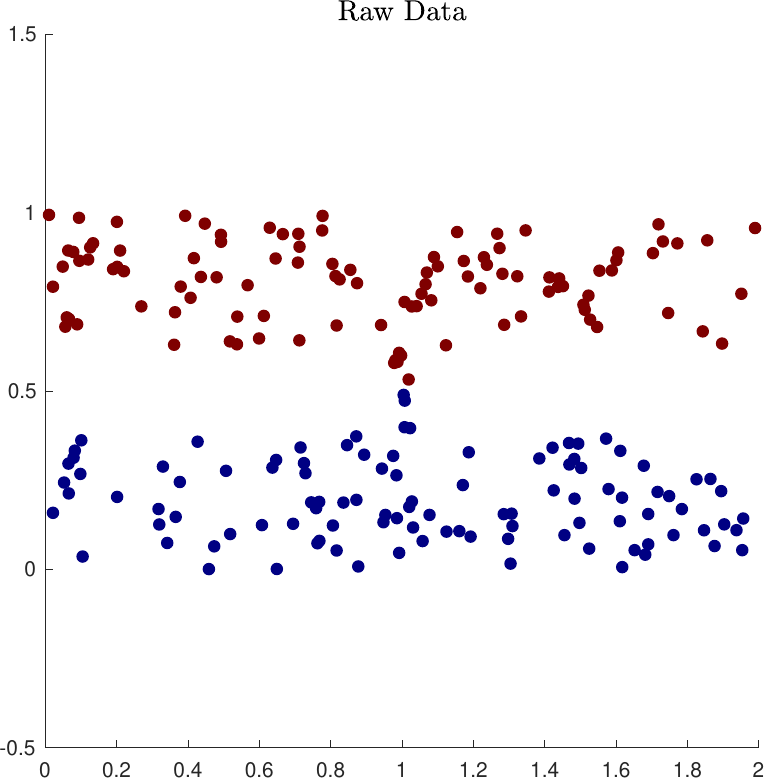}
		\subcaption{\tiny{Short Bottleneck}}
	\end{subfigure}
	\hspace{.05cm}
	\begin{subfigure}[t]{0.235\textwidth}
		\centering
		\includegraphics[width=\textwidth]{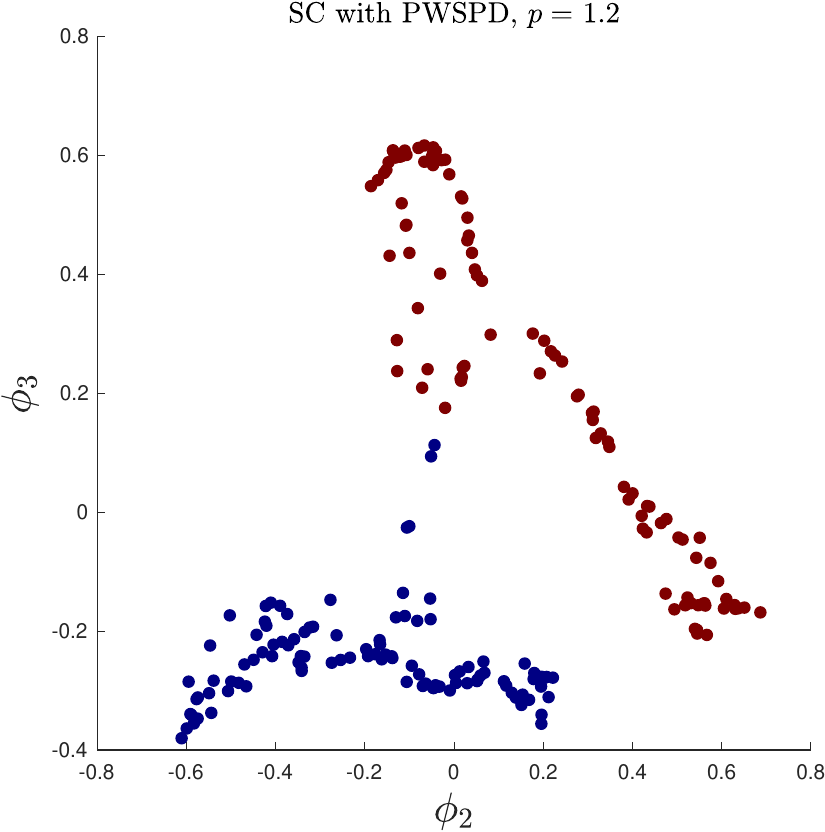}
		\caption{\tiny{PWSPD SE, $p=1.2$}}
	\end{subfigure}
	\hspace{.05cm}
	\begin{subfigure}[t]{0.235\textwidth}
		\centering
		\includegraphics[width=\textwidth]{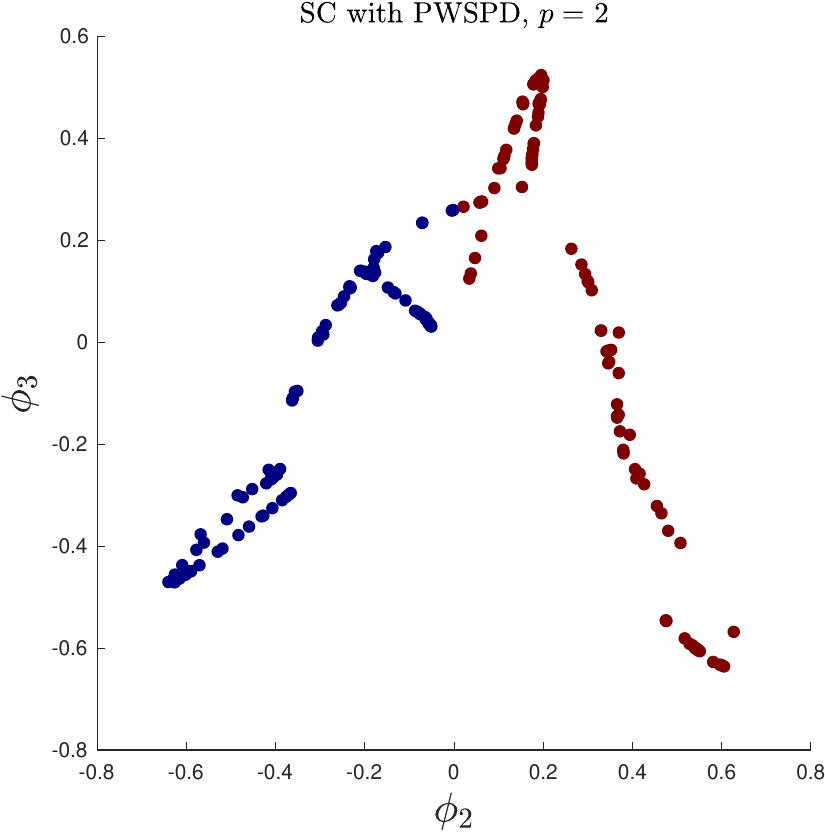}
		\caption{\tiny{PWSPD SE, $p=2$}}
		\label{fig:short_bottleneck_p2}
	\end{subfigure}
	\hspace{.05cm}
	\begin{subfigure}[t]{0.235\textwidth}
		\centering
		\includegraphics[width=\textwidth]{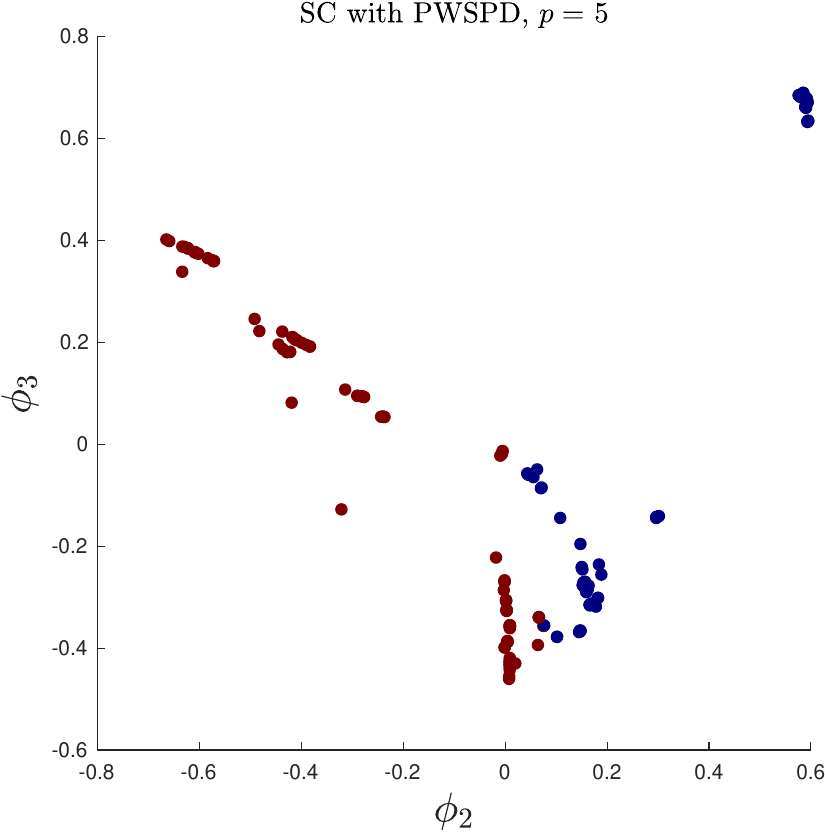}
		\caption{\tiny{PWSPD SE, $p=5$}}
	\end{subfigure}
	\begin{subfigure}[t]{0.235\textwidth}
		\centering
		\includegraphics[width=\textwidth]{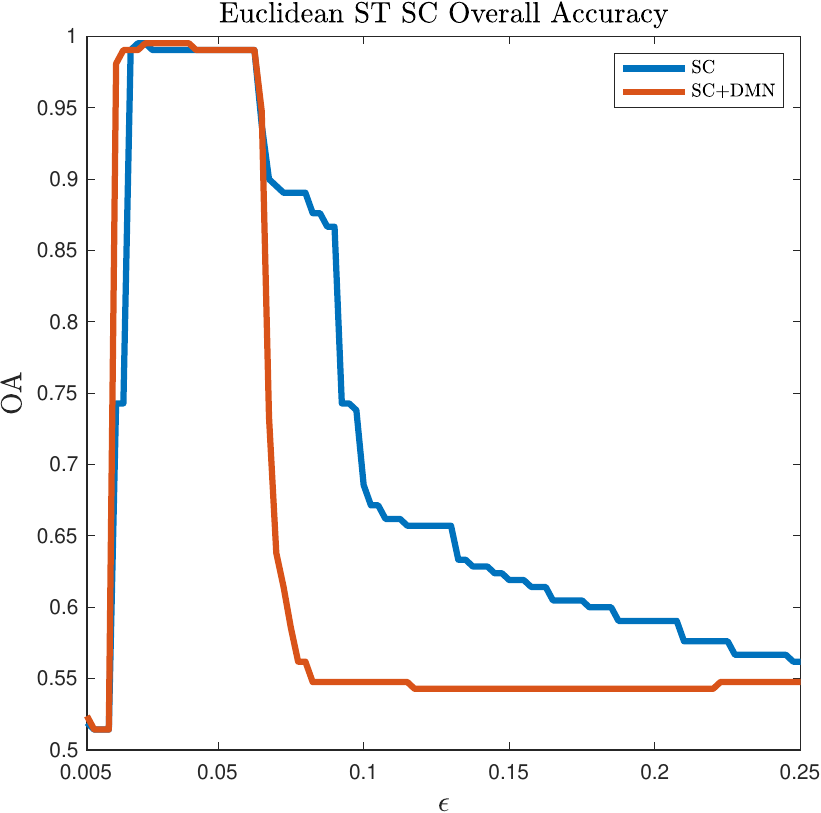}
		\subcaption{\tiny{OA, Euc. SC}}
		\label{fig:ShortBottleneckAccuracyPlots_OA_EucSC}
	\end{subfigure}
	\hspace{.05cm}
	\begin{subfigure}[t]{0.235\textwidth}
		\centering
		\includegraphics[width=1.08\textwidth]{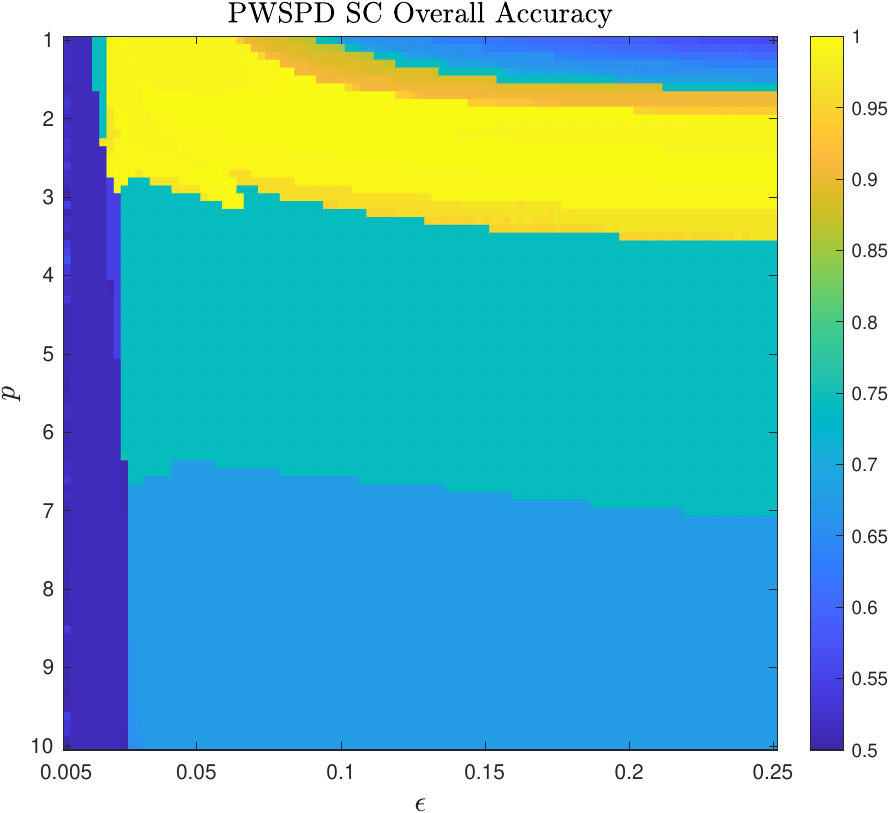}
		\subcaption{\tiny{OA, PWSPD SC}}
		\label{fig:ShortBottleneckAccuracyPlots_OA_SC_PWSPD}
	\end{subfigure}	 
	\hspace{.05cm}
	\begin{subfigure}[t]{0.235\textwidth}
		\centering
		\includegraphics[width=\textwidth]{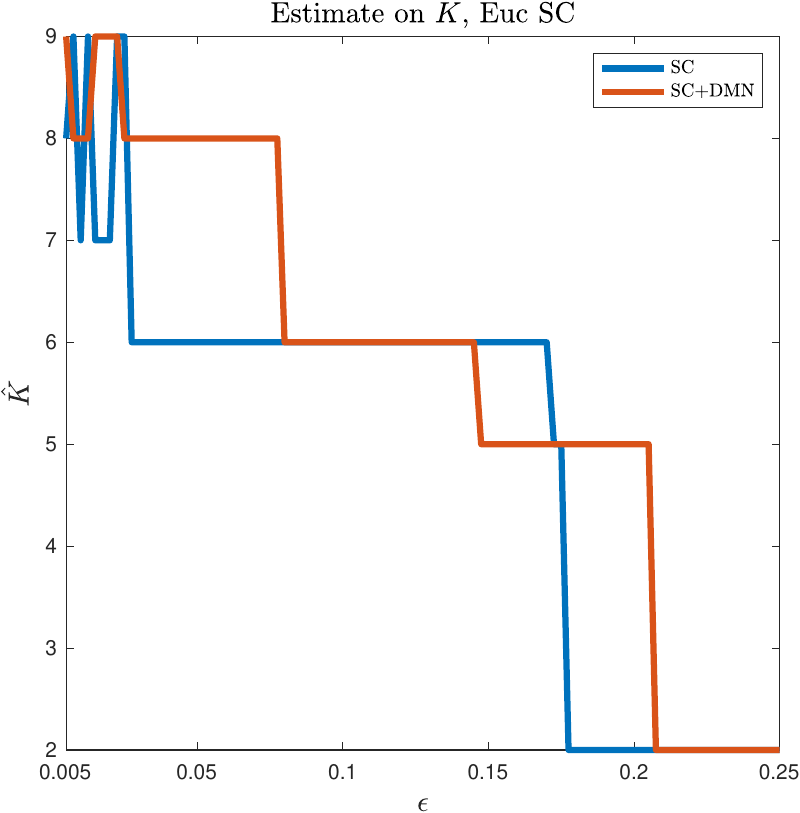}
		\subcaption{\tiny{$\hat{K}$, Euc. SC}}
		\label{fig:ShortBottleneckAccuracyPlots_K_SC}
	\end{subfigure}
	\hspace{.05cm}
	\begin{subfigure}[t]{0.235\textwidth}
		\centering
		\includegraphics[width=\textwidth]{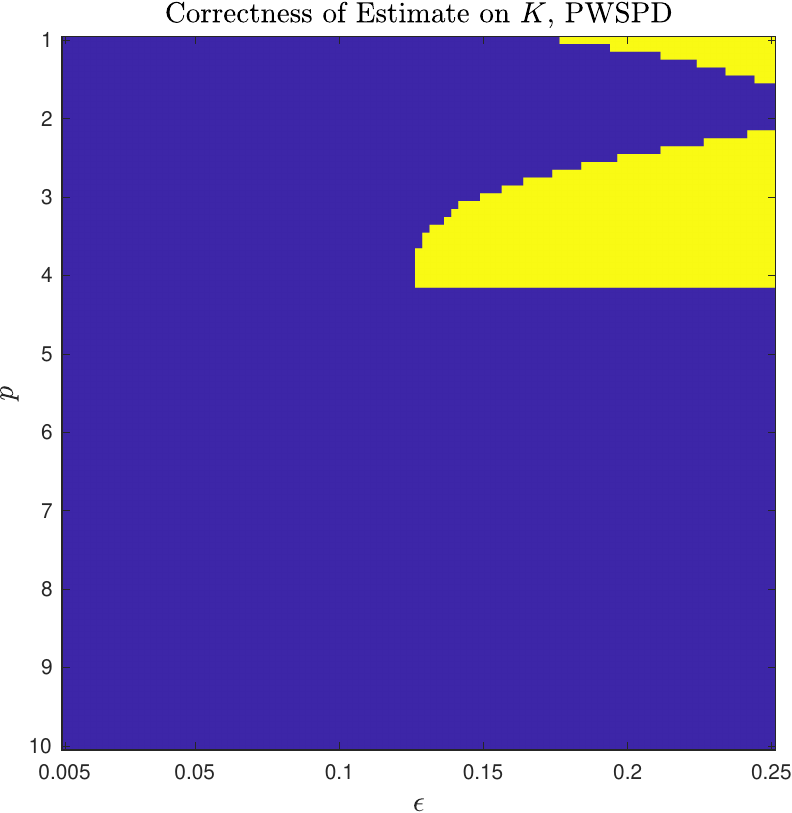}
		\subcaption{\tiny{$\hat{K}$, PWSPD SC (binarized)}}
		\label{fig:ShortBottleneckAccuracyPlots_K_SC_PWSPD_Binarized}
	\end{subfigure}
		\caption{\emph{Short Bottleneck dataset}.
		Because the underlying cluster structure is not driven entirely by geometry or density, the PWSPD SE separates the clusters for moderate $p$ (see Figure \ref{fig:short_bottleneck_p2}).  We note PWSPD is able to correctly learn $K$ and cluster accurately for $\epsilon$ somewhat large and $p$ between 2 and 3 (see Figures \ref{fig:ShortBottleneckAccuracyPlots_OA_SC_PWSPD}, \ref{fig:ShortBottleneckAccuracyPlots_K_SC_PWSPD_Binarized}), while Euclidean SC cannot simultaneously learn $K$ and cluster accurately (see Figures \ref{fig:ShortBottleneckAccuracyPlots_OA_EucSC}, \ref{fig:ShortBottleneckAccuracyPlots_K_SC}).}
	\label{fig:ShortBottleneck}
\end{figure}

\subsubsection{Comparison with Euclidean Spectral Clustering}

To evaluate how $p$ impacts the clusterability of the PWSPD spectral embedding, we consider experiments in which we run spectral clustering under various graph constructions.  We run $K$-means for a range of parameters on the spectral embedding $x_{i}\mapsto (\phi_{2}(x_{i}),\dots,\phi_{K}(x_{i}))$, where $\phi_{k}$ is the $k^{\text{th}}$ lowest frequency eigenvector of the Laplacian.  We construct the symmetric normalized Laplacian using PWSPD (denoted PWSPD SC) and also using Euclidean distances (denoted SC) and the Laplacian with diffusion maps normalization $a=1$ (denoted SC+DMN).  We vary $\epsilon$ in the SC and SC+DMN methods, and both $\epsilon$ and $p$ in the PWSPD SC method.  Results for selt-tuning SC, in which the $k$NN used to compute the local scaling parameter varies, are in Appendix~\ref{app:Additional_Clustering_Results}.  To allow for comparisons across figures, $\epsilon$ is varied across the percentiles of the pairwise distances in the underlying data, up to the $25^{th}$ percentile.  We measure two outputs of the clustering experiments:

\vspace{2pt}

\begin{enumerate}[(i)]
\item  The \emph{overall accuracy (OA)}, namely the proportion of data points correctly labeled after alignment when $K$ is known a priori.  For $K=2$, similar results were observed when thresholding $\phi_{2}$ at 0 instead of running $K$-means; see Appendix~\ref{app:Additional_Clustering_Results}.  

\item The \emph{eigengap estimate} of the number of latent clusters: $\hat{K}=\argmax_{k\ge 2}\lambda_{k+1}-\lambda_{k}$, where $0=\lambda_{1}\le\lambda_{2}\le\dots\le \lambda_{n}$ are the eigenvalues of the corresponding graph Laplacian.  We note that experiments estimating $K$ by considering the ratio of consecutive eigenvalues were also performed, with similar results.  In the case of PWSPD SC, we plot heatmaps of where $K$ is correctly estimated, with yellow corresponding to success ($\hat{K}=K$) and blue corresponding to failure ($\hat{K}\neq K$).  

\end{enumerate}

\vspace{2pt}

The results in terms of OA and $\hat{K}$ as a function of $\epsilon $ and $p$ are in Figures \ref{fig:TwoRings}, \ref{fig:LongBottleneck}, \ref{fig:ShortBottleneck}. 
We see that when density separates the data clearly, as in the Two Rings data, PWSPD SC with large $p$ gives accurate clustering results, while small $p$ may fail.  In this dataset, $\epsilon$ very small allows for the data to be correctly clustered with SC and SC+DMN when $K$ is known a priori.  However, the regime of $\epsilon$ is so small that the eigenvalues become unhelpful for estimating the number of latent clusters.  Unlike Euclidean spectral clustering, PWSPD SC correctly estimates $\hat{K}=2$ for a range of parameters, and achieves near-perfect clustering results for those parameters as well.  Indeed, as shown by Figures \ref{fig:TwoRingsAccuracyPlots_OA_SC_PWSPD}, \ref{fig:TwoRingsAccuracyPlots_K_SC_PWSPD_Binarized}, PWSPD SC with $p$ large is able to do fully unsupervised clustering on the Two Rings data.  

In the case of the Long Bottleneck dataset, there are three reasonable latent clusterings, depending on whether geometry, density, or both matter (see Figure \ref{fig:LongBottleneck_K_2}, \ref{fig:LongBottleneck_K_3}, \ref{fig:LongBottleneck_K_4}).  PWSPD is able to balance between the geometry and density-driven cluster structure in the data.  Indeed, all of the cluster configurations shown in Figure \ref{fig:LongBottleneck_K_2}, \ref{fig:LongBottleneck_K_3}, \ref{fig:LongBottleneck_K_4} are learnable without supervision for some choice of parameters $(\epsilon,p)$.  To capture the density cluster structure ($K=3$), $p$ should be taken large, as suggested in Figure \ref{fig:LongBottleneckAccuracyPlot_SC_PWSPD_OA_3}, \ref{fig:LongBottleneckAccuracyPlot_SC_PWSPD_K_3}.  To capture the geometry cluster structure ($K=2$), $p$ should be taken small and $\epsilon$ large, as suggested by Figures \ref{fig:LongBottleneckAccuracyPlot_SC_PWSPD_OA_2}, \ref{fig:LongBottleneckAccuracyPlot_SC_PWSPD_K_2}.  Interestingly, both cluster and geometry ($K=4$) can be captured by choosing $p$ moderate, as in Figure \ref{fig:LongBottleneckAccuracyPlot_SC_PWSPD_OA_4}, \ref{fig:LongBottleneckAccuracyPlot_SC_PWSPD_K_4}.  For Euclidean SC, varying $\epsilon$ is insufficient to capture the rich structure of this data.

In the case of the Short Bottleneck, taking $\epsilon$ large allows for the Euclidean methods to correctly estimate the number of clusters.  But, in this $\epsilon$ regime, the methods do not cluster accurately.  On the other hand, taking $p$ between 2 and 3 and $\epsilon$ large allows PWSPD to correctly estimate $K$ and also cluster accurately.  

Overall, this suggests that varying $p$ in PWSPD SC has a different impact than varying the scaling parameter $\epsilon$, and can allow for richer cluster structures to be learned when compared to SC with Euclidean distances.  In addition, PWSPDs generally allow for the underlying cluster structures to be learned in a \emph{fully unsupervised manner}, while Euclidean methods may struggle to simultaneously cluster well and estimate $K$ accurately.

\section{Spanners for PWSPD}
\label{sec:Spanners}
Let $\mathcal{H}\subset \mathcal{G}_{\mathcal{X}}^{p}$ denote a subgraph and recall the definition of $\ell^{\mathcal{H}}_{p}(\cdot,\cdot)$ given in Definition \ref{defn:PWSPD_G}.
\begin{defn}
\label{def:Spanner}
 For $t\geq 1$, $\mathcal{H}\subset \mathcal{G}_{\mathcal{X}}^{p}$ is a \emph{$t$-spanner} if $\ell^{\mathcal{H}}_{p}(x,y) \leq t\ell_{p}(x,y)$ for all $x,y\in\mathcal{X}$.
\end{defn}
Clearly $\ell_{p}(x,y) \leq \ell^{\mathcal{H}}_{p}(x,y)$ always, as any path in $\mathcal{H}$ is a path in $\mathcal{G}^{p}_{\mathcal{X}}$. Hence if $\mathcal{H}$ is a {\em $1$-spanner} we have equality: $\ell^{\mathcal{H}}_{p}(x,y) = \ell_{p}(x,y)$. Define the $k$NN graph, $\mathcal{G}^{p,k}_{\mathcal{X}}$, by retaining only edges $\{x,y\}$ if  $x$ is a $k$NN of $y$ or vice versa.  For appropriate $k,p$ and $\mathcal{M}$ it is known that $\mathcal{G}^{p,k}_{\mathcal{X}}$ is a $1$-spanner of $\mathcal{G}^{p}_{\mathcal{X}}$ w.h.p. Specifically, \cite{Groisman2018nonhomogeneous} shows this when $\mathcal{M}$ is an open connected set with $C^{1}$ boundary, $1 < p <\infty$ and $k = O(c_{p,d}\log(n))$ for a constant $c_{p,d}$ depending on $p,d$. One can deduce $c_{p,d} \geq 2^{d+1}3^{d}d^{d/2}$, while the dependence on $p$ is more obscure.  A different approach is used in \cite{chu2020exact} to show this for arbitrary smooth, closed, isometrically embedded $\mathcal{M}$, $2 \leq p < \infty$ and $k = O(2^{d}\log(n))$, where $O$ hides constants depending on the geometry of $\mathcal{M}$. In both cases $f$ must be continuous and bounded away from zero. 

Under these assumptions, we prove $\mathcal{G}^{p,k}_{\mathcal{X}}$ is a $1$-spanner w.h.p., for any smooth, closed, isometrically embedded $\mathcal{M}$ with mild restrictions on its curvature. Our results hold generally for $1<p<\infty$ and enjoy improved dependence of $k$ on $d$ and explicit dependence of $k$ on $p$ and the geometry of $\mathcal{M}$ compared to \cite{Groisman2018nonhomogeneous, chu2020exact}. We also consider an {\em intrinsic} version of PWSPD, \[\displaystyle \ell_{\mathcal{M},p}(x,y) = \left(\min_{\pi=\{x_{i_{j}}\}_{j=1}^\Length} \sum_{j=1}^{\Length-1} \D(x_{i_{j}},x_{i_{j+1}})^p\right)^{1/p},\] where $\D(\cdot,\cdot)$ is assumed known, which is not typically the case in data science. However this situation can occur when $\mathcal{X}$ is presented as a subset of $\mathbb{R}^{D}$, but one wishes to analyze $\mathcal{X}$ with an exotic metric (i.e. not $\|\cdot\|$). For example, if each $x_i\in\mathcal{X}$ is an image, a Wasserstein metric may be more appropriate than $\|\cdot\|$.  As this case closely mirrors the statement and proof of Theorem~\ref{theorem:EuclideanCase} we leave it to Appendix~\ref{subsec:PWSPD_Manifold}. Before proceeding we introduce some further terminology:
\begin{defn}
The edge $\{x,y\}$ is \emph{critical} if it is in the shortest path from $x$ to $y$ in $\mathcal{G}_{\mathcal{X}}^{p}$.  
\end{defn}
\begin{lem}\cite{chu2020exact}
\label{lemma:Critical_edges_one_spanners}
$\mathcal{H}\subset \mathcal{G}_{\mathcal{X}}^{p}$ is a $1$-spanner if it contains every critical edge of $\mathcal{G}_{\mathcal{X}}^{p}$.
\end{lem}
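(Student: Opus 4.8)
The plan is to show that every PWSPD-optimal path between a pair of points uses only critical edges; since $\mathcal{H}$ is assumed to contain all critical edges, it then contains such an optimal path, which forces $\ell_p^{\mathcal{H}}(x,y) = \ell_p(x,y)$.

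First I would fix $x,y\in\mathcal{X}$ and, using finiteness of $\mathcal{X}$, choose a path $\pi = (x=z_1,z_2,\dots,z_L=y)$ in $\mathcal{G}_{\mathcal{X}}^{p}$ attaining the minimum in Definition \ref{defn:PWSPD}, so that $\ell_p^p(x,y) = \sum_{i=1}^{L-1}\|z_i-z_{i+1}\|^p$. One may take $\pi$ to be simple, since deleting a cycle from a walk never increases the (nonnegative) sum of $p$-th powers of edge lengths.

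The key step is the claim that every edge $\{z_i,z_{i+1}\}$ of $\pi$ is critical, i.e.\ $\ell_p^p(z_i,z_{i+1}) = \|z_i-z_{i+1}\|^p$. Suppose some edge of $\pi$ is not critical; then $\ell_p^p(z_i,z_{i+1}) < \|z_i-z_{i+1}\|^p$, so there is a path $\sigma$ from $z_i$ to $z_{i+1}$ of strictly smaller cost. Splicing $\sigma$ in place of the edge $\{z_i,z_{i+1}\}$ yields a walk from $x$ to $y$ of total cost strictly below $\ell_p^p(x,y)$; collapsing this walk to a simple path (removing any cycles, which only lowers the cost) contradicts the optimality of $\pi$. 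Hence each edge of $\pi$ is critical, so $\pi$ is a path in $\mathcal{H}$, and therefore $\ell_p^{\mathcal{H}}(x,y) \leq \bigl(\sum_{i=1}^{L-1}\|z_i-z_{i+1}\|^p\bigr)^{1/p} = \ell_p(x,y)$. Combined with the trivial inequality $\ell_p(x,y) \leq \ell_p^{\mathcal{H}}(x,y)$, this gives equality for every pair $x,y$, i.e.\ $\mathcal{H}$ is a $1$-spanner.

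I do not expect a genuine obstacle here; the only point needing mild care is the passage between walks and paths, namely that after splicing in a cheaper subpath one can delete any resulting cycles without increasing the cost, which is immediate because the edge weights $\|z_i-z_{i+1}\|^p$ are nonnegative. One could alternatively argue via shortest-path trees for the metric $\ell_p^p$ on $\mathcal{G}_{\mathcal{X}}^{p}$, but the direct substitution argument above is the most transparent.
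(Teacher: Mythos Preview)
Your argument is correct and is the standard one: any $\ell_p$-optimal path must consist entirely of critical edges, since a non-critical edge could be replaced by a strictly cheaper subpath, contradicting optimality. The paper does not actually supply a proof of this lemma---it simply states the result and attributes it to \cite{chu2020exact}---so there is nothing to compare against beyond noting that your proof is exactly the expected substitution/splicing argument.
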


\subsection{Nearest Neighbors and PWSPD Spanners}\label{subsec:PWSPD_Euclidean}

A key proof ingredient is the following definition, which generalizes the role of spheres in the proof of Theorem 1.3 in \cite{chu2020exact}. 

\begin{defn}\label{defn:p_elongated_set}
For any $x,y\in\mathbb{R}^{d}$ and $\alpha\in (0,1]$, the \emph{$p$-elongated set} associated to $x,y$ is \[\mathcal{D}_{\alpha,p}(x,y) = \left\{ z\in \mathbb{R}^{d}: \ \|x - z\|^{p} + \|y-z\|^{p} \leq \alpha\|x - y\|^{p}\right\}.\]
\end{defn}Visualizations of $\mathcal{D}_{1,p}(x,y)\subset\mathbb{R}^{2}$ are shown in Figure \ref{fig:p_elongated}. $\mathcal{D}_{1,p}(x,y)$ is the set of points $z$ such that the two-hop path, $x\to z\to y$, is $\ell_{p}$-shorter than the one-hop path, $x\to y$. Hence:
\begin{figure}
\centering
	\includegraphics[width=.32\textwidth]{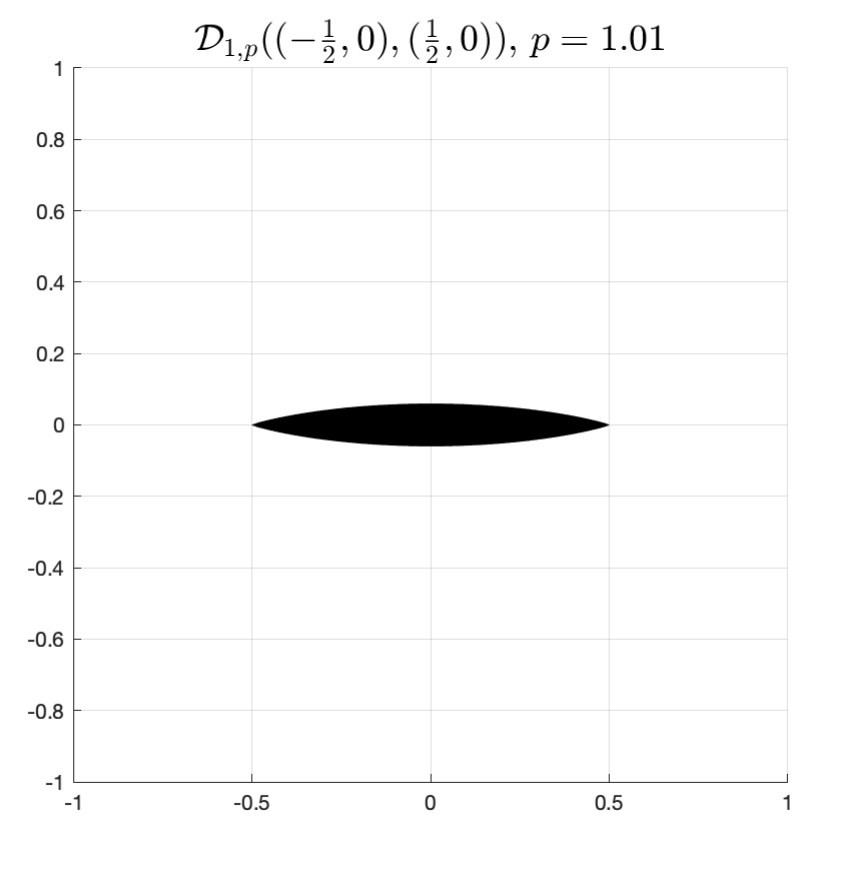}
	\includegraphics[width=.32\textwidth]{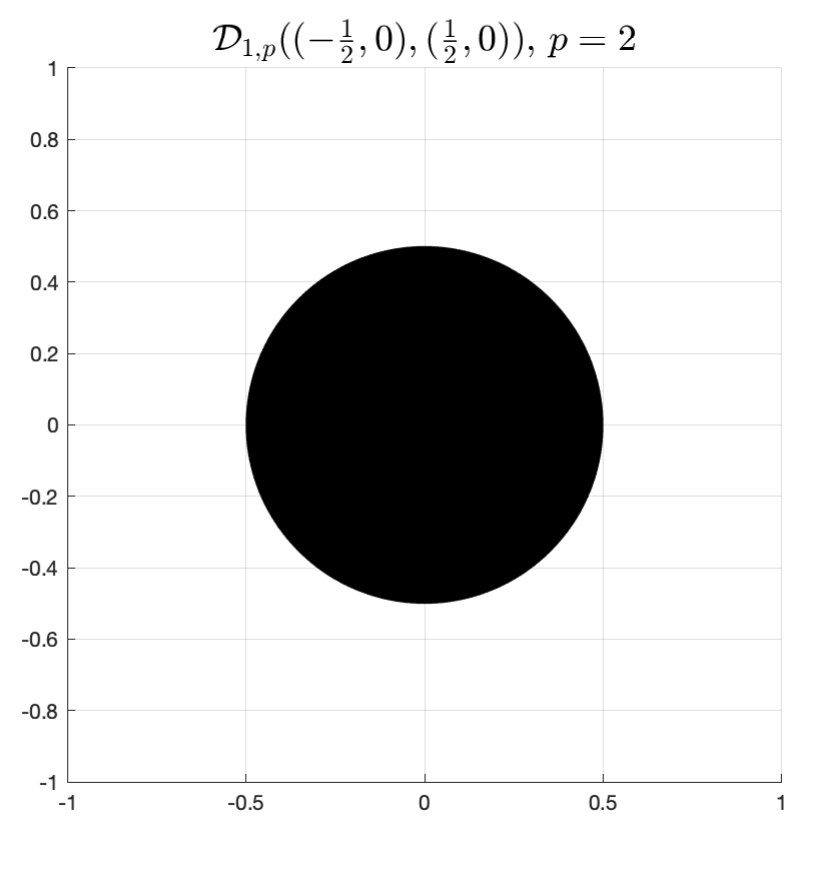}
	\includegraphics[width=.32\textwidth]{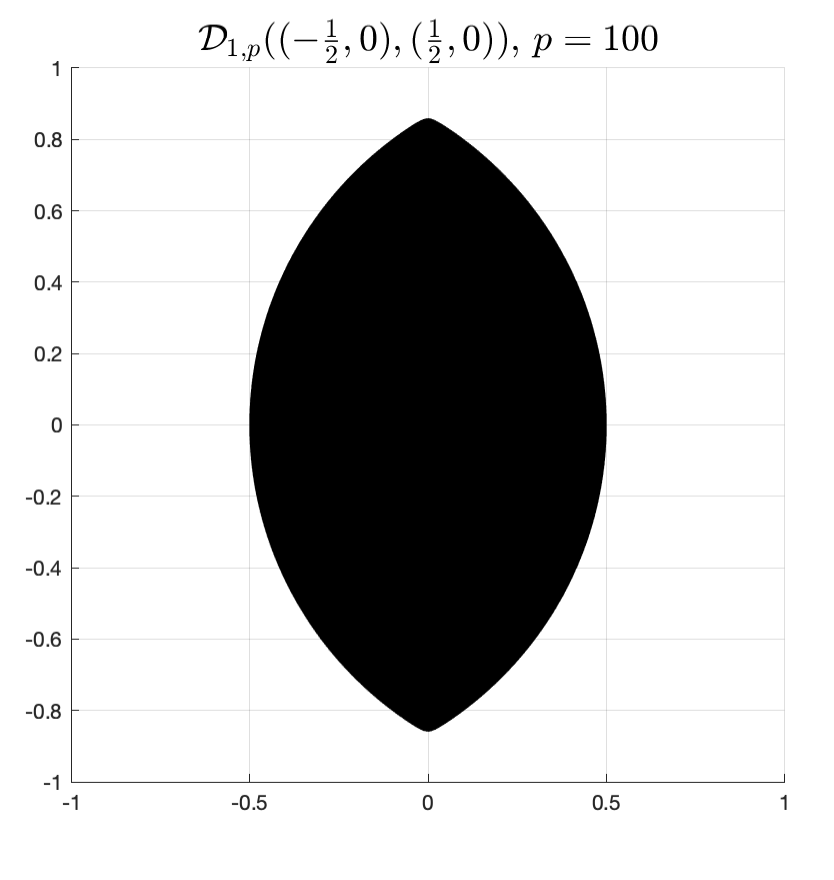}
\caption{Plots of $\mathcal{D}_{1,p}((-\frac{1}{2},0),(\frac{1}{2},0))$ for $p=1.01,2,100$.  We see that for smaller $p$, the set becomes quite small, converging to a line segment as $p\rightarrow 1^{+}$.  For $p=2$, the $p$-elongated set is a circle.  As $p$ increases, $\mathcal{D}_{1,p}((-\frac{1}{2},0),(\frac{1}{2},0))$ converges to a set resembling a vertically-oriented American football.}\label{fig:p_elongated}
\end{figure}

\begin{lem}
If there exists $z \in \mathcal{D}_{1,p}(x,y) \cap \mathcal{X}$ then the edge $\{x,y\}$ is not critical.
\label{lem:Edge_Criticality} 
\end{lem}

We defer the proof of the following technical Lemma to Appendix~\ref{app:Proofs_for_Spanners}.

\begin{lem}\label{lemma:Volume_of_Region}
Let $r := \|x-y\|$, $x_{M} = \frac{x+y}{2}$, and $r^{\star}:= r\sqrt{\frac{\alpha^{2/p}}{4^{1/p}} - \frac{1}{4}}$ for $\alpha>2^{1-p}$. Then:
\[B(x_{M},r^{\star}) \subset \mathcal{D}_{\alpha,p}(x,y) \subset B(x,r).\]
\end{lem}

For $\alpha =1$, \cite{Groisman2018nonhomogeneous} makes a similar claim but crucially does not quantify the dependence of the radius of this ball on $p$. Before proceeding, we introduce two regularity assumptions:

\begin{defn}
\label{defn:V}
 $\mathcal{M} \subset\mathbb{R}^{D}$ is in \emph{$V(d,\kappa_{0},\epsilon_0)$} for $\kappa_{0} \geq 1$ and $\epsilon_{0} > 0$ if it is connected and for all $x\in \mathcal{M},\ \epsilon \in (0,\epsilon_{0})$ we have: $\kappa_{0}^{-1}\epsilon^{d} \leq \text{vol}(\mathcal{M}\cap B(x,\epsilon))/\text{vol}(B(0,1)) \leq \kappa_{0}\epsilon^{d}$. 
\end{defn}

\begin{defn}
\label{defn:reach}
A compact manifold $\mathcal{M}\subset \mathbb{R}^{D}$ has \emph{reach} $\zeta > 0$ if every $x \in \mathbb{R}^{D}$ satisfying $\text{dist}(x,\mathcal{M}) := \min_{y\in \mathcal{M}}\|x - y\| < \zeta$ has a unique projection onto $\mathcal{M}$.
\end{defn}

\begin{thm}
	\label{thm:required_kNN}
Let $\mathcal{M}\in V(d,\kappa_{0},\epsilon_0)$ be a compact manifold with reach $\zeta > 0$. Let $\mathcal{X} = \{x_i\}_{i=1}^{n}$ be drawn i.i.d. from $\mathcal{M}$ according to a probability distribution with continuous density $f$ satisfying $0 < f_{\min} \leq f(x) \leq f_{\max}$ for all $x \in \mathcal{M}$. For $p>1$ and $n$ sufficiently large, $\mathcal{G}^{p,k}_{\mathcal{X}}$ is a $1$-spanner of $\mathcal{G}^{p}_{\mathcal{X}}$ with probability at least $1 -1/n$ if
\begin{equation}
 	k \geq 4\kappa_{0}^{2}\left[\frac{f_{\max}}{f_{\min}}\right]\left[\frac{4}{4^{1-1/p}-1}\right]^{d/2}\log(n).
 	\label{eq:Main_k_bound}
\end{equation}
\label{theorem:EuclideanCase}
\end{thm}

\begin{proof}
In light of Lemma \ref{lemma:Critical_edges_one_spanners} we prove that, with probability at least $1-1/n$, $\mathcal{G}^{p,k}_{\mathcal{X}}$ contains every critical edge of $\mathcal{G}^{p}_{\mathcal{X}}$. Equivalently, we show every edge of $\mathcal{G}^{p}_{\mathcal{X}}$ not contained in $\mathcal{G}^{p,k}_{\mathcal{X}}$ is not critical. 


For any $c,\epsilon>0$, $\Prob\left[\displaystyle\max_{x,y\in\X}\ell_{p}(x,y)\le\epsilon\right]\ge 1-c/n$ for $n$ sufficiently large \cite{mckenzie2019power}.  So, let $n$ be sufficiently large so that $\Prob\left[\ell_{p}(x,y)\le \min\left\{\epsilon_{0},\frac{\zeta}{d}\sqrt{\frac{1}{4^{1/p}}-\frac{1}{4}}\right\} \text{ for all } x,y\in \mathcal{X}\right]\ge  \left(1-\frac{1}{2n}\right)$.  Pick any $x,y\in\mathcal{X}$ which are not $k$NNs and let $r:=\|x-y\|$.  If $r> \min\left\{\epsilon_{0},\frac{\zeta}{d}\sqrt{\frac{1}{4^{1/p}}-\frac{1}{4}}\right\}$, then $\ell_{p}(x,y)< \|x-y\|$ and thus the edge $\{x,y\}$ is not critical.  So, suppose without loss of generality in what follows that $r\le \min\left\{\epsilon_{0},\frac{\zeta}{d}\sqrt{\frac{1}{4^{1/p}}-\frac{1}{4}}\right\}$.  

Define $r_{1}^{\star}:= r\sqrt{\frac{1}{4^{1/p}} - \frac{1}{4}}$ and $r^{\star}_2 := r\left(\sqrt{\frac{1}{4^{1/p}} - \frac{1}{4}}- \frac{r}{4\zeta}\right)$; note that $r_{2}^{\star}>0$ by the assumption $r\le \frac{\zeta}{d}\sqrt{\frac{1}{4^{1/p}}-\frac{1}{4}}$.  Let $x_{M} := \frac{x+y}{2}$ and let $\tilde{x}_{M} := \argmin_{z\in\mathcal{M}}\|x_{M} - z\|$ be the projection of $x_{M}$ onto $\mathcal{M}$, which is unique because $r<\zeta$.  By Lemma~\ref{lemma:Volume_of_Region}, $B(x_{M},r_{1}^{\star}) \subset \mathcal{D}_{1,p}(x,y)\subset B(x,r)$.  By Lemma~\ref{lem:Ball_radius_reach}, $B(\tilde{x}_{M},r_{2}^{\star})\subset B(x_{M},r_{1}^{\star})$. Let $x_{i_1},\ldots, x_{i_k}$ denote the $k$NNs of $x$, ordered randomly. Because $y$ is not a $k$NN of $x$, $\|x-x_{i_j}\| \leq \|x-y\| = r$ for $j=1,\ldots,k$. Thus, $x_{i_j} \in B(x,r)$ and so by Lemma~\ref{lem:r_star_vol_lower_bound} we bound for fixed $j$
\begin{align}
	\Prob\left[x_{i_j}\in \mathcal{D}_{1,p}(x,y)  \ | \ x_{i_j}\in B(x,r)\right] & \geq \Prob\left[x_{i_j}\in B(\tilde{x}_{M},r_{2}^{\star})  \ | \ x_{i_j}\in B(x,r)\right] \\
		& \geq \frac{3}{4}\kappa_{0}^{-2}\frac{f_{\min}}{f_{\max}}\left(\frac{1}{4^{1/p}} - \frac{1}{4}\right)^{d/2} =: \varepsilon_{\mathcal{M},p,f}.
\end{align}
Because the $x_{i_j}$ are all independently drawn:
\begin{align*}
\Prob\left[ \not\exists j \text{ with } x_{i_j}\in \mathcal{D}_{1,p}(x,y)\right] = \prod_{j=1}^{k}\Prob\left[ x_{i_j}\notin \mathcal{D}_{1,p}(x,y)\ | \ x_{i_j}\in B(x,r) \right] 
     \leq \left(1 - \varepsilon_{\mathcal{M},p,f}\right)^{k}.
\end{align*}
A routine calculations reveals that for $k \geq \frac{3\log n}{-\log(1-\varepsilon_{\mathcal{M},p,f})}$,
\begin{align}
\Prob\left[\exists j \text{ with } x_{i_j}\in \mathcal{D}_{1,p}(x,y)\right] = 1 - \Prob\left[ \not\exists j \text{ with } x_{i_j}\in \mathcal{D}_{1,p}(x,y)\right] \ge 1 - \frac{1}{n^3}.
\label{eq:Apply_Union_Bound}
\end{align}
By Lemma~\ref{lem:Edge_Criticality} we conclude the edge $\{x,y\}$ is not critical with probability exceeding $1 - \frac{1}{n^3}$.  There are fewer than $n(n-1)/2$ such non-$k$NN pairs $x,y\in \mathcal{X}$. These edges $\{x,y\}$ are precisely those contained in $\mathcal{G}^{p}_{\mathcal{X}}$ but not in $\mathcal{G}^{p,k}_{\mathcal{X}}$. By the union bound and \eqref{eq:Apply_Union_Bound} we conclude that none of these are critical with probability greater than $1 - \frac{n(n-1)}{2}\frac{1}{n^3} \geq 1 - \frac{1}{2n}$.  This was conditioned on $\ell_{p}(x,y)\le \min\left\{\epsilon_{0},\frac{\zeta}{d}\sqrt{\frac{1}{4^{1/p}}-\frac{1}{4}}\right\}$ for all $x,y\in \mathcal{X}$, which holds with probability exceeding $1-\frac{1}{2n}$.  Thus, all critical edges are contained in $\mathcal{G}_{p,k}^{\mathcal{X}}$ with probability exceeding $1-\left(\frac{1}{2n}+\frac{1}{2n}\right)=1-\frac{1}{n}$.  Unpacking $\varepsilon_{\mathcal{M},p,f}$ yields the claimed lower bound on $k$.
\end{proof}

In \eqref{eq:Main_k_bound}, the explicit dependence of $k$ on $\kappa_{0}, p$, and $d$ are shown.  The $4\kappa_{0}^{2}$ factor corresponds to the geometry of $\mathcal{M}$. The numerical constant 4, which is not tight, stems from accounting for the reach of $\mathcal{M}$. If $\mathcal{M}$ is convex (i.e. $\zeta = \infty$) then it can be replaced with $3$.  The second factor in \eqref{eq:Main_k_bound} is controlled by the probability distribution while the third corresponds to $p$ and $d$.  For $p=2$ and ignoring geometric and density factors we attain $k = O(2^d\log(n))$ as in \cite{chu2020exact}. For large $p$
 we get $k \approx O\left(\left(\frac{4}{3}\right)^{d/2}\log(n)\right)$, thus improving the dependence of $k$ on $d$ given in \cite{Groisman2018nonhomogeneous, chu2020exact}. Finally, using Corollary 4.4 of \cite{mckenzie2019power} we can sharpen the qualitative requirement that $n$ be ``sufficiently large" 
 to the quantitative lower bound $n \geq C\max\left\{\left[\frac{d}{\zeta}\right]^{\frac{pd}{p-1}}\left[\frac{4}{4^{1-1/p}-1}\right]^{\frac{pd}{2(p-1)}}, \left[\frac{1}{\epsilon_0}\right]^{\frac{pd}{p-1}}\right\}$ for a constant $C$ depending on the geometry of $\mathcal{M}$.  So, when $\mathcal{M}$ is high-dimensional, has small reach, or when $p$ is close to 1, $n$ may need to be quite large for $k$ as in \eqref{eq:Main_k_bound} to yield a 1-spanner.  

\subsection{Numerical Experiments}
\label{subsec:PWSPD_Experiments}

We verify the claimed dependence of $k$ on $n,p$ and $d$ ensures that $\mathcal{G}^{p,k}_{\mathcal{X}}$ is a $1$-spanner of $\mathcal{G}^{p}_{\mathcal{X}}$ numerically. To generate Figures \ref{fig:HeatMap1}--\ref{fig:HeatMap_Ridge} we: 

\begin{enumerate}[(1)]
	\item Fix $p,d,\mathcal{M},$ and $f$, then generate a sequence of $(n,k)$ pairs.
	\item For each $(n,k)$, do:
	\begin{enumerate}[(i)]
		\item Generate $\mathcal{X} = \{x_i\}_{i=1}^{n}$ by sampling i.i.d. from $f$ on $\mathcal{M}$. 
		\item For all pairs $\{x_i,x_j\}$ compute $\ell_{p}(x_i,x_j)$ and $\ell_{p}^{\mathcal{G}^{p,k}_{\mathcal{X}}}(x_i,x_j)$.
		\item If $\displaystyle\max_{1\leq i < j \leq n}\left|  \ell_{p}(x_i,x_j) - \ell_{p}^{\mathcal{G}^{p,k}_{\mathcal{X}}}(x_i,x_j)\right| > 10^{-10}$ record ``failure''; else, record ``success''.
	\end{enumerate}
	\item Repeat step 2 twenty times and compute the proportion of successes.
\end{enumerate} 

As can be seen from Figure~\ref{fig:Heatmaps}, there is a sharp transition between an ``all failures" and an ``all successes" regime. The transition line is roughly linear when viewed using semi-log-x axes, i.e. $k \propto \log(n)$. Moreover the slope of the line-of-best-fit to this transition line decreases with increasing $p$ (compare Figure \ref{fig:HeatMap1}-\ref{fig:HeatMap3}) and depends on intrinsic, not extrinsic dimension (compare Figure \ref{fig:HeatMap2} and \ref{fig:HeatMap4}), as predicted by Theorem~\ref{theorem:EuclideanCase}. Intriguingly, there is little difference between Figure \ref{fig:HeatMap2} (uniform distribution) and Figure \ref{fig:HeatMap5} (Gaussian distribution), suggesting that perhaps the assumption $f_{\min} > 0$ in Theorem~\ref{theorem:EuclideanCase} is unnecessary. Finally, we observe that the constant of proportionality (i.e. $C$ such that $k = C\log n$) predicted by Theorem \ref{theorem:EuclideanCase} appears pessimistic. For Figure \ref{fig:HeatMap1}-\ref{fig:HeatMap3}, Theorem \ref{theorem:EuclideanCase} predicts $C = 484.03,128$ and $21.76$ respectively (taking $\kappa_0=1$ due to the flat domain), while empirically the slope of the line-of-best-fit is $43.43$, $25.33$ and $0.29$ respectively.  

In Figure \ref{fig:HeatMap_Ridge}, we consider an intrinsically 4-dimensional set corrupted with Gaussian noise (standard deviation $0.1$) in the fifth dimension.  Interestingly, the scaling with $k$ is more efficient than as shown in Figure \ref{fig:HeatMap1} for the intrinsically 5-dimensional data.  This suggests that measures which concentrate near low-dimensional sets benefit from that low-dimensionality, even if they are not supported exactly on it.

We also consider relaxing the success condition (2.iii). We define $\mathcal{H}$ to be a \emph{$(t,\omega)$-spanner} if $\ell^{\mathcal{H}}_p(x,y) \leq t \ell_p(x,y)$ for $\omega\in (0,1]$ proportion of the edges, so that Theorem \ref{thm:required_kNN} pertains to (1,1)-spanners.  Figures \ref{fig:HeatMap6} and \ref{fig:HeatMap7} show the minimal $\omega$ (averaged across simulations) for which $\mathcal{G}^{p,k}_{\mathcal{X}}$ is a $(1.1,\omega)$-spanner and a $(1.01,\omega)$-spanner respectively; the red lines trace out the requirements for $\mathcal{G}^{p,k}_{\mathcal{X}}$ to be a $(1.1,1)$-spanner and $(1.01,1)$-spanner respectively. Comparing with Figure \ref{fig:HeatMap2}, we see that the required scaling for $\mathcal{G}^{p,k}_{\mathcal{X}}$ to be a $(1+\epsilon,1)$-spanner is similar to the required scaling to be a $(1,1)$-spanner, at least for $\epsilon>0$ small. However, the required scaling for $(1+\epsilon, \omega)$-spanners ($\omega<1$) is quite different and much less restrictive, even for $\omega$ very close to 1; for example the requirement for $\mathcal{G}^{p,k}_{\mathcal{X}}$ to be a $(1.01, 0.95)$-spanner appears sublinear in the $\log_2(n)$ versus $k$ plot (see Figure \ref{fig:HeatMap7}). If this notion of approximation is acceptable, our empirical results suggest one can enjoy much greater sparsity.  
Finally, in Figure~\ref{fig:HeatMap8} we compute the minimal $t \geq 1$ such that $\mathcal{G}^{p,k}_{\mathcal{X}}$ is a $(t,1)$-spanner of $\mathcal{G}^{p}_{\mathcal{X}}$; again the overall transition patterns for $(t,1)$-spanners are similar to the $(1,1)$-spanner case in Figure \ref{fig:HeatMap2} when $t$ is close to 1. Overall we see that greater sparsity is permissible in these relaxed cases, and analyzing such notions rigorously is a topic of ongoing research.

\begin{figure}[htbt!]
	\centering
	\begin{subfigure}[b]{0.32\textwidth}
		\centering
		\includegraphics[width= .8\textwidth]{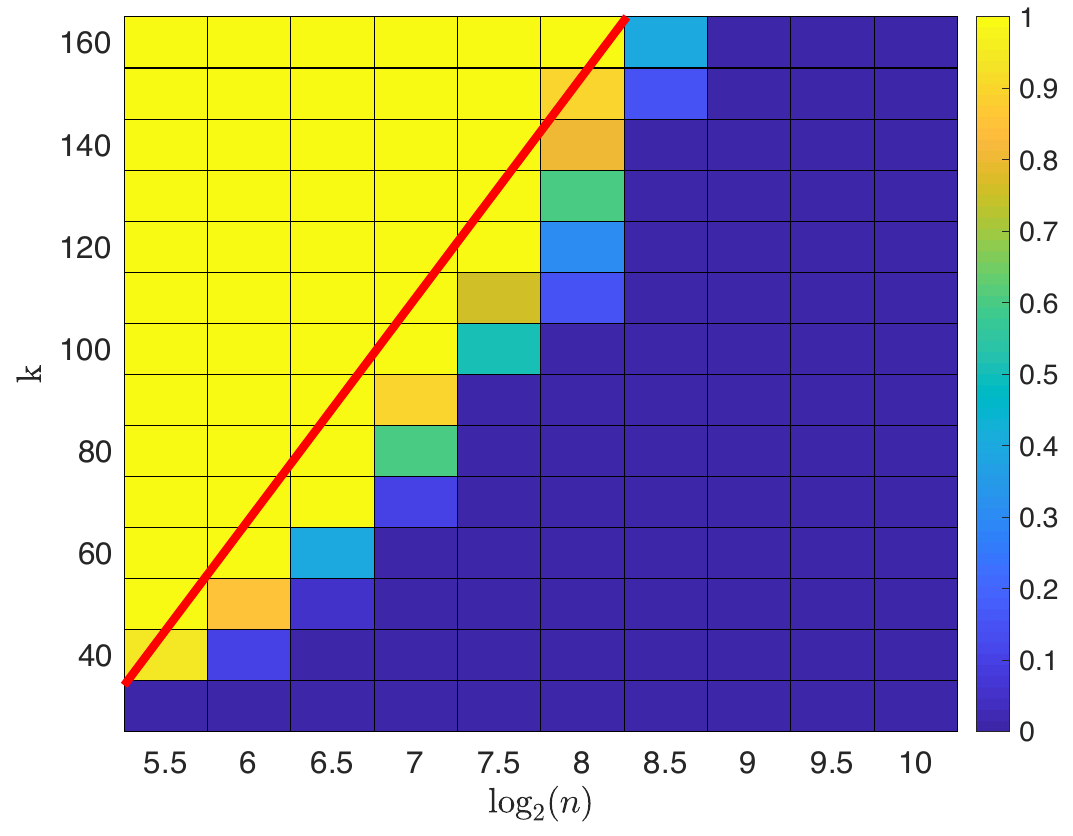}
		\caption{\tiny{$p = 1.5, \mathcal{M} = [0,1]^{5}$. Uniform.}}
		\label{fig:HeatMap1}
	\end{subfigure}
	\begin{subfigure}[b]{0.32\textwidth}
		\centering
		\includegraphics[width=.8\textwidth]{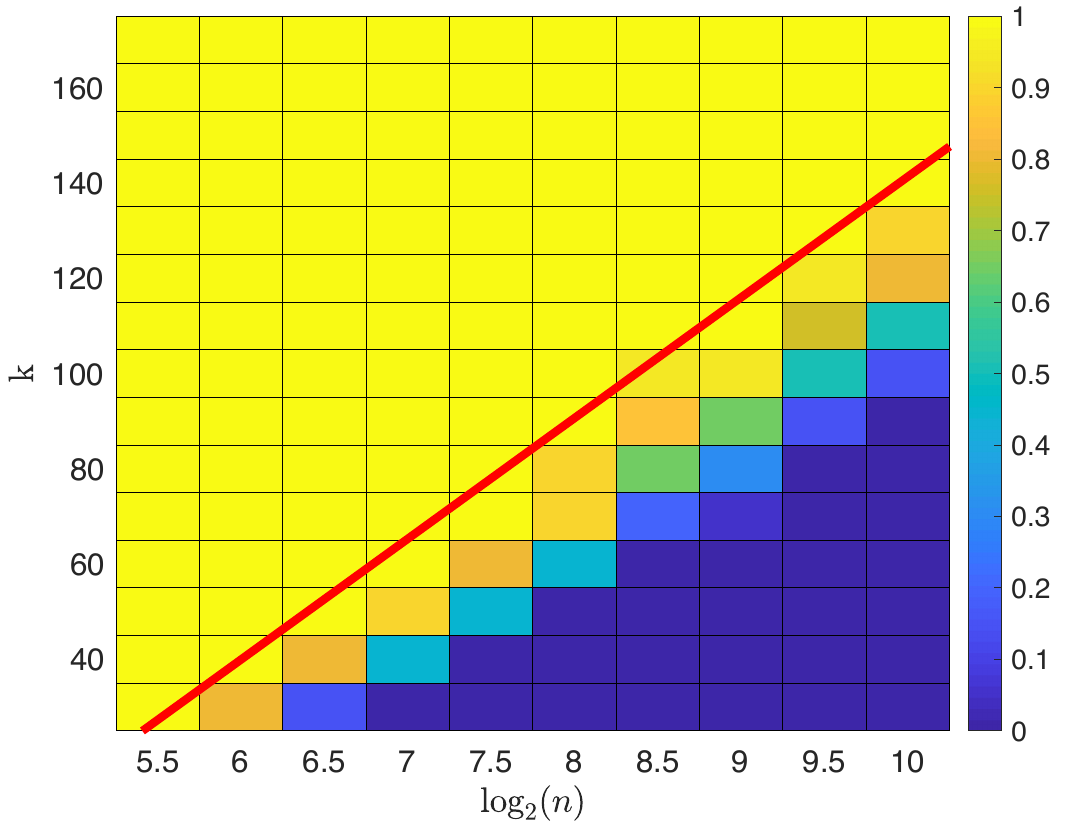}
		\caption{\tiny{$p = 2, \mathcal{M} = [0,1]^{5}$. Uniform.}}
		\label{fig:HeatMap2}
	\end{subfigure}
	\begin{subfigure}[b]{0.32\textwidth}
		\centering
		\includegraphics[width= .8\textwidth]{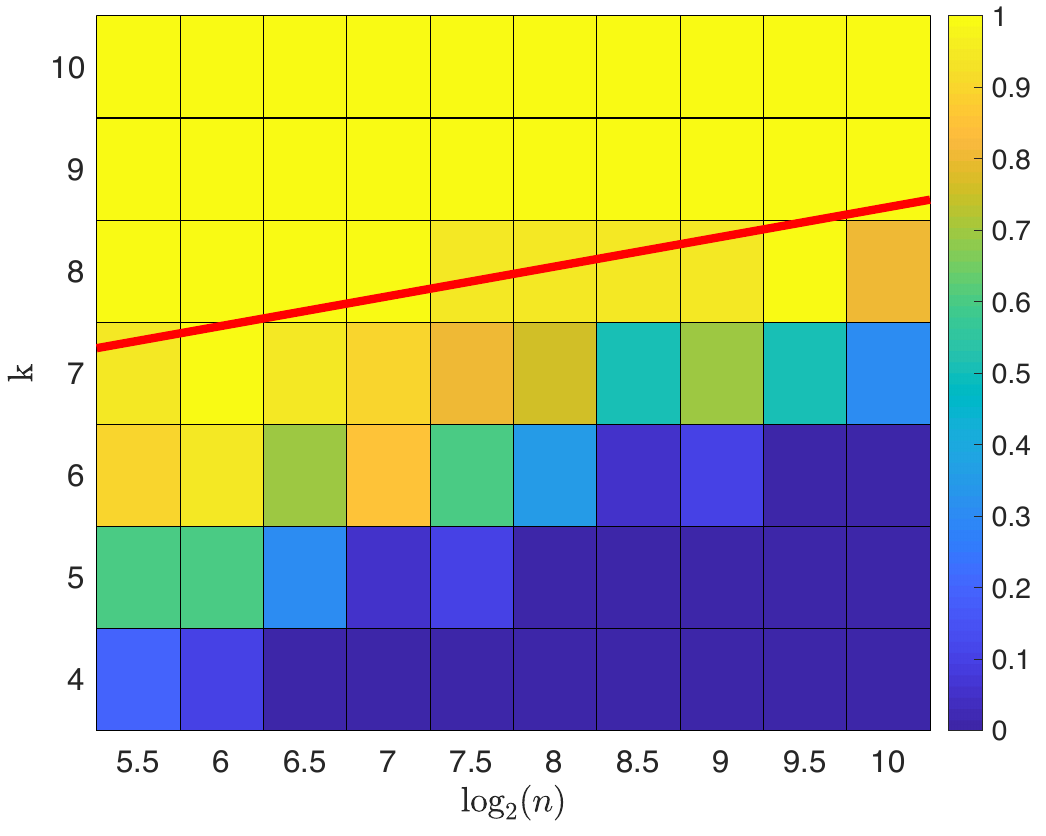}		
		\caption{\tiny{$p = 10, \mathcal{M} = [0,1]^{5}$. Uniform.}}
		\label{fig:HeatMap3}
	\end{subfigure}
	\begin{subfigure}[b]{0.32\textwidth}
		\centering
		\includegraphics[width= .8\textwidth]{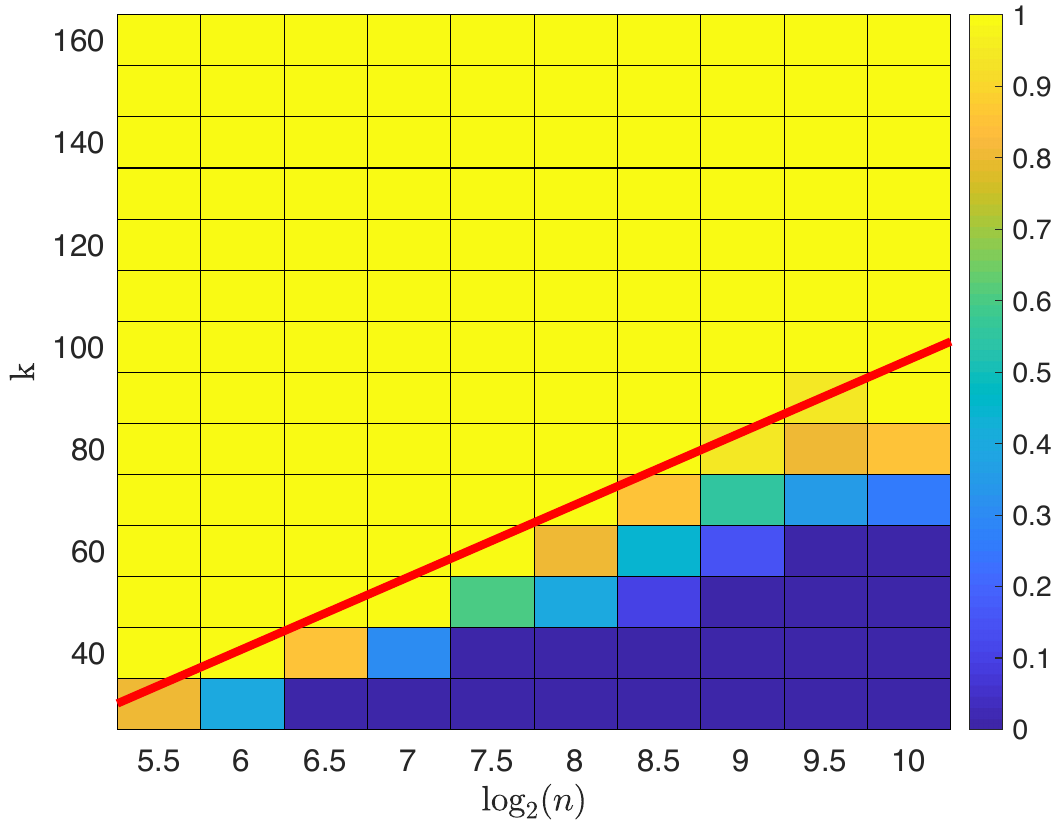}
		\caption{\tiny{$p = 2,\mathcal{M}  = \mathbb{S}^{4}\subset\mathbb{R}^{5}$. Uniform.}}
		\label{fig:HeatMap4}
	\end{subfigure}
	\begin{subfigure}[b]{0.32\textwidth}
		\centering
		\includegraphics[width= .8\textwidth]{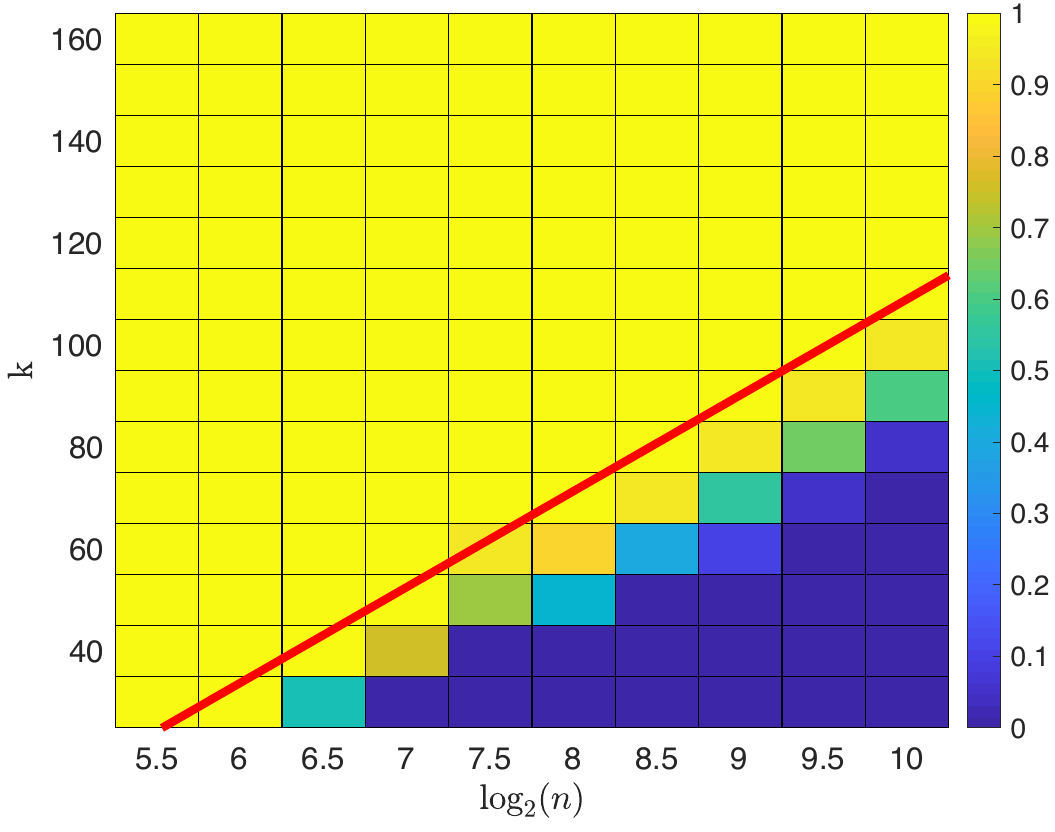}
		\caption{\tiny{$p = 2, \mathcal{M}  = [0,1]^5$. Gaussian dist.}}
		\label{fig:HeatMap5}
	\end{subfigure}
	\begin{subfigure}[b]{0.32\textwidth}
		\centering
		\includegraphics[width= .8\textwidth]{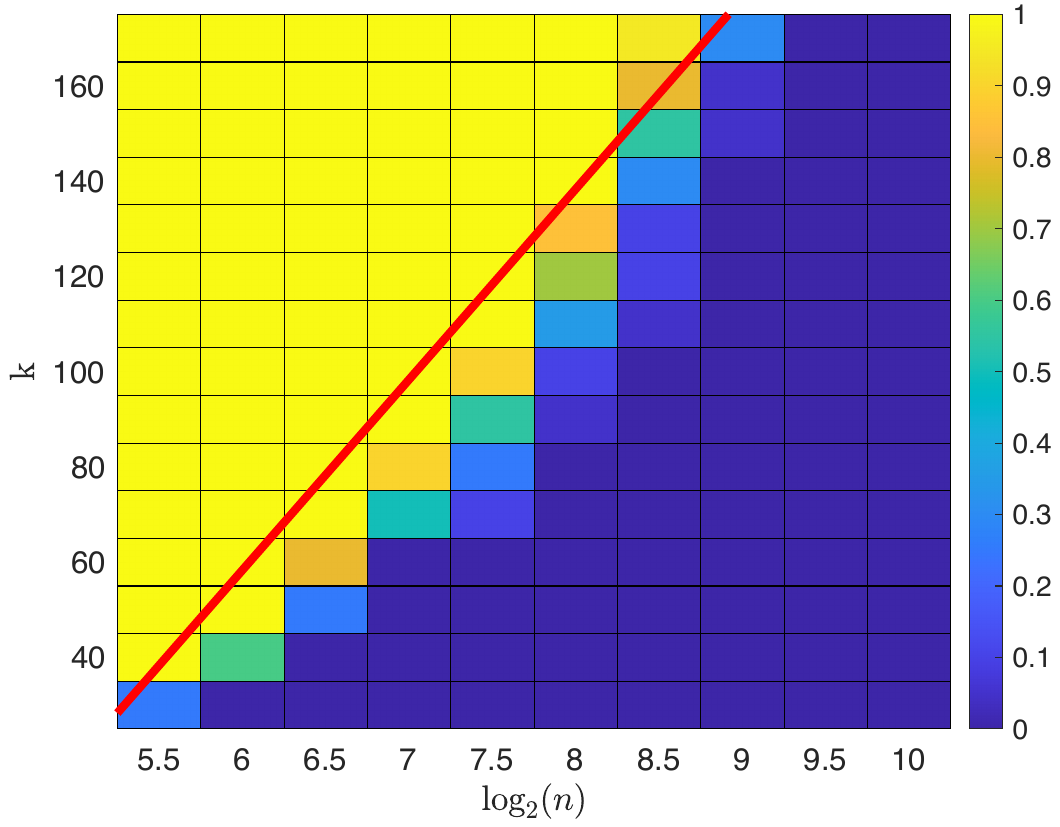}
		\caption{\tiny{$p = 1.5, \mathcal{M}  = [0,1]^4$, Uniform+noise.}}
		\label{fig:HeatMap_Ridge}
	\end{subfigure}
	\begin{subfigure}[b]{0.32\textwidth}
		\centering
		\includegraphics[width= .8\textwidth]{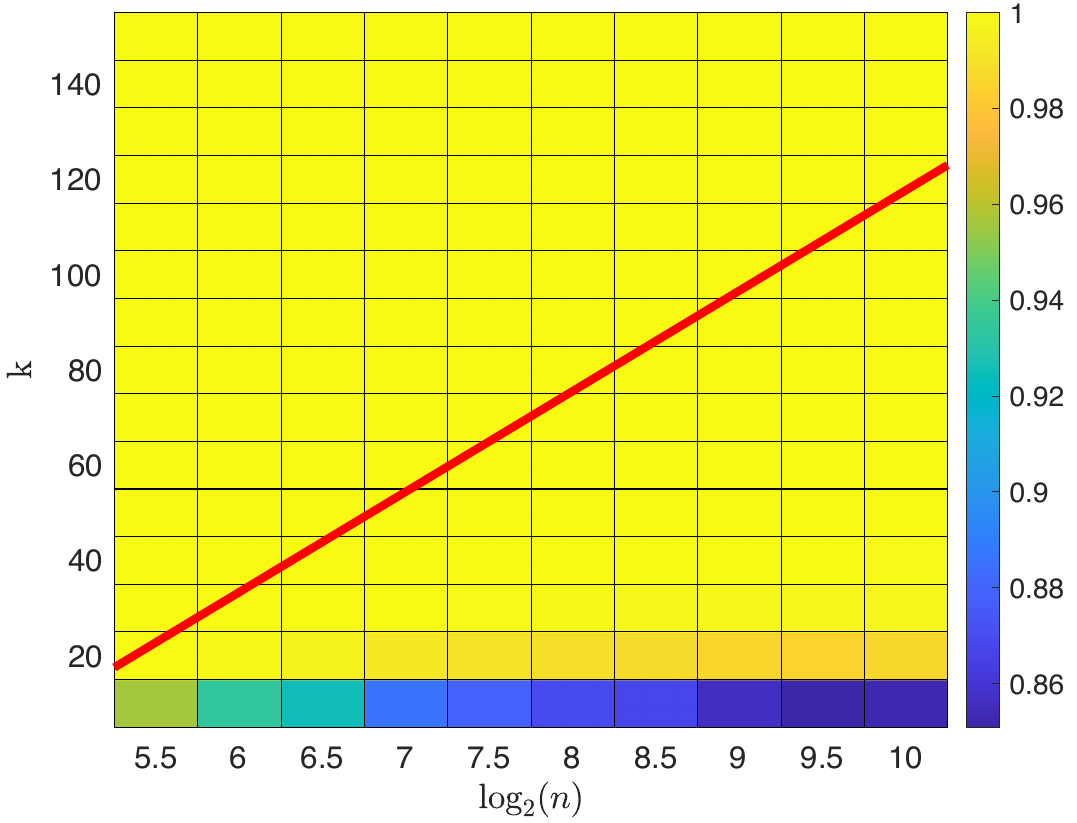}
		\caption{\tiny{$p = 2, \mathcal{M}  = [0,1]^5$. Uniform.}}
		\label{fig:HeatMap6}	
	\end{subfigure}
	\begin{subfigure}[b]{0.32\textwidth}
		\centering
		\includegraphics[width= .8\textwidth]{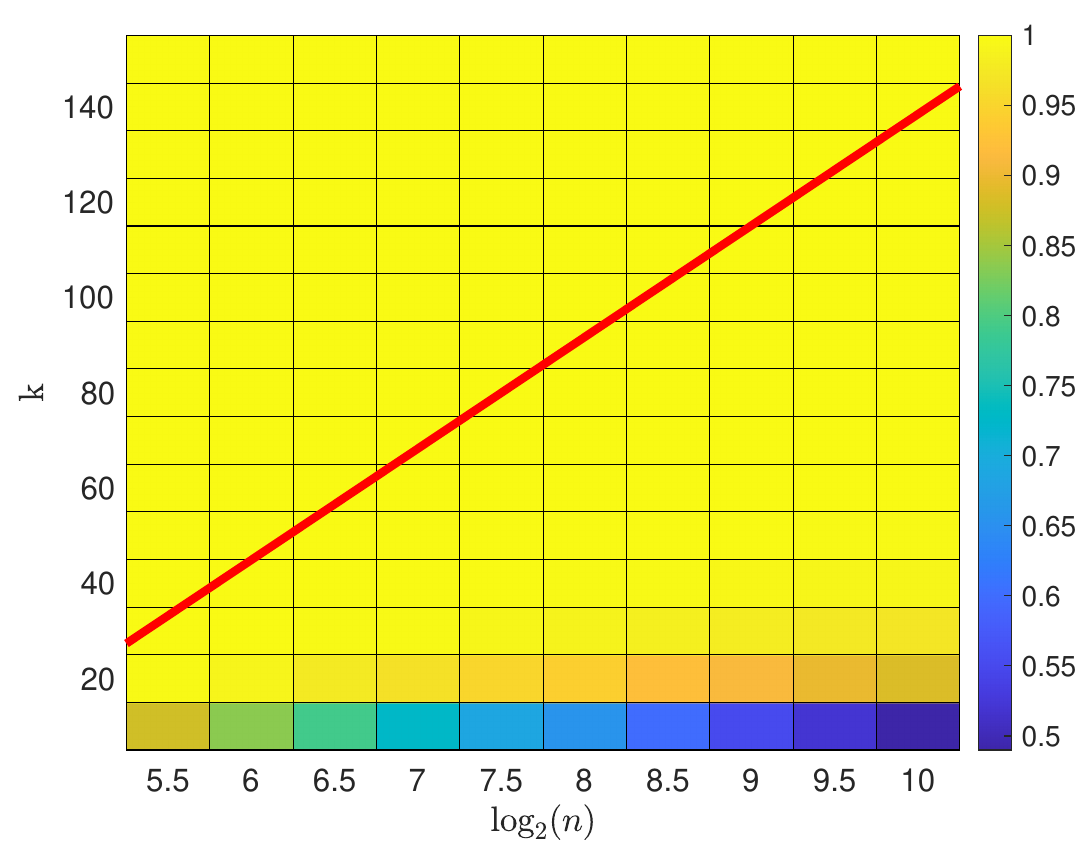}
		\caption{\tiny{$p = 2, \mathcal{M}  = [0,1]^5$. Uniform.}}
		\label{fig:HeatMap7}
	\end{subfigure}
	\begin{subfigure}[b]{0.32\textwidth}
		\centering
		\includegraphics[width= .8\textwidth]{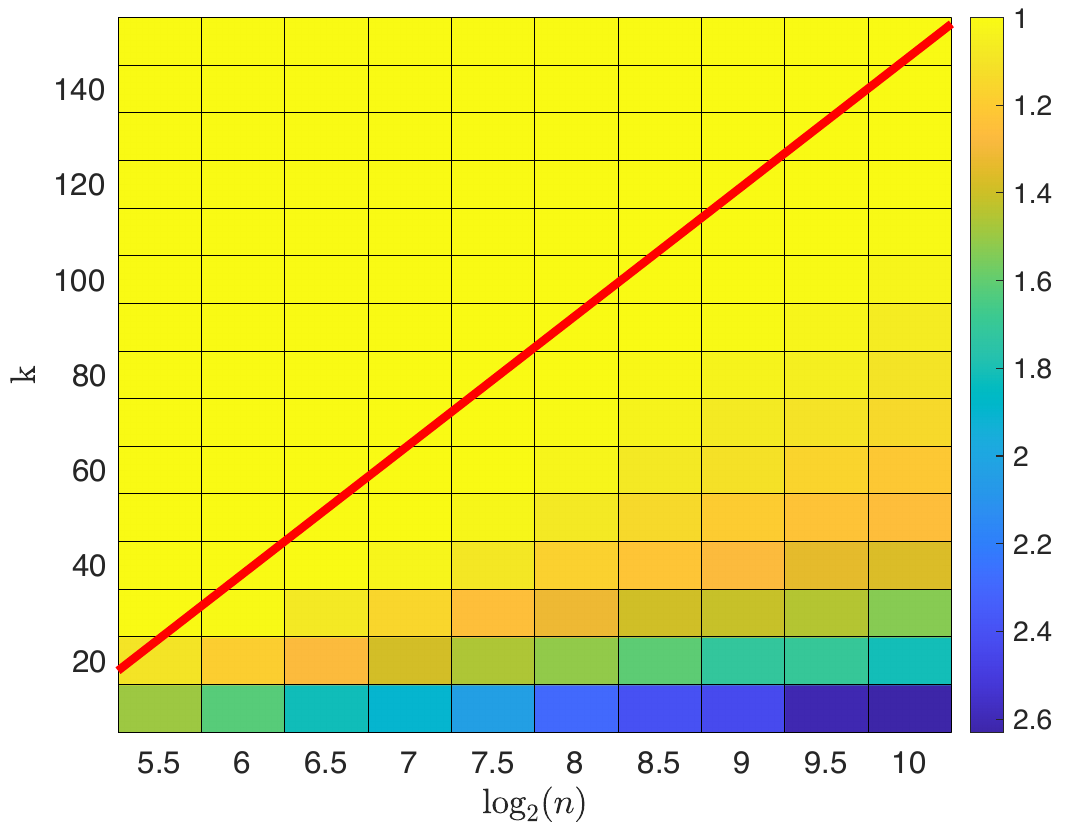}
		\caption{\tiny{$p = 2, \mathcal{M}  = [0,1]^5$. Uniform.}}
		\label{fig:HeatMap8}
	\end{subfigure}
	\caption{\label{fig:Heatmaps}  Figures~\ref{fig:HeatMap1}--\ref{fig:HeatMap_Ridge} show the proportion of randomly generated data sets for which $\mathcal{G}^{p,k}_{\mathcal{X}}$ is a $1$-spanner of $\mathcal{G}^{p}_{\mathcal{X}}$. The red line is the line of best fit through the cells representing the first value of $k$, for each value of $n$, for which all trials were successful, i.e. it is the line ensuring $\mathcal{G}^{p,k}_{\mathcal{X}}$ is a $1$-spanner. The slopes of Figure \ref{fig:HeatMap1}--\ref{fig:HeatMap_Ridge} are, respectively, $43.43$, $25.33$, $0.29$, $14.18$, $18.79$, and $40.0$. Figures \ref{fig:HeatMap6} and \ref{fig:HeatMap7} show the minimal $\omega$ (averaged across simulations) for which $\mathcal{G}^{p,k}_{\mathcal{X}}$ is a $(1.1,\omega)$-spanner and a $(1.01,\omega)$-spanner respectively; the red lines trace out the requirements for $\mathcal{G}^{p,k}_{\mathcal{X}}$ to be a $(1.1,1)$-spanner and $(1.01,1)$-spanner respectively. Figure~\ref{fig:HeatMap8} shows the minimal $t\ge 1$ such that $\mathcal{G}^{p,k}_{\mathcal{X}}$ is a $(t,1)$-spanner of $\mathcal{G}^{p}_{\mathcal{X}}$, and the red line traces out the $(1,1)$-spanner requirement.}
\end{figure}

\section{Global Analysis: Statistics on PWSPD and Percolation}
\label{sec:Statistics}

We recall that after a suitable normalization, $\ell_{p}$ is a consistent estimator for $\L_{p}$.  Indeed, \cite{Hwang2016_Shortest, Groisman2018nonhomogeneous} prove that for any $d\ge 1$, $p>1$, there exists a constant $C_{p,d}$ independent of $n$ such that $\displaystyle\lim_{n\rightarrow\infty} \tilde{\ell}_p(x,y)= C_{p,d} \L_p(x,y)$. The important question then arises: how quickly does $\widetilde{\ell}_p$ converge? How large does $n$ need to be to guarantee the error incurred by approximating $\L_p$ with $\widetilde{\ell}_p$ is small? To answer this question we turn to results from Euclidean first passage percolation (FPP) \cite{howard1997euclidean, howard2001geodesics, auffinger2015_70, damron2016entropy}. For any discrete set $\mathcal{X}$, we let $\ell_p(x,y,\mathcal{X})$ denote the PWSPD computed in the set $\mathcal{X}\cup\{x\}\cup\{y\}$.

\subsection{Overview of Euclidean First Passage Percolation}
\label{subsec:EucFPP}

Euclidean FPP analyzes $\ell_p^p(0,z,H_1)$, where $H_1$ is a homogeneous, unit intensity Poisson point process (PPP) on $\mathbb{R}^{d}$. 

\begin{defn}  A \emph{(homogeneous) Poisson point process (PPP)} on $\mathbb{R}^{d}$ is a point process such that for any bounded subset $A\subset\mathbb{R}^{d}$, $n_{A}$ (the number of points in $A$) is a random variable with distribution $\Prob[n_{A}=m]=\frac{1}{m!}(\lambda |A|)^{m}e^{-\lambda|A|}$; $\lambda$ is the \emph{intensity} of the PPP.
\end{defn}
It is known that
\begin{align}
\label{eqn:mu}
\lim_{ \|z\| \rightarrow \infty}\frac{\ell_p^p(0,z,H_1)}{\|z\|} &= \mu \, ,
\end{align}
where $\mu=\mu_{p,d}$ is a constant depending only on $p,d$ known as the \textit{time constant}. The convergence of $\ell_p^p(0,z,H_1)$ is studied by decomposing the error into random and deterministic fluctuations, i.e.
\begin{align*}
\ell_p^p(0,z,H_1) - \mu\|z\| &=  \underbrace{\ell_p^p(0,z,H_1) - \Ex[\ell_p^p(0,z,H_1)]}_{\text{random}} + \underbrace{\Ex[\ell_p^p(0,z,H_1)] - \mu\|z\|}_{\text{deterministic}} \, .
\end{align*} 
In terms of mean squared error (MSE), one has the standard bias-variance decomposition: \[\Ex\left[\left(\ell_p^p(0,z,H_1) - \mu\|z\|\right)^2\right] = \left(\Ex[\ell_p^p(0,z,H_1)] - \mu\|z\|\right)^2+ \Var\left[\ell_p^p(0,z,H_1)\right].\]
The following Proposition is well known in the Euclidean FPP literature.
\begin{prop} 
	\label{prop:MSE_PPP}
	Let $d\geq 2$ and $p>1$. Then $\Ex\left[\left(\ell_p^p(0,z,H_1) - \mu\|z\|\right)^2\right] \leq C\|z\| \log^2(\|z\|)$ for a constant $C$ depending only on $p,d$.
\end{prop}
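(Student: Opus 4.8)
The plan is to use the bias--variance decomposition displayed immediately above the statement and to bound the two terms separately by importing the corresponding estimates from the theory of power-weighted Euclidean first-passage percolation (FPP) with weight exponent $p>1$, as developed in \cite{howard1997euclidean} and surveyed in \cite{auffinger2015_70}. Throughout write $T_z := \ell_p^p(0,z,H_1)$, so that the target is $\Ex[(T_z - \mu\|z\|)^2] = (\Ex[T_z]-\mu\|z\|)^2 + \Var(T_z)$.

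First I would bound the variance, showing $\Var(T_z) \le C\|z\|$. Since $T_z$ is a functional of the Poisson point process $H_1$, this follows from the standard FPP concentration argument: partition $\mathbb{R}^d$ into unit cubes, order them, expose the points of $H_1$ cube by cube, and apply an Efron--Stein / martingale-difference inequality. The two ingredients are (i) with overwhelming probability an optimal path from $0$ to $z$ stays in a cylinder of radius $O(\|z\|)$ about the segment $[0,z]$ and visits $O(\|z\|)$ cubes --- which follows from the linear growth \eqref{eqn:mu} and the associated shape theorem --- and (ii) resampling the points of a single cube perturbs $T_z$ by an amount with uniformly bounded second moment, which holds because the local point counts of $H_1$ have exponential tails and, since $p>1$, the cost of crossing a single cube has finite moments of all orders. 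This is exactly the linear variance estimate for the Euclidean model with exponent $\alpha = p$.

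Second I would bound the bias $b_z := \Ex[T_z] - \mu\|z\|$. Because $z\mapsto\Ex[T_z]$ is approximately subadditive along rays (the corrections coming from the short edges needed to splice concatenated paths) and converges to the homogeneous limit $\mu\|z\|$ by \eqref{eqn:mu}, Alexander's method for convergence rates in subadditive shape theorems applies and gives $|b_z| \le C\sqrt{\|z\|}\,\log\|z\|$ for $\|z\|$ large; the only nontrivial input that method needs is precisely the linear variance bound established in the previous step. Squaring, $b_z^2 \le C\|z\|\log^2\|z\|$, which dominates the $O(\|z\|)$ variance term, and summing the two pieces in the displayed decomposition yields $\Ex[(T_z - \mu\|z\|)^2] \le C\|z\|\log^2\|z\|$, as claimed.

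The main obstacle is purely hypothesis-checking: one must confirm that the moment and regularity conditions underlying the Howard--Newman linear variance bound and Alexander's rate theorem are genuinely met for the weight $w(x,y) = \|x-y\|^p$ on a unit-intensity PPP in dimension $d\ge 2$ --- in particular finiteness of exponential moments for single-cube passage costs (needed both for the martingale increments and for the greedy-lattice-animal bounds that control the length of optimal paths) and non-degeneracy of the limit shape, which is where the restrictions $p>1$ and $d\ge 2$ enter. Once these structural facts are recorded, both bounds are essentially quotations from the references cited in the excerpt, and the remaining work is only the bookkeeping that transfers lattice-FPP statements to the Euclidean setting.
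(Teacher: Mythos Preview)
Your proposal is correct and follows exactly the paper's approach: apply the bias--variance decomposition, invoke the Howard--Newman linear variance bound $\Var(T_z)\le C\|z\|$, and invoke Alexander's rate estimate $|\Ex[T_z]-\mu\|z\||\le C\sqrt{\|z\|}\log\|z\|$. The only difference is that the paper simply cites Theorem~2.1 of \cite{howard2001geodesics} and Theorem~2.1 of \cite{alexander1993note} as black boxes, whereas you sketch the mechanisms behind those results.
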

\begin{proof}
	By Theorem 2.1 in \cite{howard2001geodesics}, $ \Var\left[\ell_p^p(0,z,H_1)\right] \leq C\|z\|$. By Theorem 2.1 in \cite{alexander1993note}, $\left(\Ex\left[\mu\|z\|\right] - \mu\|z\|\right)^2 \leq C \|z\| \log^2(\|z\|)$.
\end{proof}
Although $\Var\left[\ell_p^p(0,z,H_1)\right] \leq C\|z\|$ is the best bound which has been proved, the fluctuation rate is known to in fact depend on the dimension, i.e. $\Var\left[\ell_p^p(0,z,H_1)\right] \sim \|z\|^{2\chi}$ for some exponent $\chi=\chi(d)\leq \frac{1}{2}$.  Strong evidence is provided in \cite{damron2016entropy} that the bias can be bounded by the variance, so the exponent $\chi$ very likely controls the total convergence rate.  

The following tail bound is also known \cite{howard2001geodesics}.

\begin{prop}
	\label{prop:TailBoundPPP}
	Let $d\geq 2, p>1, \kappaOne = \min\{1,d/p\}$, and $\kappaTwo=1/(4p+3)$. For any $\epsilon\in(0,\kappaTwo)$, there exist constants $C_0$ and $C_1$ (depending on $\epsilon$) such that for $\|z\|>0$ and $\|z\|^{\frac{1}{2}+\epsilon} \leq t \leq \|z\|^{\frac{1}{2}+\kappaTwo-\epsilon}$, $\Prob\left[\left|\ell_p^p(0,z,H_1) - \mu\|z\|\right| \geq t \right] \leq C_1\exp\left(-C_0(t /\sqrt{\|z\|})^{\kappaOne}\right)$.
	
\end{prop}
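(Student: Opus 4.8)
The plan is to derive this estimate by transcribing a moderate-deviation bound from the Euclidean first-passage percolation (FPP) literature, namely the work of Howard and Newman \cite{howard2001geodesics} that already supplies the variance bound used in Proposition~\ref{prop:MSE_PPP}. The first step is the identification: $\ell_p^p(0,z,H_1)$ is exactly the Howard--Newman passage time $T(0,z)$ in their Euclidean FPP model with weight exponent $\alpha = p$. One takes the point configuration to be the homogeneous unit-intensity PPP $H_1$, assigns the weight $\|q-q'\|^{p}$ to each pair of points, and defines $T(q,q')$ as the infimum of $\sum_i\|q_i - q_{i+1}\|^{p}$ over finite paths through the configuration; since $p>1$ this is covered by their standing hypotheses on the weight function, and the randomness enters only through the Poisson locations, which is precisely their setting. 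Under this dictionary the constant $\mu$ of \eqref{eqn:mu} is their time constant, and $\kappa_1 = \min\{1,d/p\}$, $\kappa_2 = 1/(4p+3)$ are the exponents appearing in their concentration theorem.

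The second step is to invoke that theorem. Structurally, their argument (i) establishes approximate subadditivity of $T$ along the ray from $0$ through $z$ --- the delicate point in the Euclidean (as opposed to lattice) setting, since one must allow small detours to connect to nearby Poisson points --- and yields the limit $\mu$; (ii) controls the upper tail of $T(0,z)$ by exhibiting an explicit near-geodesic through a thin tube joining $0$ to $z$ together with concentration for the number of Poisson points in such a tube; and (iii) controls the lower tail by a chaining/martingale argument bounding how few edges any near-optimal path can use. Combining (ii)--(iii) with Alexander's approximation method \cite{alexander1993note}, which bounds the bias $\Ex[T(0,z)] - \mu\|z\|$ by $O(\sqrt{\|z\|}\log\|z\|)$ and is therefore negligible against $\lambda \geq \|z\|^{1/2+\epsilon}$, yields $\Prob(|\ell_p^p(0,z,H_1)-\mu\|z\|| \geq \lambda) \leq C_1\exp(-C_0(\lambda/\sqrt{\|z\|})^{\kappa_1})$ on the window $\|z\|^{1/2+\epsilon}\leq \lambda \leq \|z\|^{1/2+\kappa_2-\epsilon}$.

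The main obstacle is really one of bookkeeping rather than of new mathematics: one must check that, after the substitution $\alpha = p$, the admissible range of $\lambda$ and the exponent $\kappa_1$ in the cited theorem coincide with those stated here, and that the constants $C_0,C_1$ may be taken to depend only on $\epsilon$ (and on $p,d$), uniformly in $z$. This is eased by the fact that we need the bound only for a single fixed pair $(0,z)$, so no union over directions or distances is required, and no curvature considerations from the manifold setting of the rest of the paper enter --- this is a purely Euclidean statement about the idealized PPP model. If instead one wanted a genuinely self-contained proof, the hard direction would be the lower-tail estimate in step (iii), where the Euclidean geometry of geodesics enters most essentially; reproducing it would require the full Howard--Newman machinery, so invoking their result directly is the efficient route.
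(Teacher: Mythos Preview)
Your proposal is correct and matches the paper's treatment: the paper does not prove this proposition at all but simply cites it as a known result from \cite{howard2001geodesics}, so your plan to identify $\ell_p^p(0,z,H_1)$ with the Howard--Newman passage time and invoke their concentration theorem is exactly what the paper does (implicitly). Your additional commentary on the structure of the Howard--Newman argument and the bookkeeping of exponents is more than the paper itself provides, but it is consistent with the cited source.
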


\subsection{Convergence Rates for PWSPD}
\label{subsec:ConvRates}

We wish to utilize the results in Section \ref{subsec:EucFPP} to obtain convergence rates for PWSPD. However, we are interested in PWSPD computed on a compact set with boundary $M$ and the convergence rate of $\ell_p$ rather than $\ell_p^p$. To simplify the analysis, we restrict our attention to the following idealized model.
\begin{assumption}
	\label{assump:uniformM}
	Let $M \subseteq\mathbb{R}^d$ be a convex, compact, $d$-dimensional set of unit volume containing the origin. Assume we sample $n$ points independently and uniformly from $M$, i.e. $f=1_{M}$, to obtain the discrete set $\X_n$. Let $M_\tau$ denote the points in $M$ which are at least distance $\tau$ from the boundary of $M$, i.e. $M_\tau := \{x \in M: \min_{y\in\partial M} \| x - y\| > \tau\}$.
\end{assumption}
We establish three things: (i) Euclidean FPP results apply away from $\partial M$; (ii) the time constant $\mu$ equals the constant $C_{p,d}$ in \eqref{equ:cont_limit_PD}; (iii) $\ell_p$ has the same convergence rate as $\ell_p^p$.

To establish (i), we let $H_n$ denote a homogeneous PPP with rate $\lambda=n$, and let $\ell_p(0,y,H_n)$ denote the length of the shortest path connecting $0$ and $y$ in $H_n$. We also let $\X_N = H_n \cap M$ and $\ell_p(0,y,\X_N)$ denote the PWSPD in $\X_N$; note $\Ex[|\X_N|]=n$. To apply percolation results to our setting, the statistical equivalence of $\ell_p(0,y,\X_n),$ $\ell_p(0,y,\X_N)$, and $\ell_p(0,y,H_n)$ must be established. For $n$ large, the equivalence of $\ell_p(0,y,\X_n)$ and $\ell_p(0,y,\X_N)$ is standard and we omit any analysis. The equivalence of $\ell_p(0,y,\X_N)$ and $\ell_p(0,y,H_n)$ is less clear. In particular, how far away from $\partial M$ do $0,y$ need to be to ensure these metrics are the same?  The following Proposition is a direct consequence of Theorem 2.4 from \cite{howard2001geodesics}, and essentially guarantees the equivalence of the metrics as long as $0$ and $y$ are at least distance $O(n^{-\frac{1}{4d}})$ from $\partial M$.

\begin{prop}
	\label{prop:boundary_effects}
	Let $d \geq 2$, $p > 1$, $\kappaOne = \min\{1, \frac{d}{p}\}$, $\kappaTwo = 1/(4p+3)$, and $\epsilon \in (0, \frac{\kappaTwo}{2})$, and $\tau=n^{-\frac{1}{4d} +\frac{\epsilon}{d}}\diam(M)^{\frac{3}{4}+\epsilon}$.
	Then for constants $C_{0},C_{1}$ (depending on $\epsilon$), for all $0,y \in M_\tau$, the geodesics connecting $0,y$ in $\X_N$ and $H_n$ are equal with probability at least $1-C_{1}\exp(-C_{0}(n^{\frac{1}{d}}\|y\|)^{\frac{3}{4}\epsilon\kappaOne})$, so that $\ell_p(0,y,\X_N)=\ell_p(0,y,H_n)$. 
\end{prop}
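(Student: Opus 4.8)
\emph{Proof strategy.} The plan is to deduce this from a transversal-fluctuation estimate for Euclidean FPP on the homogeneous unit-intensity process, namely Theorem 2.4 of \cite{howard2001geodesics}: with overwhelming probability the geodesic between $0$ and $z$ in $H_1$ is confined to a thin cylinder of radius $O(\|z\|^{3/4+\epsilon})$ about the segment $[0,z]$. The point is that, after undoing the scaling, this cylinder stays strictly inside $M$ precisely when $0,y\in M_\tau$; and once the $H_n$-geodesic never leaves $M$, it can only use points of $H_n\cap M=\X_N$, so it must agree with the geodesic computed in $\X_N$.

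First I would rescale. The map $\Phi_n(x)=n^{1/d}x$ sends the intensity-$n$ PPP $H_n$ to a unit-intensity PPP $H_1$ in distribution, multiplies every edge weight $\|x_i-x_{i+1}\|^p$ by $n^{p/d}$, and therefore sends $\ell_p(0,y,H_n)$ to $n^{-1/d}\ell_p(0,n^{1/d}y,H_1)$ while mapping optimal paths bijectively to optimal paths (a uniform rescaling of all edge weights preserves the ordering of path lengths). The inserted endpoints $0,y$ map to $0$ and $z:=n^{1/d}y$, consistent with the convention for $\ell_p(0,z,H_1)$ in the FPP literature, and $\|z\|=n^{1/d}\|y\|\le n^{1/d}\,\mathrm{diam}(M)$. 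So it suffices to control the $H_1$-geodesic from $0$ to $z$.

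Next I would invoke Theorem 2.4 of \cite{howard2001geodesics}: for $\epsilon$ in the allowed range there are constants $C_0,C_1$ so that, off an event of probability at most $C_1\exp(-C_0\|z\|^{\frac34\epsilon\kappa_1})$, this geodesic lies within distance $C\|z\|^{3/4+\epsilon}$ of $[0,z]$. Undoing $\Phi_n$, the $H_n$-geodesic from $0$ to $y$ then lies within distance $C\,n^{-\frac1{4d}+\frac{\epsilon}{d}}\|y\|^{3/4+\epsilon}\le C\,n^{-\frac1{4d}+\frac{\epsilon}{d}}\mathrm{diam}(M)^{3/4+\epsilon}$ of $[0,y]$, with failure probability $C_1\exp(-C_0(n^{1/d}\|y\|)^{\frac34\epsilon\kappa_1})$; this matches the claimed bound. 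The constant $C$ is harmless: for $n$ large it is absorbed by a tiny enlargement of $\epsilon$, which is why we take $\epsilon\in(0,\kappa_2/2)$ and ``$n$ sufficiently large'', making the tube radius at most $\tau$. I would then use that on a convex set $M$ the function $x\mapsto\mathrm{dist}(x,\partial M)$ agrees with the distance to $M^{c}$ and is concave, so since it exceeds $\tau$ at $0$ and at $y$ it exceeds $\tau$ all along $[0,y]$; hence the open $\tau$-tube about $[0,y]$ lies in $M$. Thus, on the good event, every intermediate vertex of the $H_n$-geodesic lies in $H_n\cap M=\X_N$, so this path is available in $\X_N\cup\{0,y\}$ and $\ell_p(0,y,\X_N)\le\ell_p(0,y,H_n)$; the reverse inequality is automatic since $\X_N\subseteq H_n$, and equality of the (a.s.\ unique) Euclidean-FPP geodesics follows.

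I expect the only genuine subtlety to be bookkeeping: matching the exponents and the absolute constant produced by Theorem 2.4 to the precise form of $\tau$ and of the probability estimate in the Proposition (this is exactly where the slack $\epsilon<\kappa_2/2$ and the largeness of $n$ are spent), and being careful that the cited cylinder bound, stated for FPP on all of $\mathbb{R}^d$, is applied only between the two points $0$ and $y$ whose cylinder we have just verified stays clear of $\partial M$. The concavity of the distance-to-complement on a convex set and the a.s.\ uniqueness of geodesics in this model are standard and may be cited.
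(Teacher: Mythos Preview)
Your proposal is correct and follows exactly the approach indicated in the paper, which simply states that the proposition ``is a direct consequence of Theorem 2.4 from \cite{howard2001geodesics}'' without further elaboration. You have supplied the scaling argument, the convexity step showing the tube about $[0,y]$ stays inside $M$, and the comparison $\X_N\subseteq H_n$ that the paper leaves implicit; these are precisely the details needed to turn that citation into a proof.
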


Next we establish the equivalence of $\mu_{p,d}$ (percolation time constant) and $C_{p,d}$ (PWSPD discrete-to-continuum normalization constant). 

\begin{prop}
	\label{prop:EquivalenceOfMu}
	Let $\mu_{p,d}$ be as in (\ref{eqn:mu}) and $C_{p,d}$ as in (\ref{equ:cont_limit_PD}).  Then $\mu_{p,d}^{1/p}=C_{p,d}$.
\end{prop}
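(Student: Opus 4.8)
The plan is to extract both constants from the same asymptotic quantity — the expected PWSPD along a fixed long segment under a dense i.i.d.\ sample — and check that the two scaling relations are compatible only if $\mu_{p,d}^{1/p} = C_{p,d}$. Concretely, fix a unit vector $u$ and consider the points $0$ and $z = tu$ for large $t$. First I would combine Proposition \ref{prop:boundary_effects} with the standard equivalence $\ell_p(0,z,\X_n) \approx \ell_p(0,z,\X_N) = \ell_p(0,z,H_n)$ to transfer the percolation statement \eqref{eqn:mu} from the unit-intensity PPP $H_1$ to the rate-$n$ PPP $H_n$ via Poisson rescaling: if $x\mapsto n^{1/d}x$ maps $H_n$ to $H_1$, then $\ell_p^p(0,z,H_n) = n^{-1/d}\,\ell_p^p(0, n^{1/d}z, H_1)$, and hence $\ell_p^p(0,z,H_n) \to \mu_{p,d}\,\|z\|$ once $n^{1/d}\|z\| \to \infty$. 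Taking $n\to\infty$ with $z$ fixed (so that $n^{1/d}\|z\|\to\infty$ automatically) yields $\lim_{n\to\infty} n^{1/(d)}\,\ell_p^p(0,z,H_n)\big/\|z\|$ — wait, more carefully: $n^{1/d}\ell_p^p(0,z,H_n) = \ell_p^p(0,n^{1/d}z,H_1)$, and dividing by $\|n^{1/d}z\| = n^{1/d}\|z\|$ gives that $\ell_p^p(0,z,H_n)/\|z\| \to \mu_{p,d}$ as $n\to\infty$.

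Next I would invoke \eqref{equ:cont_limit_PD}: for the uniform density $f = 1_M$ on a unit-volume set, $\L_p(0,z)$ reduces to $\|z\|^{1/p}$ (since the continuum integrand in \eqref{eqn:CPWSPD} has $f\equiv 1$, so $\L_p^p(0,z) = d(0,z) = \|z\|$ when the segment lies in $M$), and \eqref{equ:cont_limit_PD} reads $n^{(p-1)/(pd)}\,\ell_p(0,z) \to C_{p,d}\,\|z\|^{1/p}$. Raising to the $p$-th power, $n^{(p-1)/d}\,\ell_p^p(0,z) \to C_{p,d}^p\,\|z\|$. Now I must reconcile this with the percolation scaling. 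The subtlety is that the percolation statement above, as I derived it, gives $\ell_p^p(0,z,H_n) \asymp \mu_{p,d}\|z\|$ with \emph{no} $n$-power in front, whereas \eqref{equ:cont_limit_PD} has $n^{(p-1)/d}$; these look inconsistent, which signals that I have the Poisson rescaling exponent slightly off and need to be careful. The correct bookkeeping: under $x\mapsto n^{1/d}x$, an edge $\|x_i-x_j\|^p$ becomes $n^{-p/d}\|\cdot\|^p$, so $\ell_p^p(0,z,H_n) = n^{-p/d}\ell_p^p(0,n^{1/d}z,H_1)$, and dividing by $\|z\|$: $\ell_p^p(0,z,H_n)/\|z\| = n^{-p/d}\,\ell_p^p(0,n^{1/d}z,H_1)/\|z\| = n^{-p/d}\,n^{1/d}\,\big[\ell_p^p(0,n^{1/d}z,H_1)/\|n^{1/d}z\|\big] \to n^{(1-p)/d}\mu_{p,d}$, i.e. $n^{(p-1)/d}\ell_p^p(0,z,H_n)/\|z\| \to \mu_{p,d}$. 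Comparing with $n^{(p-1)/d}\ell_p^p(0,z) \to C_{p,d}^p\|z\|$ and using the equivalence of $\ell_p(0,z,\X_n)$ and $\ell_p(0,z,H_n)$ from Proposition \ref{prop:boundary_effects}, we conclude $\mu_{p,d} = C_{p,d}^p$, i.e. $\mu_{p,d}^{1/p} = C_{p,d}$.

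The main obstacle is making the interchange of limits rigorous: \eqref{eqn:mu} is a statement as $\|z\|\to\infty$ for \emph{fixed} intensity, while I want to take $n\to\infty$ for \emph{fixed} $z$, and these are linked only through the rescaling $z \mapsto n^{1/d}z$. To push the argument through I would fix $z$ away from $\partial M$ (so Proposition \ref{prop:boundary_effects} applies for all large $n$), apply the rescaling to land in the $H_1$ world with target point $n^{1/d}z$ whose norm tends to infinity, and then invoke \eqref{eqn:mu} there; convergence of expectations (rather than just the a.s./in-probability statement) can be secured using the tail bound Proposition \ref{prop:TailBoundPPP}, which provides enough integrability to upgrade convergence in probability to convergence of $\Ex[\ell_p^p(0,n^{1/d}z,H_1)]/\|n^{1/d}z\|$. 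A secondary technical point is confirming that $\L_p(0,z) = \|z\|^{1/p}$ for the uniform density requires $0$ and $z$ close enough that the Euclidean segment between them stays inside the convex set $M$, which holds since $M$ is convex and both points lie in $M$. Assembling these pieces gives the claimed identity.
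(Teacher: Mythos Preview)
Your proposal is correct and follows essentially the same route as the paper: fix a point in a unit-volume convex set with uniform density (so that $\L_p(0,z)=\|z\|^{1/p}$), use Proposition~\ref{prop:boundary_effects} to replace $\mathcal{X}_n$ by $H_n$, and then apply the Poisson rescaling $x\mapsto n^{1/d}x$ to land in the unit-intensity process $H_1$ with target $n^{1/d}z$, where \eqref{eqn:mu} applies. The paper simplifies slightly by taking $\|y\|=1$ from the outset and working with $\ell_p$ rather than $\ell_p^p$, which avoids the bookkeeping stumble you had to correct; your extra discussion of upgrading to convergence of expectations via Proposition~\ref{prop:TailBoundPPP} is not actually needed, since both \eqref{equ:cont_limit_PD} and \eqref{eqn:mu} are already limit statements and can be compared directly.
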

\begin{proof}
	Suppose Assumption \ref{assump:uniformM} holds and choose $y\in M$ with $\|y\|=1$ and let $M$ be such that $0,y$ are not on the boundary. By Proposition \ref{prop:boundary_effects}, $\displaystyle\lim_{n\rightarrow\infty} \ell_p(0,y,\X_n) = \lim_{n\rightarrow\infty} \ell_p(0,y,H_n)$.  Let $H_1$ be the unit intensity PPP obtained from $H_n$ by rescaling each axis by $n^{1/d}$, so that $\ell_p(0,y,H_n)=n^{-\frac{1}{d}}\ell_p(0,n^{\frac{1}{d}}y,H_1)$.  For notational convenience, let $z=n^{\frac{1}{d}}y$. Then
	\begin{align*}
	\lim_{n\rightarrow\infty} \widetilde{\ell}_p(0,y,\X_n) &= \lim_{n\rightarrow\infty} \widetilde{\ell}_p(0,y,H_n) \\
	&= \lim_{n\rightarrow\infty} n^{\frac{p-1}{pd}} \ell_p(0,y,H_n) \\
	&= \lim_{n\rightarrow\infty} n^{\frac{p-1}{pd}} n^{-\frac{1}{d}}\ell_p(0,n^{\frac{1}{d}}y,H_1) \\
	&= \lim_{\|z\|\rightarrow\infty} \|z\|^{\frac{p-1}{p}} \|z\|^{-1}\ell_p(0,z,H_1) \\
	&=  \lim_{\|z\|\rightarrow\infty} \frac{ \ell_p(0,z,H_1) }{\|z\|^{1/p}}.
	\end{align*}		
	Thus, $\displaystyle C_{p,d} = C_{p,d}\L_p(0,y) =\lim_{n\rightarrow\infty} \widetilde{\ell}_p(0,y,\X_n) =\lim_{\|z\|\rightarrow\infty} \frac{ \ell_p(0,z,H_1) }{\|z\|^{1/p}} = \mu_{p,d}^{1/p}$.
\end{proof}
Finally, we bound our real quantity of interest: the convergence rate of $\widetilde{\ell}_p$ to $C_{p,d}\L_p$.

\begin{thm}
	\label{thm:MSE_of_PWSPD}
	Assume Assumption \ref{assump:uniformM}, $d\geq 2$, $\kappaTwo =1/(4p+3)$, $\tau=n^{-\frac{(1-\kappaTwo)}{4d}}\diam(M)^{\frac{3+\kappaTwo}{4}}$, $p>1$, and $0,y \in M_\tau$. Then for $n$ large enough, $\Ex\left[\left(\tilde{\ell}_{p}(0,y,\X_n)-C_{p,d}\L_p(0,y)\right)^{2}\right] \lesssim n^{-\frac{1}{d}} \log^2(n)$.
\end{thm}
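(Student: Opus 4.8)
The plan is to import the Euclidean first passage percolation estimates of Section~\ref{subsec:EucFPP} in three steps: reduce the i.i.d.\ sample to a rescaled unit-intensity Poisson process, pass from $\ell_p^p$ to $\ell_p$ using the concavity of $t\mapsto t^{1/p}$, and track the powers of $n$. First, under Assumption~\ref{assump:uniformM} the density is uniform and $M$ is convex, so the $\L_1$-geodesic from $0$ to $y$ is the straight segment and hence $\L_p(0,y)=\|y\|^{1/p}$; together with Proposition~\ref{prop:EquivalenceOfMu} this gives $C_{p,d}\L_p(0,y)=(\mu\|y\|)^{1/p}$. Let $H_n$ be a rate-$n$ homogeneous PPP and $\X_N=H_n\cap M$. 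Taking $\epsilon=\kappa_2/4$ in Proposition~\ref{prop:boundary_effects} makes its truncation radius coincide with the $\tau$ in the statement, so for $0,y\in M_\tau$ one has $\ell_p(0,y,\X_N)=\ell_p(0,y,H_n)$ off an event $E^c$ whose probability decays faster than any power of $n$; the standard comparison of $\X_n$ and $\X_N$ lets us further replace $\X_N$ by $\X_n$ on this event. On $E^c$ we use the one-hop bound $\ell_p(0,y,\X_n)\le\|y\|$, so $(\tilde\ell_p(0,y,\X_n)-C_{p,d}\L_p(0,y))^2\lesssim n^{2(p-1)/(pd)}$ deterministically, and multiplying by $\Prob(E^c)$ makes this contribution $o(n^{-1/d})$. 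It therefore suffices to prove the estimate with $\X_n$ replaced by $H_n$.

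Next, rescaling space by the factor $n^{1/d}$ turns $H_n$ into a unit-intensity PPP $H_1$; writing $z=n^{1/d}y$, $W=\ell_p^p(0,z,H_1)$ and $m=\mu\|z\|$, we get $\tilde\ell_p(0,y,H_n)=(n^{-1/d}W)^{1/p}$ and $C_{p,d}\L_p(0,y)=(n^{-1/d}m)^{1/p}$, hence $\tilde\ell_p(0,y,H_n)-C_{p,d}\L_p(0,y)=n^{-1/(pd)}(W^{1/p}-m^{1/p})$. Since $\|z\|\asymp n^{1/d}$, the theorem reduces to the bound $\Ex[(W^{1/p}-m^{1/p})^2]\lesssim\|z\|^{(2-p)/p}\log^2\|z\|$. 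Split over $\{W\ge m/2\}$ and $\{W<m/2\}$. On the former, the mean value theorem gives $|W^{1/p}-m^{1/p}|\le\tfrac1p(m/2)^{1/p-1}|W-m|$ (the intermediate point is at least $m/2$ and $1/p-1<0$), so Proposition~\ref{prop:MSE_PPP} and $m\asymp\|z\|$ give $\Ex[(W^{1/p}-m^{1/p})^2\mathbbm{1}_{\{W\ge m/2\}}]\le\tfrac{1}{p^2}(m/2)^{2/p-2}C\|z\|\log^2\|z\|\lesssim\|z\|^{(2-p)/p}\log^2\|z\|$. On the latter, $0\le W^{1/p}<m^{1/p}$ forces $(W^{1/p}-m^{1/p})^2\le m^{2/p}$; and since $\{W<m/2\}\subseteq\{|W-m|\ge\|z\|^{1/2+\kappa_2-\epsilon}\}$ for $\|z\|$ large, monotonicity of the lower tail together with Proposition~\ref{prop:TailBoundPPP} (applied at $\lambda=\|z\|^{1/2+\kappa_2-\epsilon}$ with $\epsilon$ small) gives $\Prob(W<m/2)\le C_1\exp(-C_0\|z\|^{(\kappa_2-\epsilon)\kappa_1})$, which beats every power of $\|z\|$, so this term is $o(\|z\|^{(2-p)/p})$. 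Adding the two pieces and collecting powers ($n^{-2/(pd)}\cdot n^{(2-p)/(pd)}=n^{-1/d}$) finishes the proof.

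The main obstacle is the event $\{W<m/2\}$: the map $t\mapsto t^{1/p}$ is not Lipschitz near $0$, so the $L^2$ estimate of Proposition~\ref{prop:MSE_PPP} cannot be pushed through directly, and a multiplicative deficit of $W$ below its mean corresponds to a deviation of order $\|z\|$, which lies outside the window in which Proposition~\ref{prop:TailBoundPPP} is stated. The fix is to bound the squared error there by the deterministic quantity $m^{2/p}$ and to control $\Prob(W<m/2)$ by evaluating the tail bound at the largest admissible $\lambda=\|z\|^{1/2+\kappa_2-\epsilon}$, which $m/2$ eventually exceeds, using monotonicity of the tail. A secondary, routine matter is to verify that $\epsilon=\kappa_2/4$ aligns $\tau$ here with the one in Proposition~\ref{prop:boundary_effects} and that the exceptional events are crushed by the crude one-hop bound on $\ell_p$.
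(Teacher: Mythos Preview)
Your proof is correct and follows essentially the same route as the paper: both arguments reduce to the Poisson process via Proposition~\ref{prop:boundary_effects} with $\epsilon=\kappa_2/4$, control the exceptional event with the one-hop bound, pass from $\ell_p^p$ to $\ell_p$ by a mean-value/Lipschitz inequality, and use the tail bound of Proposition~\ref{prop:TailBoundPPP} to rule out $W$ being too small before invoking the MSE estimate of Proposition~\ref{prop:MSE_PPP}. The only cosmetic differences are that you rescale to unit intensity before the Lipschitz step (the paper does it after) and you split on the cleaner event $\{W\ge m/2\}$ rather than directly on the tail-bound event, which is a slight simplification but not a different idea.
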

\begin{proof}
To simplify notation throughout the proof we denote $\L_p(0,y)$ simply by $\L_p$. By Proposition \ref{prop:EquivalenceOfMu} and for $n$ large enough,	
\begin{align*}
\Ex\left[\left(\tilde{\ell}_{p}(0,y,\X_n)-C_{p,d}\L_p\right)^{2}\right] 
&\lesssim \Ex\left[\left(\tilde{\ell}_{p}(0,y,\X_N)-\mu^{1/p}\L_p\right)^{2}\right] =: (I) \, ,
\end{align*}
where $\X_N = H_n \cap M$ and $H_n$ is a homogeneous PPP with rate $n$.
Let $A$ be the event that the geodesics from $0$ to $y$ in $\X_N$ and $H_n$ are equal. Since we assume $\tau=n^{-\frac{(1-\kappaTwo)}{4d}}\diam(M)^{\frac{3+\kappaTwo}{4}}$, we may apply Proposition \ref{prop:boundary_effects} with $\epsilon=\kappaTwo/4$ to conclude $\Prob[A] \geq 1 - C_1\exp\left(-C_0\|y\|^{\frac{\nu}{d}}n^{\nu}\right)$ for $\nu=\frac{3\kappaTwo}{16}\min\{1,\frac{d}{p}\}$.
Conditioning on $A$, and observing  $\tilde{\ell}_{p}(0,y,\X_N)=n^{\frac{p-1}{pd}}\ell_{p}(0,y,\X_N) \leq n^{\frac{p-1}{pd}} \|y\|$, we obtain
\begin{align*}
(I) &= \Ex\left[\left(\tilde{\ell}_{p}(0,y,\X_N)-\mu^{1/p}\L_p\right)^{2}\ \big|\ A \right]\Prob[A] + \Ex\left[\left(\tilde{\ell}_{p}(0,y,\X_N)-\mu^{1/p}\L_p\right)^{2}\ \big|\ \bar{A} \right]\Prob[\bar{A}] \\
&\leq \Ex\left[\left(\tilde{\ell}_{p}(0,y,H_n)-\mu^{1/p}\L_p\right)^{2}\ \big|\ A \right] + \left(n^{\frac{2(p-1)}{pd}} \|y\|^2 + \mu^{2/p}\L_p^2\right)C_1\exp\left(-C_0\|y\|^{\frac{\nu}{d}}n^{\nu}\right) \\
&\leq \Ex\left[\left(\tilde{\ell}_{p}(0,y,H_n)-\mu^{1/p}\L_p\right)^{2}\right] + q_1
\end{align*}
where $q_1$ decays exponentially in $n$ (for the last line note that conditioning on $A$ means conditioning on the geodesics being local, which can only decrease the expected error). 

A Lipschitz analysis applied to the function $g(x)=x^{1/p}$ yields:
\begin{align*}
\left(\widetilde{\ell}_{p}(0,y,H_n)-\mu^{1/p}\L_p\right)^{2} &\leq p^{-2}\widetilde{\ell}_{p}(0,y,H_n)^{2(1-p)/p}\cdot \left(\widetilde{\ell}^p_{p}(0,y,H_n) - \mu\L_p^p\right)^2 \  .
\end{align*}
By Proposition \ref{prop:TailBoundPPP},
\begin{align}
\label{equ:tail_bound_PWSPD}
\widetilde{\ell}^p_{p}(0,y,H_n) &\geq \mu\L_p^p - \|y\|^{\frac{1}{2}+\epsilon}/n^{\frac{1}{d}\left(\frac{1}{2}-\epsilon\right)}
\end{align}
with probability at least $1-C_1 \exp\left(-C_0 \|y\|^{\epsilon\kappaOne} n^{\frac{\epsilon\kappaOne}{d}}\right)$ for any $\epsilon\in(0,\kappaTwo)$, where $\kappaOne=\min\{1,d/p\}$.  Fix $\epsilon\in(0,\kappaTwo)$ and let $B$ be the event that \eqref{equ:tail_bound_PWSPD} is satisfied.  On $B$,
\begin{align*}
\widetilde{\ell}_{p}(0,y,H_n)^\frac{2(1-p)}{p} 
&\leq (\mu^{1/p}\L_p)^\frac{2(1-p)}{p}\left(1 - \frac{\|y\|^{\frac{1}{2}+\epsilon}}{\mu\L_p^pn^{\frac{1}{d}\left(\frac{1}{2}-\epsilon\right)}}\right)^{\frac{2(1-p)}{p^2}} \\
&\leq (\mu^{1/p}\L_p)^\frac{2(1-p)}{p} \left( 1 + \frac{2(p-1)\|y\|^{\frac{1}{2}+\epsilon}}{p^2\mu\L_p^pn^{\frac{1}{d}\left(\frac{1}{2}-\epsilon\right)}} + \text{higher order terms} \right) \\
&\leq 2(\mu^{1/p}\L_p)^\frac{2(1-p)}{p},
\end{align*}
for $n$ large enough.  Note also that
\begin{small}
\begin{align*}
\Ex\left[ \left(\widetilde{\ell}_{p}(0,y,H_n)-\mu^{1/p}\L_p\right)^{2}\, |\, \bar{B} \right]\Prob[\bar{B}] &\leq \left(n^{\frac{2(p-1)}{pd}} \|y\|^2 + \mu^{2/p}\L_p^2\right) \exp\left(-C_0 \|y\|^{\epsilon\kappaOne} n^{\frac{\epsilon\kappaOne}{d}}\right) := q_2
\end{align*}
\end{small}
and $q_2$ decreases exponentially in $n$.
 We thus obtain
\begin{align*}
\Ex\left[ \left(\widetilde{\ell}_{p}(0,y,H_n)-\mu^{1/p}\L_p\right)^{2} \right]
&\leq \Ex\left[ \left(\widetilde{\ell}_{p}(0,y,H_n)-\mu^{1/p}\L_p\right)^{2}\ \big|\ B \right]\Prob[B] + q_2\\
&\leq \frac{2}{p^2}(\mu^{1/p}\L_p)^\frac{2(1-p)}{p}\Ex\left[ \left(\widetilde{\ell}^p_{p}(0,y,H_n) - \mu\L_p^p\right)^2\ \big|\ B \right] + q_2 \\
&= C \Ex\left[ \left(\widetilde{\ell}^p_{p}(0,y,H_n) - \mu\L_p^p\right)^2\right] + q_2
\end{align*}
where $C$ is a constant depending on $p,d,\|y\|$, and the last line follows since once again the expected error is lower conditioned on $B$ than unconditionally. We have thus established
\begin{align*}
\Ex\left[\left(\tilde{\ell}_{p}(0,y,\X_n)-C_{p,d}\L_p\right)^{2}\right] &\lesssim  \Ex\left[ \left(\widetilde{\ell}^p_{p}(0,y,H_n) - \mu\L_p^p\right)^2\right] + q_1 + q_2
\end{align*}
for $q_1, q_2$ exponentially small in $n$. Finally let $H_1$ be the unit intensity homogeneous PPP obtained from $H_n$ by multiplying each axis by $n^{1/d}$. 
By Proposition \ref{prop:MSE_PPP},
\begin{align*}
\Ex\left[\left(\ell_p^p(0,n^{\frac{1}{d}}y,H_1) - \mu n^{\frac{1}{d}}\|y\|\right)^2\right] &\lesssim n^{\frac{1}{d}}\|y\| \log^2(n^{\frac{1}{d}}\|y\|) \\
\Rightarrow\Ex\left[\left(n^{\frac{p}{d}}\ell_p^p(0,y,H_n) - n^{\frac{1}{d}} \mu \L_p^p\right)^2\right] &\lesssim n^{\frac{1}{d}}\|y\| \log^2(n^{\frac{1}{d}}\|y\|) \\
\Rightarrow\Ex\left[\left(n^{\frac{p-1}{d}}\ell_p^p(0,y,H_n) - \mu \L_p^p\right)^2\right] &\lesssim n^{-\frac{1}{d}}\|y\| \log^2(n^{\frac{1}{d}}\|y\|) \\
\Rightarrow\Ex\left[\left(\widetilde{\ell}_p^p(0,y,H_n) - \mu \L_p^p\right)^2\right] &\lesssim n^{-\frac{1}{d}} \log^2(n) \,.
\end{align*}
For $n$ large, the above dominates $q_1,q_2$, so that for a constant $C$ depending on $p,d,\|y\|$, \[\Ex\left[\left(\tilde{\ell}_{p}(0,y,\X_n)-C_{p,d}\L_p\right)^{2}\right] \leq C n^{-\frac{1}{d}} \log^2(n).\]
\end{proof}

\subsection{Estimating the Fluctuation Exponent}
\label{subsec:EstimatingChi}

As an application, we utilize the 1-spanner results of Section \ref{sec:Spanners} to empirically estimate the fluctuation rate $\chi(d)$. Since there is evidence that the variance dominates the bias, this important parameter likely determines the convergence rate of $\widetilde{\ell}_p$ to $\L_p$. Once again utilizing the change of variable $z=n^{\frac{1}{d}}y$, we note
\begin{align*}
\Var\left[\ell_p^p(0,z,H_1)\right] &\lesssim \|z\|^{2\chi} \iff \Var\left[\widetilde{\ell}_p(0,y,\X_n)\right] \lesssim n^{\frac{2(\chi-1)}{d}} \, ,
\end{align*}
and we estimate the right hand side from simulations. Specifically, we sample $n$ points uniformly from the unit cube $[0,1]^d$ and compute $\widetilde{\ell}_p(x,y,\X_n)$ for $x=(0.25, 0.5,\ldots,0.5)$, $y=(0.75, 0.5,\ldots,0.5)$ in a $k$NN graph on $\X_n$, with $k=\ceil{1+3 \left(\frac{4}{4^{1-1/p}-1}\right)^{d/2}\log(n)}$ 
as suggested by Theorem \ref{thm:required_kNN} (note that $f_{min}=f_{max}, \zeta=\infty, \kappa_{0}=1$ in this example). We vary $n$ from $n_{\min}=11,586$ to $n_{\max}=92,682$, and for each $n$ we estimate $\Var\left[\widetilde{\ell}_p(x,y,\X_n)\right]$ from $\NumSim$ simulations. Figure \ref{fig:var_plots} shows the resulting log-log variance plots for $d=2,3,4$ and various $p$, as well as the slopes $m$ from a linear regression. The observed slopes are related to $\chi$ by $\chi = md/2+1$, and one thus obtains the estimates for $\chi$ reported in Table \ref{tab:FluctuationRates}. See Appendix~\ref{app:Estimate_Fluctuation_Exponent} for confidence interval estimates.

These simulations confirm that $\chi$ is indeed independent of $p$. It is conjectured in the percolation literature that $\chi(d)\rightarrow 0^{+}$ as $d$ increases, with $\chi(2)=\frac{1}{3}$, $\chi(3) \approx \frac{1}{4}$, which is consistent with our results.  For $d=2$, the empirical convergence rate is thus $n^{-\frac{2}{3}}$ (not $n^{-\frac{1}{2}}$ as given in Theorem \ref{thm:MSE_of_PWSPD}), and for large $d$ one expects an MSE of order $n^{-\frac{2}{d}}$ instead of $n^{-\frac{1}{d}}$.   However estimating $\chi$ empirically becomes increasingly difficult as $d$ increases, since one has less sparsity in the $k$NN graph, and because $\chi$ is obtained from $m$ by $\chi = md/2+1$, so errors incurred in estimating the regression slopes are amplified by a factor of $d$.  Table \ref{tab:FluctuationRates} also reports the factor $n_{\max}/k$, which can be interpreted as the expected computational speed-up obtained by running the simulation in a $k$NN graph instead of a complete graph. We were unable to obtain empirical speed-up factors since computational resources prevented running the simulations in a complete graph. 

An important open problem is establishing that $\widetilde{\ell}_p$ computed from a nonuniform density enjoys the same convergence rate (with respect to $n$) as the uniform case. Although this seems intuitively true and preliminary simulation results support this equivalence, to the best of our knowledge it has not been proven, as the current proof techniques rely on ``straight line" geodesics. 

\begin{figure}[tbh]
	\centering
	\begin{subfigure}[b]{0.24\textwidth}
		\centering
		\includegraphics[width=\textwidth]{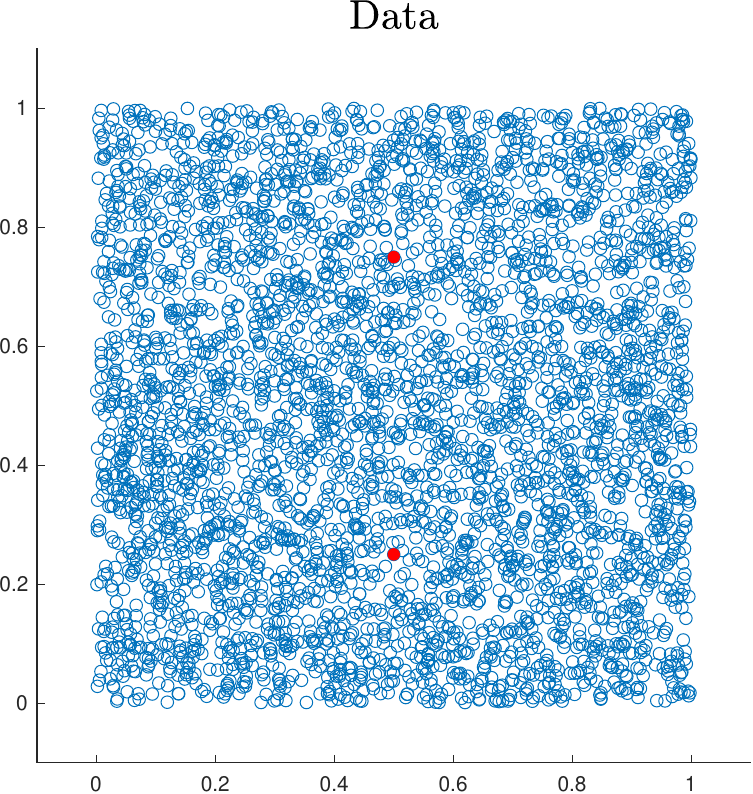}
		\caption{Data for $d=2$}
	\end{subfigure}
	\begin{subfigure}[b]{0.24\textwidth}
		\centering
		\includegraphics[width=\textwidth]{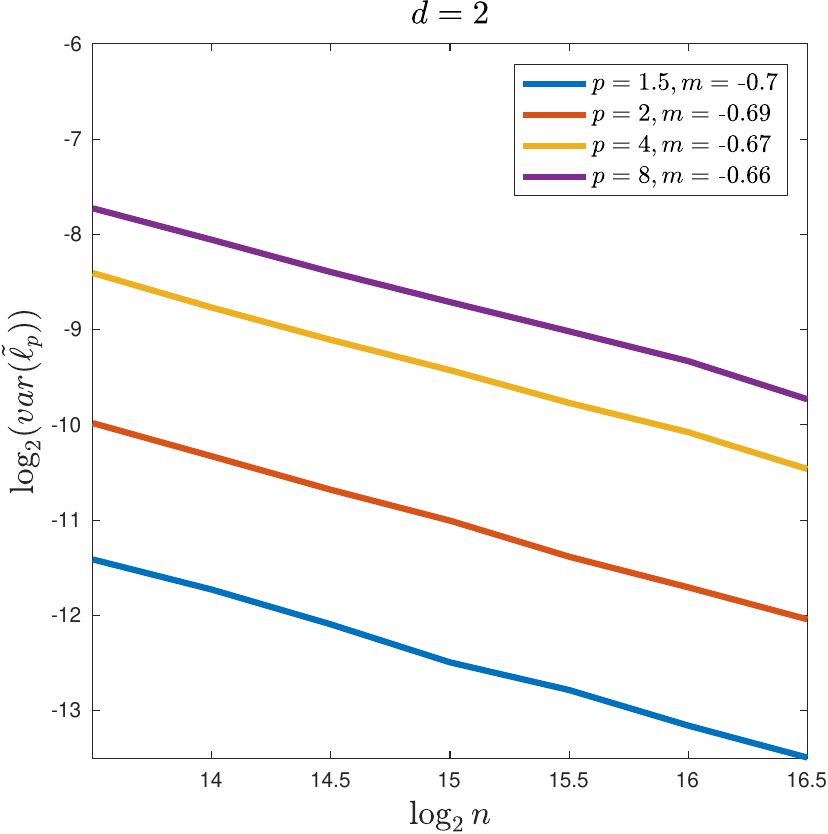}
		\caption{$d=2$}
	\end{subfigure}
	\begin{subfigure}[b]{0.24\textwidth}
		\centering
		\includegraphics[width=\textwidth]{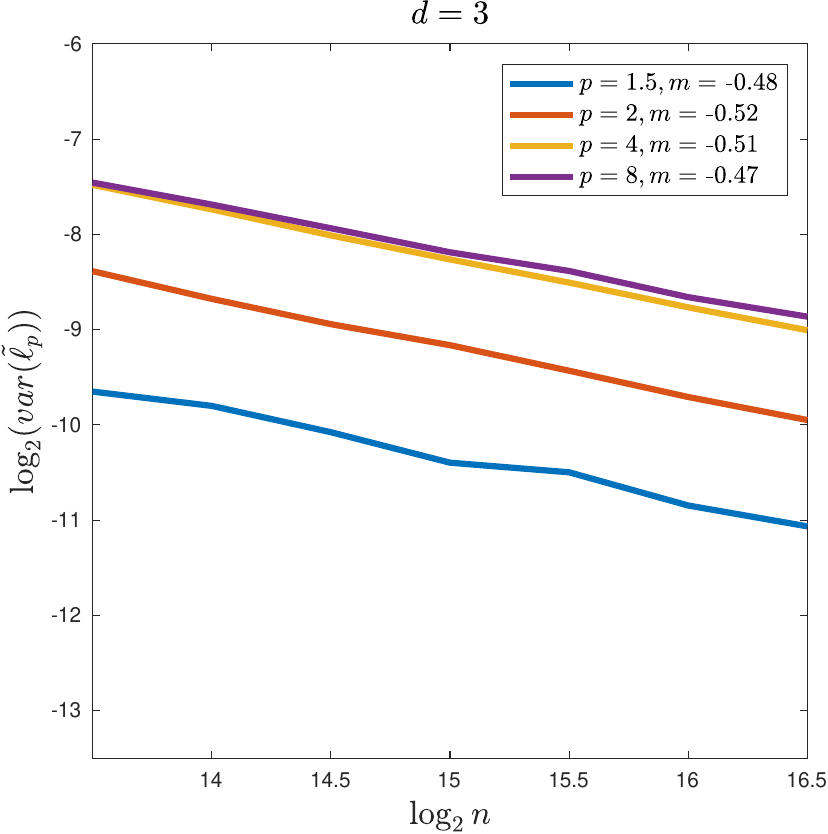}
		\caption{$d=3$}
	\end{subfigure}
	\begin{subfigure}[b]{0.24\textwidth}
		\centering
		\includegraphics[width=\textwidth]{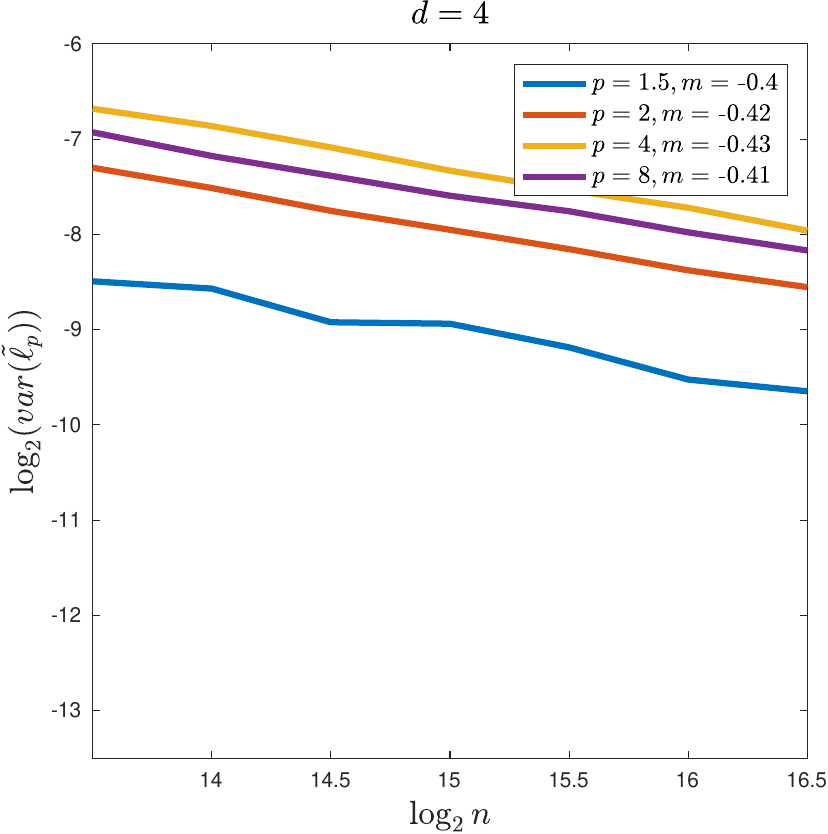}
		\caption{$d=4$}
	\end{subfigure}
	\caption{Variance plots for $\tilde{\ell}_{p}$. For each $n$, the variance was estimated from a maximum of $\NumSim=24000$ simulations, with a smaller $\NumSim$ when $p$ was small and/or the dimension was large. Specifically, when $d=2$, $\NumSim=14000$ was used for $p=1.5$; when $d=3$, $\NumSim=5000, 12000$ was used for $p=1.5,2$; when $d=4$, $\NumSim=2000, 6000, 19000$ was used for $p=1.5,2,4$. }
	\label{fig:var_plots}
\end{figure}

\begin{table}[tbh]	
	\begin{center}
		\begin{tabular}{cccc|cccc|cccc}
			\hline
			$d$ &	$p$ &	$\hat{\chi}$ & $n_{\max}/k$ & $d$ & $p$ & $\hat{\chi}$ & $n_{\max}/k$ &	$d$ & $p$ &	$\hat{\chi}$ & $n_{\max}/k$ \\ \hline
			2 & $1.5$ & 0.30 	& 394 & 3 & $1.5$ & 0.28  & 152 & 4 & $1.5$ & 0.19 & 58 \\ 
			2 & $2$ & 0.31 	&	667 & 3 & $2$ & 0.23 	& 336 & 4 & $2$ & 0.16 & 169 \\
			2 & $4$ & 0.33	&	1204 & 3 & $4$ & 0.24 	& 820 & 4 & $4$ & 0.14 	& 558 \\
			2 & $8$ & 0.34  & 1545 & 3 & $8$ & 0.29 	& 1204 & 4 & $8$ & 0.19 	& 927 \\ \hline
		\end{tabular}
		\caption{\label{tab:FluctuationRates}The slopes of $\log(n)$ versus $\Var[\tilde{\ell}_{p}]$ are shown for uniform data for different density weightings ($p$) and different dimensions $(d)$.   }
	\end{center}
\end{table}

\section{Conclusion and Future Work}

This article establishes local equivalence of PWSPD to a density-based stretch of Euclidean distance. 
We derive a near-optimal condition on $k$ for the $k$NN graph to be a 1-spanner for PWSPD, quantifying and improving the dependence on $p$ and $d$.  
Moreover, we leverage the theory of Euclidean FPP to establish statistical convergence rates for PWSPD to its continuum limit, and apply our spanner results to empirically support conjectures on the optimal dimension-dependent rates of convergence.
 
Many directions remain for future work. Our statistical convergence rates for PWSPD in Section \ref{sec:Statistics} are limited to uniform distributions. Preliminary numerical experiments indicate that these rates also hold for PWSPDs defined with varying density, but rigorous convergence rates for nonhomogeneous PPPs are lacking in the literature. 

The analysis of Section \ref{sec:LocalAnalysis} proved the local equivalence of PWSPDs with density-stretched Euclidean distances.  These results and the convergence results of Section \ref{sec:Statistics} are the first steps in a program of developing a discrete-to-continuum limit analysis for PWSPDs and PWSPD-based operators. 
A major problem is to develop conditions so that the discrete graph Laplacian (defined with $\tilde{\ell}_{p}$) converges to a continuum second order differential operator as $n\rightarrow\infty$.  A related direction is the analysis of how data clusterability with PWSPDs depends on $p$ for various random data models and in specific applications.

The numerical results of Section \ref{subsec:PWSPD_Experiments} confirm that $k \propto \log(n)$ is required for the $k$NN graph to be a 1-spanner, as predicted by theory. 
Relaxing the notion of $t$-spanners to $(t,\omega)$-spanners, as suggested in Section \ref{subsec:PWSPD_Experiments}, is a topic of future research.

Finally, the results of this article require data to be generated from a distribution supported exactly on a low-dimensional manifold $\mathcal{M}$.  An arguably more realistic setting is the noisy one in which the data is distributed only approximately on $\mathcal{M}$.  Two potential models are of interest: (i) replacing $\mathcal{M}$ with $B(\mathcal{M},\tau)=\{x\in\mathbb{R}^{D}\ | \ \text{dist}(x,\mathcal{M})\le \tau\}$ (tube model) and (ii) considering a density that \emph{concentrates} on $\mathcal{M}$, rather than being supported on it (concentration model). PWSPDs may exhibit very different properties under these two noise models, for example under bounded uniform noise and Gaussian noise, especially for large $p$. For the concentration model one expects noisy PWSPDs to converge to manifold PWSPDs for $p$ large, since the optimal PWSPD paths are density driven. Preliminary empirical results (Figure \ref{fig:HeatMap_Ridge}) suggest that when the measure concentrates sufficiently near a low-dimensional set $\mathcal{M}$, the number of nearest neighbors needed for a 1-spanner benefits from the intrinsic low-dimensional structure. For the tube model, although noisy PWSPDs will not converge to manifold PWSPDs, they will still scale according to the intrinsic manifold dimension for $\tau$ small. For both models, incorporating a denoising procedure such as local averaging \cite{Trillos2019_Local} or diffusion \cite{hein2006manifold} before computing PWSPDs is expected to be advantageous. Future research will investigate robust denoising procedures for PWSPD, computing PWSPDs after dimension reduction, and which type of noise distributions are most adversarial to PWSPD. 

\section*{Acknowledgements}

AVL acknowledges partial support from the US National Science Foundation under grant DMS-1912906. DM acknowledges partial support from the US National Science Foundation under grant DMS-1720237 and the Office of Naval Research under grant N000141712162.  JMM acknowledges partial support from the US National Science Foundation under grants DMS-1912737 and DMS-1924513. DM thanks Matthias Wink for several useful discussions on Riemannian geometry.  We thank the two reviewers and the associate editor for many helpful comments that greatly improved the manuscript.
\bibliographystyle{plain}
\bibliography{PWSPD_V2.bib}

\appendix

\section{Proofs for Section~\ref{sec:LocalAnalysis}}
\label{app:Proofs_for_LocalAnalysis}

\begin{proof}[Proof of Lemma \ref{lem:local_equivalence}]
	Let $\gamma_1(t)$ be a path which achieves $\D(x,y)$. Since $\D(x,y)\leq \epsilon(1+\kappa\epsilon^2)$, $f(\gamma_1(t))\geq \fmin(x,\epsilon)$ for all $t$.  Then:
	\begin{align*}
	\L_p^p(x,y) &\leq \int_0^1 \frac{1}{f(\gamma_1(t))^{\frac{p-1}{d}}} |\gamma_1'(t)|\ dt  \leq \frac{\D(x,y)}{\fmin(x,\epsilon)^{\frac{p-1}{d}}} \leq \frac{\epsilon(1+\kappa\epsilon^2)}{\fmin(x,\epsilon)^{\frac{p-1}{d}}} \, .
	\end{align*}
Note $y \in B_{\L_p^p}(x,\epsilon(1+\kappa \epsilon^2)/\fmin(x,\epsilon)^{\frac{p-1}{d}})$ implies $f(y)\leq \fmax(x,\epsilon)$, and thus $ \frac{\fmax(x,\epsilon)^{\frac{p-1}{d}}}{(f(x)f(y))^{\frac{p-1}{2d}}}\geq 1$, so that $	\L_p^p(x,y) \leq \frac{\D(x,y)}{\fmin(x,\epsilon)^{\frac{p-1}{d}}} \frac{\fmax(x,\epsilon)^{\frac{p-1}{d}}}{(f(x)f(y))^{\frac{p-1}{2d}}}$. This yields
	\begin{align*}
	\L_p^p(x,y)
	\leq (\rho_{x,\epsilon})^{\frac{p-1}{d}} \frac{\|x-y\|(1+\kappa\|x-y\|^2)}{(f(x)f(y))^{\frac{p-1}{2d}}} 
	\leq (\rho_{x,\epsilon})^{\frac{p-1}{d}} (1+\kappa\epsilon^2)\D_{f,\text{Euc}}(x,y),
	\end{align*}
	which proves the upper bound.
	Now let $\gamma_0(t)$ be a path achieving $\L_p^p(x,y)$; 
	note that since $\L_p^p(x,y) \leq \frac{\D(x,y)}{\fmin(x,\epsilon)^{\frac{p-1}{d}}}$,
	the path $\gamma_0$ is contained in $B_{\L_p^p}(x,\epsilon(1+\kappa \epsilon^2)/\fmin(x,\epsilon)^{\frac{p-1}{d}})$. 
	Thus
	\begin{align*}
	\L_p^p(x,y) &= \int_0^1 \frac{1}{f(\gamma_0(t))^{\frac{p-1}{d}}} |\gamma_0'(t)|\ dt 
	\geq \frac{\D(x,y)}{\fmax(x,\epsilon)^{\frac{p-1}{d}}} 
	\geq \frac{\D(x,y)}{\fmax(x,\epsilon)^{\frac{p-1}{d}}} \cdot \frac{\fmin(x,\epsilon)^{\frac{p-1}{d}}}{(f(x)f(y))^{\frac{p-1}{2d}}}
	\end{align*}
	so that
	\begin{align*}
	\L_p^p(x,y)&\geq \frac{\D(x,y)}{(\rho_{x,\epsilon})^{\frac{p-1}{d}}(f(x)f(y))^{\frac{p-1}{2d}}} 
	\geq \frac{\|x-y\|}{(\rho_{x,\epsilon})^{\frac{p-1}{d}}(f(x)f(y))^{\frac{p-1}{2d}}}
	= \frac{1}{(\rho_{x,\epsilon})^{\frac{p-1}{d}}}\D_{f,\text{Euc}}(x,y).
	\end{align*}

\end{proof}

\section{Proofs for Section~\ref{sec:Spanners}}
\label{app:Proofs_for_Spanners}

\begin{proof}[Proof of Lemma~\ref{lemma:Volume_of_Region}] Let $s := \|x - y\|$ and choose a coordinate system $x^{(1)},\ldots, x^{(n)}$ such that $y = (-s/2,0,\ldots 0)$, $x = (s/2,0,\ldots,0)$ and $x_{M} = \mathbf{0}$. $\mathcal{D}_{\alpha,p}(x,y)$ is now the interior of: 
	\begin{small}
		\[\left( \left(x^{(1)}+\frac{s}{2}\right)^{2} + (x^{(2)})^{2} + \ldots + (x^{(n)})^{2}\right)^{p/2} + \left( \left(x^{(1)} - \frac{s}{2}\right)^{2} + (x^{(2)})^{2} + \ldots + (x^{(n)})^{2}\right)^{p/2} = \alpha s^{p}.\]
	\end{small}
	In spherical coordinates the boundary of this region may be expressed as:
	\begin{align}
		\label{equ:region}
		\left(r^2+ sr\cos \theta_1+ s^2/4\right)^{p/2} + \left(r^2-s r\cos \theta_1+s^2/4\right)^{p/2} &=\alpha s^p
	\end{align}
	where $(x^{(1)})^{2} + \ldots +(x^{(n)})^2 = r^2$ and $x_1 = r \cos\theta_1$. Define $r=H(\theta_1)$ as the unique positive solution of (\ref{equ:region}).  Implicitly differentiating in $\theta_1$ yields
	\begin{small}
		\begin{align*}
			&\frac{p}{2}\left(r^2+s r\cos(\theta_1)+\frac{s^2}{4}\right)^{\frac{p}{2}-1}\left(2r\frac{dr}{d\theta_1}-s r \sin(\theta_1)+s \cos(\theta_1)\frac{dr}{d\theta_1}\right) \\
			+&\frac{p}{2}\left(r^2-s r\cos(\theta_1)+\frac{s^2}{4}\right)^{\frac{p}{2}-1}\left(2r\frac{dr}{d\theta_1}+sr \sin(\theta_1)-s \cos(\theta_1)\frac{dr}{d\theta_1}\right) = 0 \, .
		\end{align*}
	\end{small}
	Solving for $\frac{dr}{d\theta_1}$ and setting the result to 0 yields
	\begin{small}
		\begin{align*}
			\left[ \left(r^2+s r\cos(\theta_1)+\frac{s^2}{4}\right)^{\frac{p-2}{2}} - \left(r^2-s r\cos(\theta_1)+\frac{s^2}{4}\right)^{\frac{p-2}{2}} \right] \sin(\theta_1):=\text{(I)}\cdot\text{(II)} &=0.
		\end{align*}
	\end{small}
	Thus we obtain two solutions to $\frac{dr}{d\theta_1}=0$:
	\begin{align*}
		\text{(I)} = 0 \Rightarrow \cos(\theta_1) = 0 \Rightarrow \theta_1 =\frac{\pi}{2} \qquad \text{(min.)} \quad 
		\text{(II)} = 0 \Rightarrow \sin(\theta_1) = 0 \Rightarrow \theta_1 =0 \qquad \text{(max.)}
	\end{align*}
	Thus the minimal radius occurs when $\theta_1 = \frac{\pi}{2}$. Substituting $\theta_1 = \frac{\pi}{2}$ into (\ref{equ:region}) yields:\[ r = s \sqrt{\alpha^{2/p}\big/4^{1/p} - 1/4} = \|x - y\|\sqrt{\alpha^{2/p}\big/4^{1/p} - 1/4}.\] Hence $B(x_{M},r)\subset \mathcal{D}_{\alpha,p}(x_{i},x_{j})$, as desired. To see $\mathcal{D}_{\alpha,p}(x,y) \subset B(x,r)$ observe that if $z \notin B(x,r)$ then:
	\begin{equation}
		\|x-z\| > r = \|x-y\| \Rightarrow \|x-z\|^{p} > \alpha\|x-y\|^{p} \quad \text{ for all } \alpha \in (0,1] \text{ and } p \geq 1
	\end{equation}
	hence $z$ cannot be in $\mathcal{D}_{\alpha,p}(x,y)$.
\end{proof}


\begin{lem}
\label{lem:Ball_radius_reach}
With assumptions and notation as in Theorem \ref{theorem:EuclideanCase}, \begin{small}$B\left(\tilde{x}_{M},r_{2}^{\star}\right)\subset B(x_{M},r_1^{\star})$\end{small}.
\end{lem}

\begin{proof}
By \cite[Lemma 1]{boissonnat2019reach}, $\|x_{M} - \tilde{x}_{M}\| \leq \zeta - \sqrt{\zeta^2 - r^2/4}< r^{2}/(4\zeta)$.  Now, suppose $y\in B(\tilde{x}_{M},r_{2}^{\star})$.  Then 
\[\|x_{M}-y\|\le \|x_{M}-\tilde{x}_{M}\|+\|\tilde{x}_{M}-y\| \le r^{2}\big/(4\zeta)+r\left(\sqrt{1/4^{1/p} - 1/4}- r/(4\zeta)\right)=r_{1}^{\star},\]
so that $y\in B(x_{M},r_{1}^{\star})$, as desired.

\end{proof}

\begin{lem}
\label{lem:r_star_vol_lower_bound}
With notation and assumptions as in Theorem~\ref{theorem:EuclideanCase}:
\begin{align}
	\Prob\left[x_{i_j}\in B(\tilde{x}_{M},r_{2}^{\star})  \ | \ x_{i_j}\in B(x,r)\right] \geq \frac{3}{4}\kappa_{0}^{-2}\frac{f_{\min}}{f_{\max}}\left(\frac{1}{4^{1/p}} - \frac{1}{4}\right)^{d/2}.
\end{align}
\end{lem}

\begin{proof}
By the definition of conditional probability and $B(\tilde{x}_{M},r_{2}^{\star})\subset B(x,r)$:
\begin{align}
\Prob\left[x_{i_j}\in B(\tilde{x}_{M},r_{2}^{\star})  \ | \ x_{i_j}\in B(x,r)\right] = \int_{B(\tilde{x}_{M},r_{2}^{\star})\cap \mathcal{M}}f\bigg/\displaystyle\int_{B(x,r)\cap\mathcal{M}}f.
\label{eq:App_Cond_Prob}
\end{align}
By Definition~\ref{defn:V}, $\int_{B(\tilde{x}_{M},r_{2}^{\star})\cap \mathcal{M}}f \ge f_{\min}\text{vol}\left(B(\tilde{x}_{M},r_{2}^{\star})\cap \mathcal{M}\right) \geq f_{\min}\kappa_{0}^{-1}(r_2^{\star})^{d}\text{vol}\left(B(0,1)\right)$ and $\int_{B(x,r)\cap\mathcal{M}}f\leq f_{\max}\text{vol}\left(B(x,r)\cap\mathcal{M}\right)\leq f_{\max}\kappa_{0} r^{d}\text{vol}\left(B(0,1)\right)$. Returning to \eqref{eq:App_Cond_Prob}:
\begin{equation}
\Prob\left[x_{i_j}\in B(\tilde{x}_{M},r_{2}^{\star})  \ | \ x_{i_j}\in B(x,r)\right] \geq \kappa_{0}^{-2}\frac{f_{\min}}{f_{\max}}\left(\frac{r_2^{\star}}{r}\right)^{d}.
\label{eq:App_Cond_Prob_2}
\end{equation}
The result follows by noting 

\[\left(\frac{r_{2}^{\star}}{r}\right)^{d} = \left(\sqrt{\frac{1}{4^{1/p}}-\frac{1}{4}}-\frac{r}{4\zeta}\right)^{d} \ge\left(\sqrt{\frac{1}{4^{1/p}}-\frac{1}{4}}-\frac{1}{4d}\sqrt{\frac{1}{4^{1/p}}-\frac{1}{4}}\right)^{d}\ge \frac{3}{4}\left(\frac{1}{4^{1/p}}-\frac{1}{4}\right)^{d/2}.\]

\end{proof}

\section{Local Analysis: Proof of Corollary 2.4}
\label{app:Proof_of_Cor}
\begin{proof}
	First note that by Theorem 2.3
	\begin{align*}
		\left|\L_p^2(x,y) - \D_{f,\text{Euc}}^{2/p}(x,y)\right| &= \left|\left( \L_p(x,y) - \D_{f,\text{Euc}}^{1/p}(x,y)\right) \left( \L_p(x,y) + \D_{f,\text{Euc}}^{1/p}(x,y)\right)\right| \\
		&\leq \left| \left(C_1\epsilon^{1+\frac{1}{p}}+C_2\epsilon^{2+\frac{1}{p}}+O(\epsilon^{3+\frac{1}{p}}) \right) \frac{\epsilon^{1/p}(1+O(\epsilon^2))}{\fmin^{\frac{p-1}{pd}}}\right| \\
		&=\tilde{C}_1 \epsilon^{1+2/p}+\tilde{C}_2\epsilon^{2+2/p}+O(\epsilon^{3+2/p}) \, .
	\end{align*}
	Thus
	\begin{align*}
		\frac{\left|h_{\frac{1}{p}}\left(\L_p^p(x,y)/\epsilon\right) - h_{\frac{1}{p}}\left(\D_{f,\text{Euc}}(x,y)/\epsilon\right) \right|}{h_{\frac{1}{p}}\left(\L_p^p(x,y)/\epsilon\right)} &= \left|1 - \exp\left(-\frac{\D_{f,\text{Euc}}^{2/p}(x,y)}{\epsilon^{2/p}} + \frac{\L_p^2(x,y)}{\epsilon^{2/p}} \right) \right| \\
		&= \left| 1 - \exp( \pm [\tilde{C}_1 \epsilon+\tilde{C}_2\epsilon^{2}+O(\epsilon^{3}) ] ) \right| \\
		&\leq \tilde{C}_1 \epsilon + \left(\tilde{C}_2 +\frac{1}{2}\tilde{C}_1^2\right)\epsilon^2 + O(\epsilon^3).
	\end{align*}
\end{proof}	

\section{Additional Clustering Results}
\label{app:Additional_Clustering_Results}

\subsection{Spectral Clustering with Self-Tuning}

Clustering results for Euclidean SC with self-tuning (ST) are shown in Figure \ref{fig:SC_ST}.  Similarly to Euclidean SC and Euclidean SC with the diffusion maps normalization, SC+ST can cluster well given $K$ a priori but struggles to simultaneously learn $K$ and cluster accurately. 

\begin{figure}[htbp!]
	\centering
	\begin{subfigure}{0.32\textwidth}
		\centering
		\includegraphics[width=\textwidth]{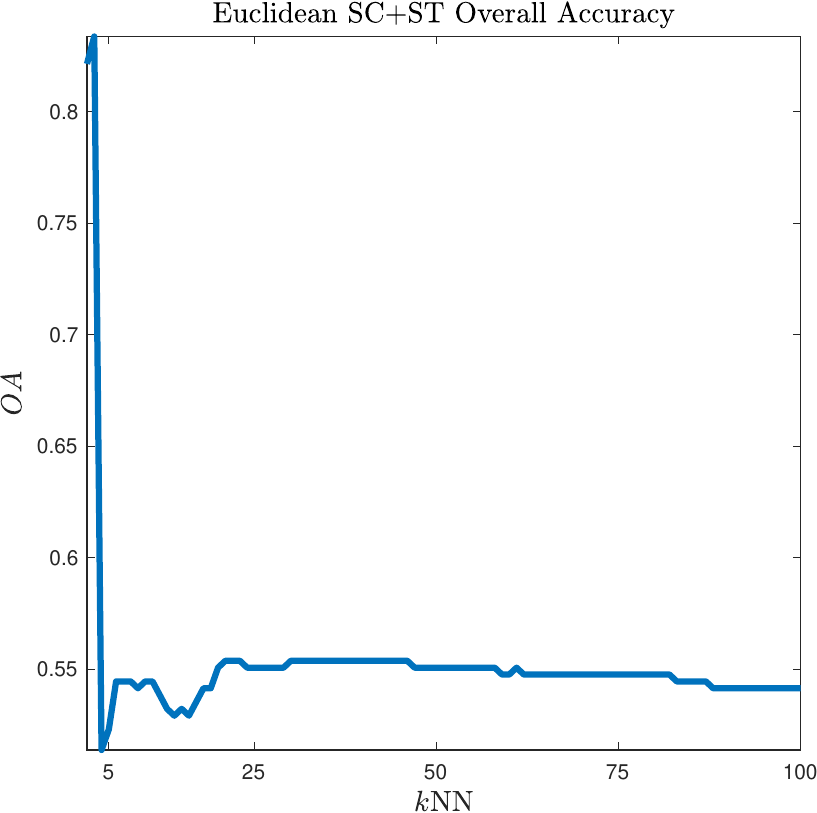}
		\subcaption{\tiny{OA, Euc. SC+ST, Two Rings}}
	\end{subfigure}
	\begin{subfigure}{0.32\textwidth}
		\centering
		\includegraphics[width=\textwidth]{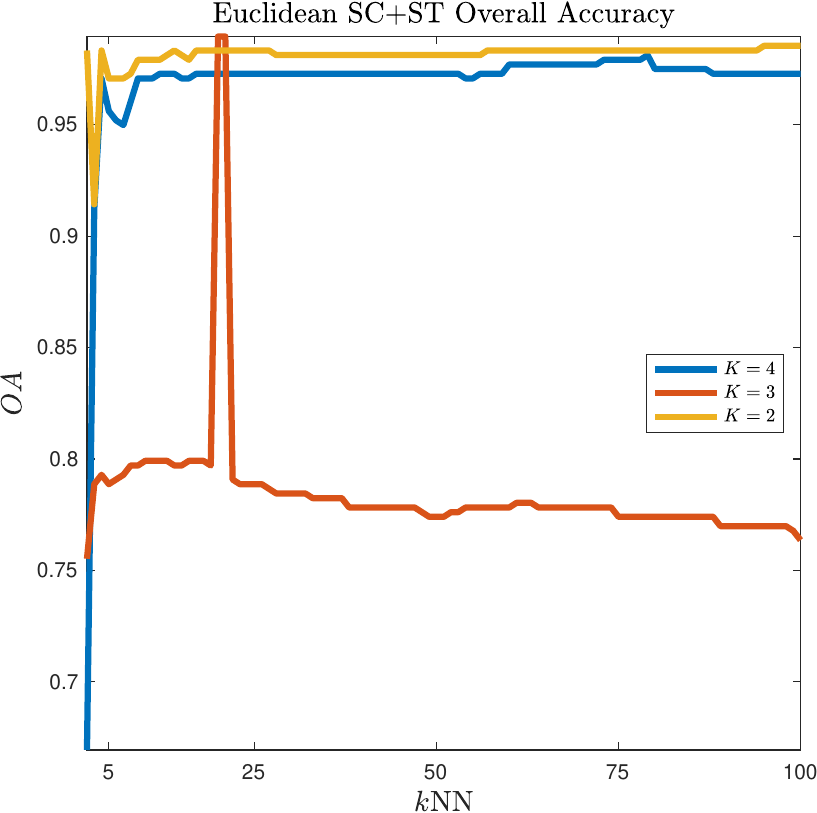}
		\subcaption{\tiny{OA, Euc. SC+ST, Long Bottleneck}}
	\end{subfigure}
	\begin{subfigure}{0.32\textwidth}
		\centering
		\includegraphics[width=\textwidth]{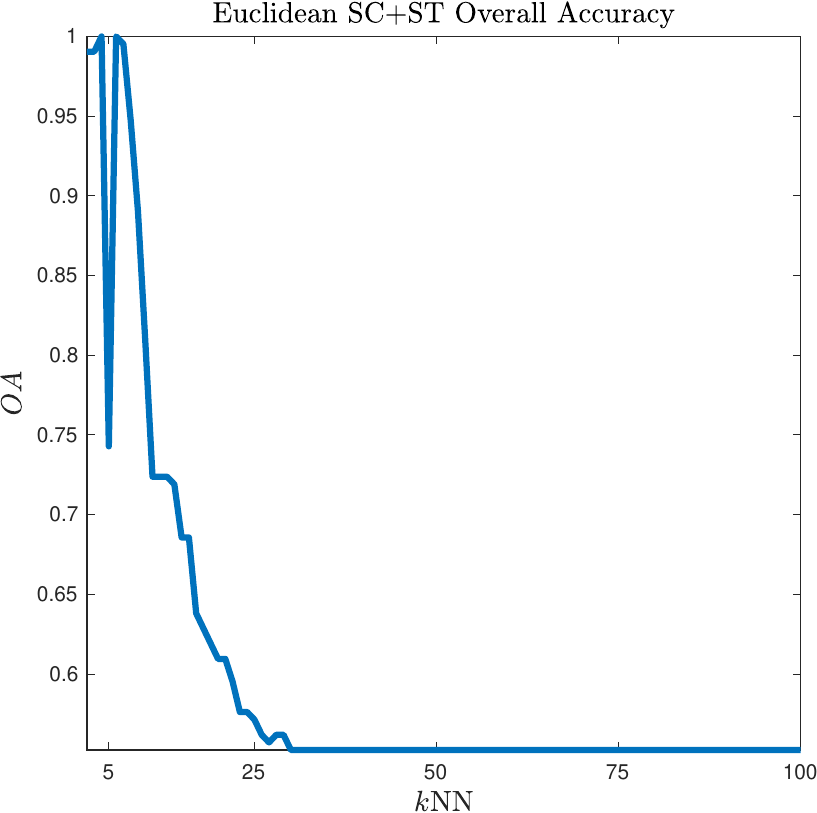}
		\subcaption{\tiny{OA, Euc. SC+ST, Short Bottleneck}}
	\end{subfigure}
	\begin{subfigure}{0.32\textwidth}
		\centering
		\includegraphics[width=\textwidth]{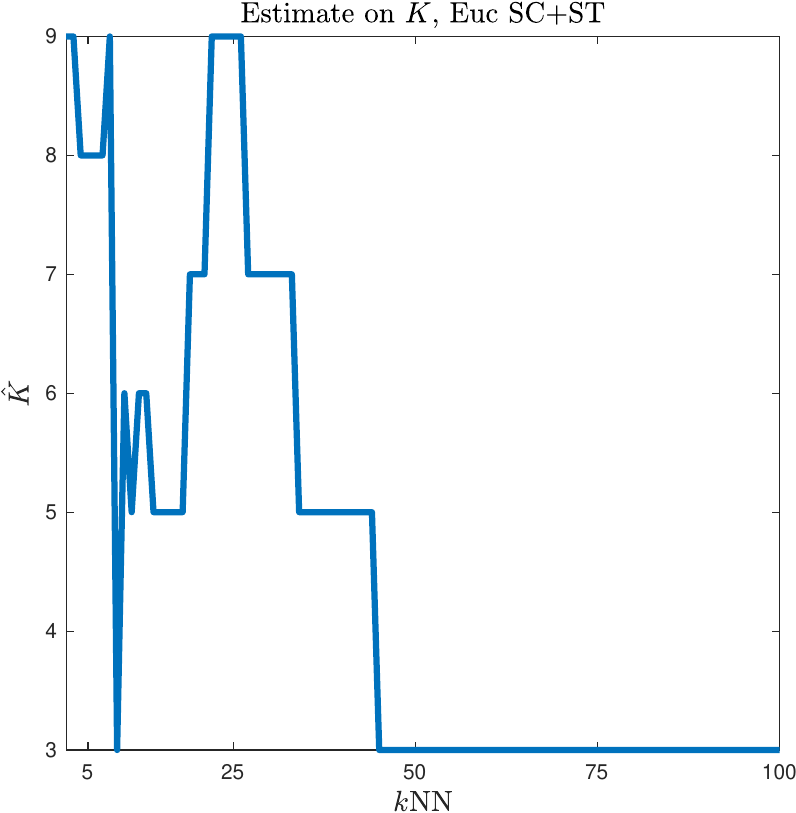}
		\subcaption{\tiny{$\hat{K}$, Euc. SC+ST, Two Rings}}
	\end{subfigure}
	\begin{subfigure}{0.32\textwidth}
		\centering
		\includegraphics[width=\textwidth]{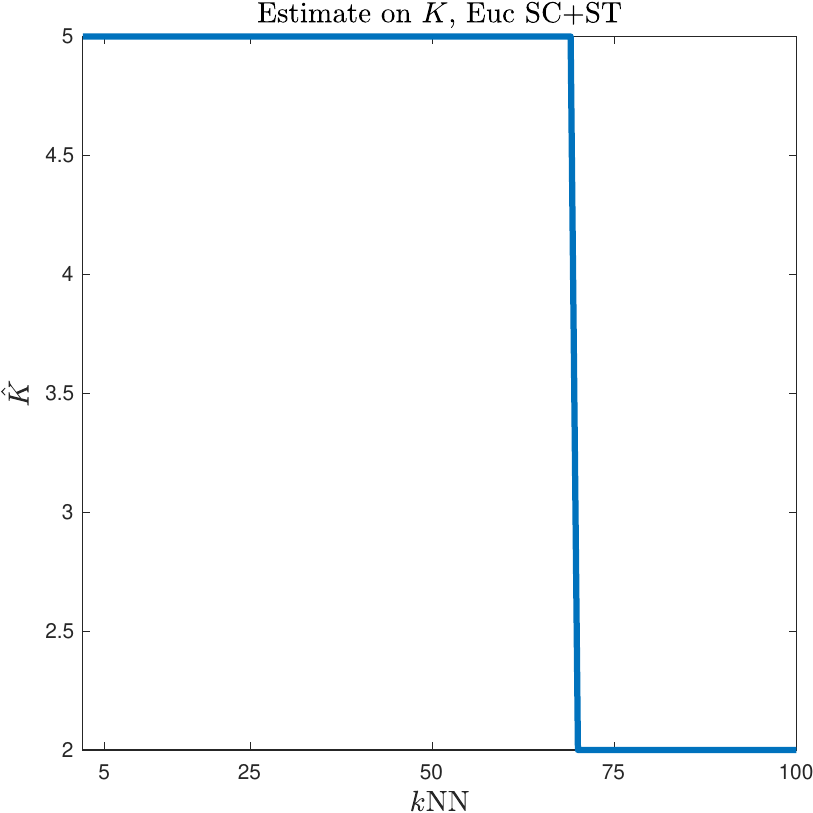}
		\subcaption{\tiny{$\hat{K}$, Euc. SC+ST, Long Bottleneck}}
	\end{subfigure}
	\begin{subfigure}{0.32\textwidth}
		\centering
		\includegraphics[width=\textwidth]{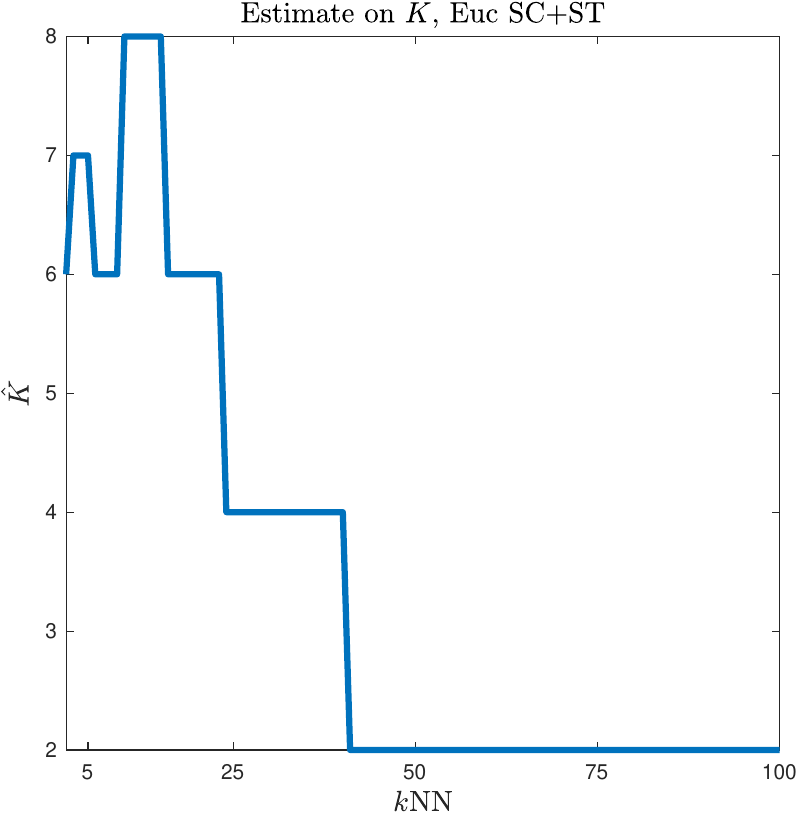}
		\subcaption{\tiny{$\hat{K}$, Euc. SC+ST, Short Bottleneck}}
	\end{subfigure}
	\caption{\label{fig:SC_ST}Clustering results with self-tuning SC.}
\end{figure}

\subsection{Clustering with Fiedler Eigenvector}

Clustering results on the Two Rings and Short Bottleneck data (both of which have $K=2$) appear in Figure \ref{fig:FiedlerPlots}.  The results are very similar to running $K$-means on the second eigenvector of $L$.

\begin{figure}[h]
	\centering
	\begin{subfigure}[t]{0.3\textwidth}
		\centering
		\includegraphics[width=\textwidth]{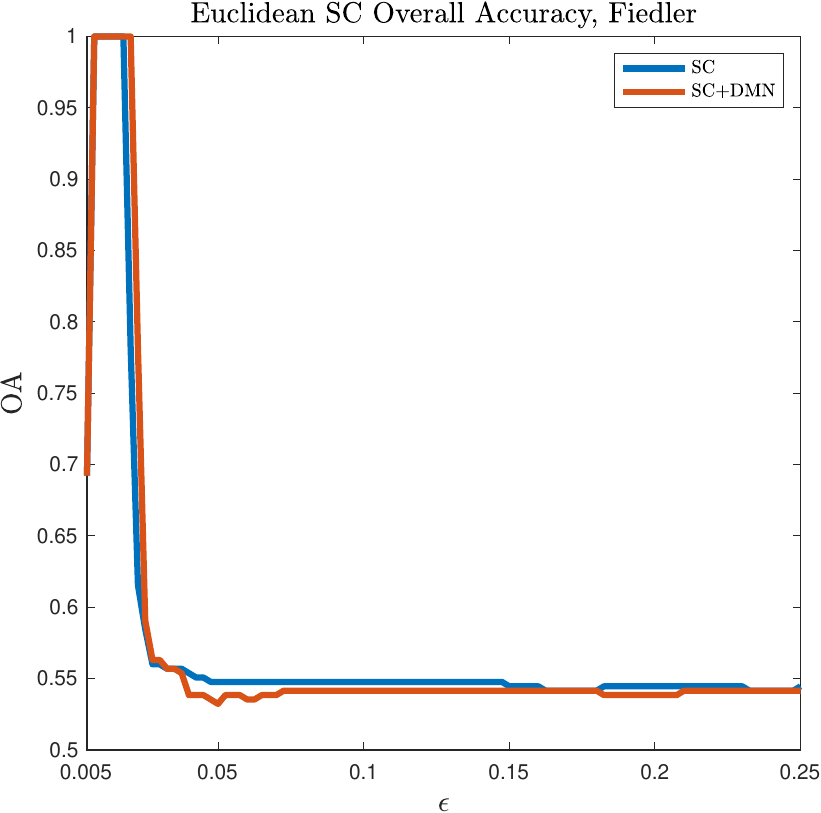}
		\subcaption{\tiny{OA, Euc. SC, Two Rings}}
	\end{subfigure}
	\begin{subfigure}[t]{0.3\textwidth}
		\centering
		\includegraphics[width=\textwidth]{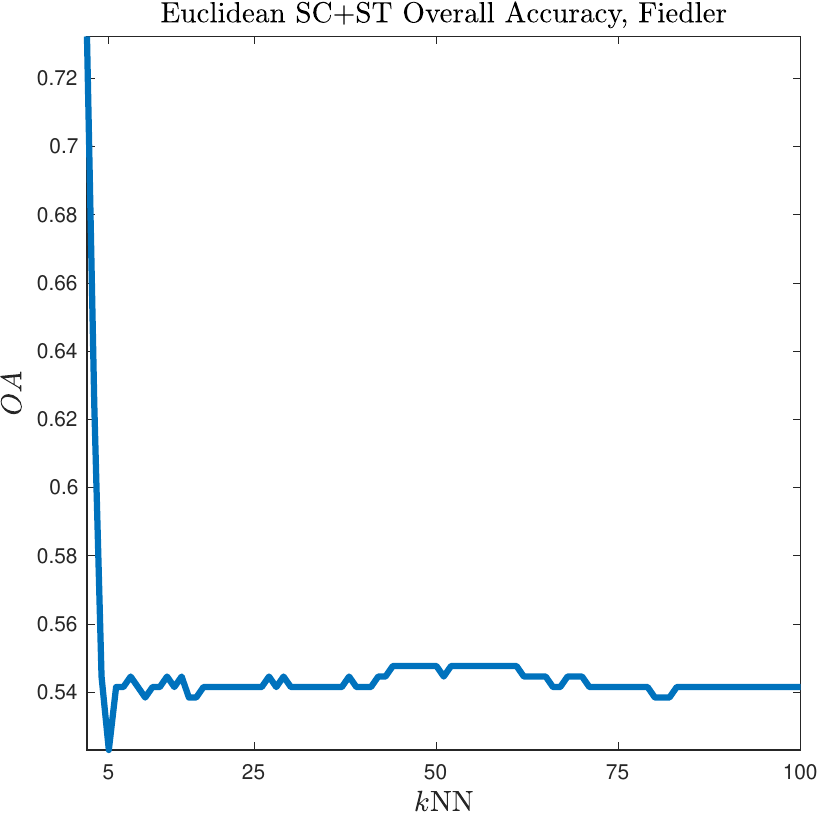}
		\subcaption{\tiny{OA, Euc. SC+ST, Two Rings}}
	\end{subfigure}
	\begin{subfigure}[t]{0.33\textwidth}
		\centering
		\includegraphics[width=\textwidth]{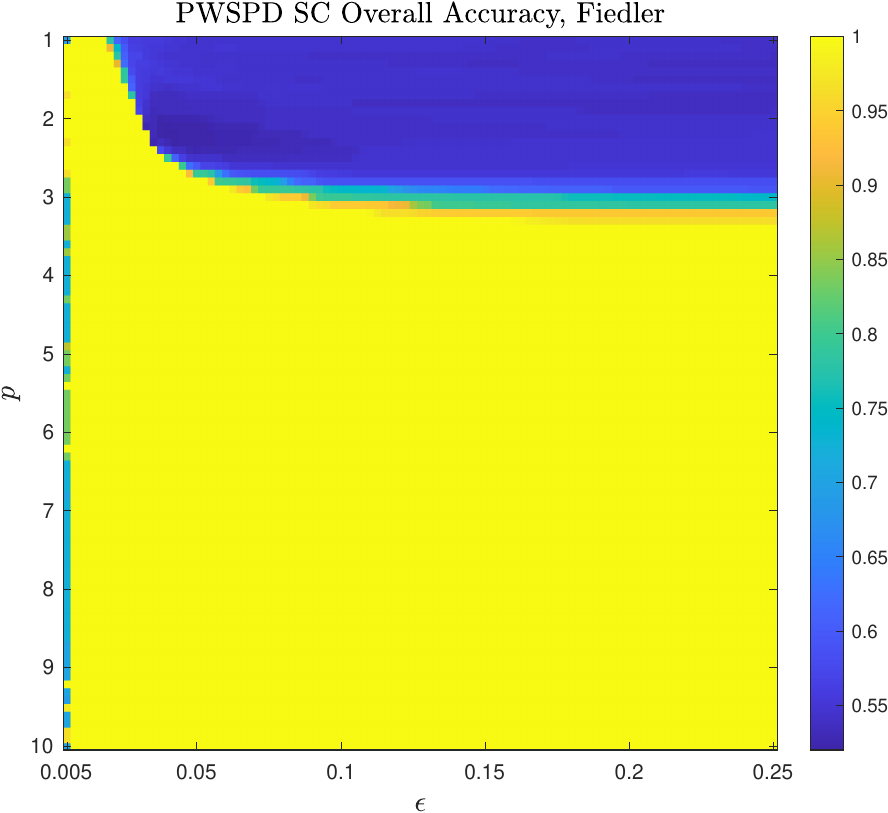}
		\subcaption{\tiny{OA, PWSPD SC, Two Rings}}
	\end{subfigure}
	\begin{subfigure}[t]{0.3\textwidth}
		\centering
		\includegraphics[width=\textwidth]{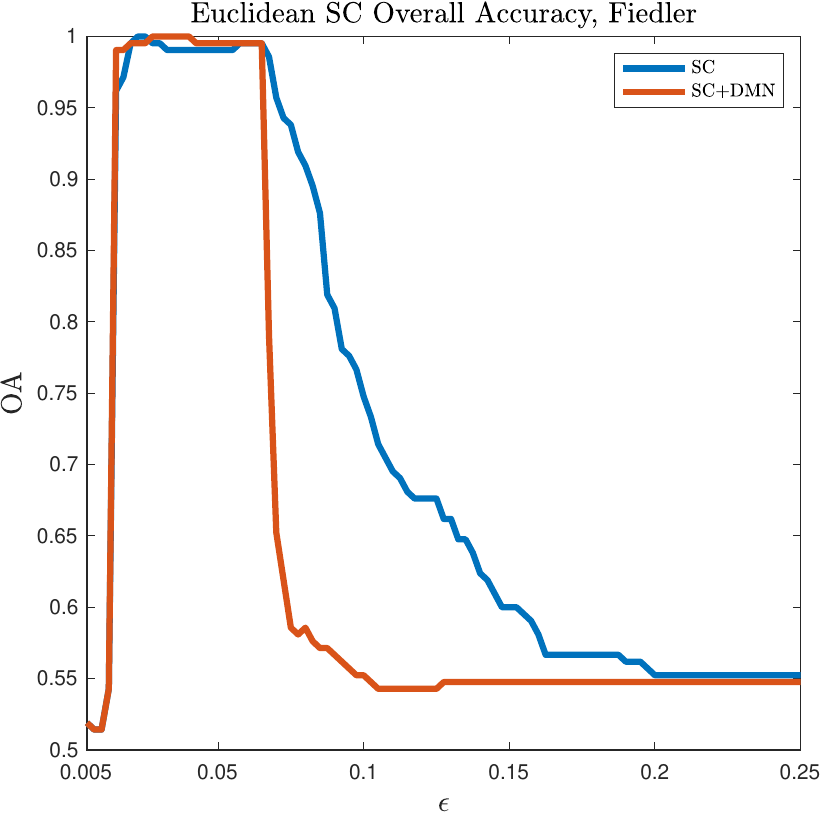}
		\subcaption{\tiny{OA, Euc. SC, Short Bottleneck}}
	\end{subfigure}
	\begin{subfigure}[t]{0.3\textwidth}
		\centering
		\includegraphics[width=\textwidth]{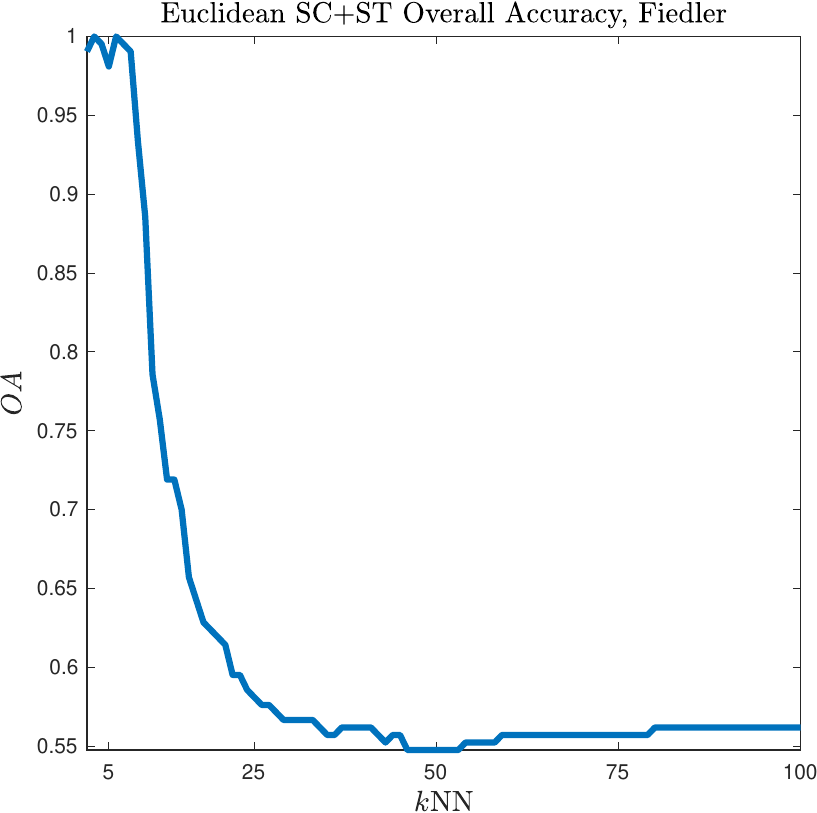}
		\subcaption{\tiny{OA, Euc. SC+ST, Short Bottleneck}}
	\end{subfigure}
	\begin{subfigure}[t]{0.33\textwidth}
		\centering
		\includegraphics[width=\textwidth]{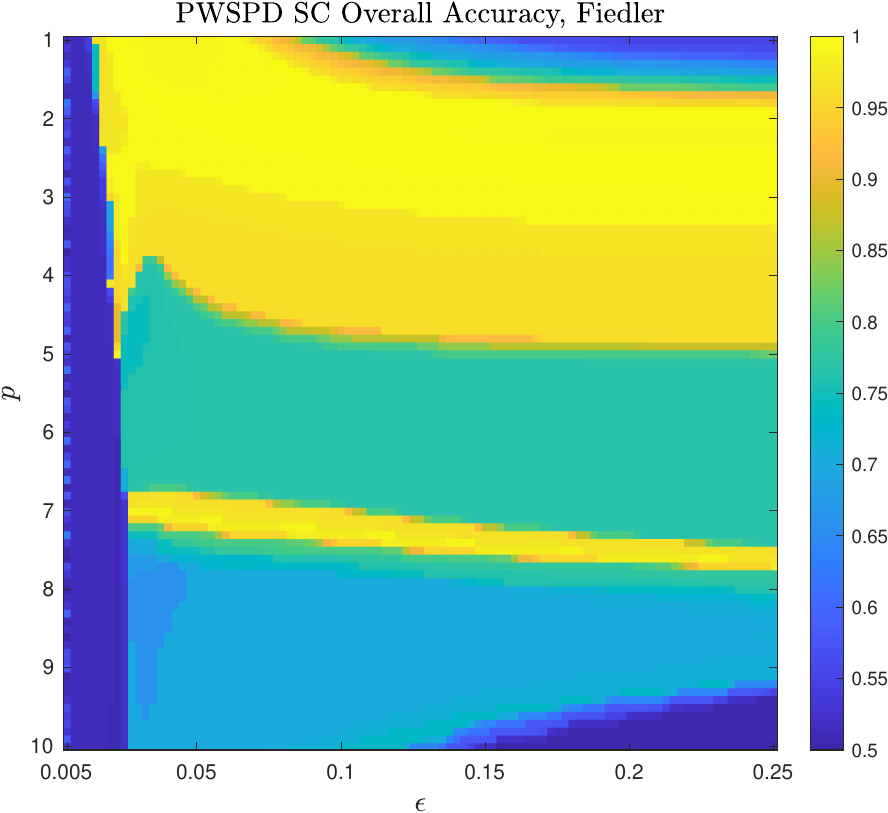}
		\subcaption{\tiny{OA, Euc. SC, Short Bottleneck}}
	\end{subfigure}
	\caption{\label{fig:FiedlerPlots}Clustering with Fiedler eigenvector.}
\end{figure}

\section{PWSPD Spanners: Intrinsic Path Distance}\label{subsec:PWSPD_Manifold}

In this section we assume that $\mathcal{M}$ is a compact Riemannian manifold with metric $g$, but we {\em do not} assume that $\mathcal{M}$ is isometrically embedded in $\mathbb{R}^{D}$.  Let us first establish some notation. For precise definitions of terms in italics, we refer the reader to \cite{Petersen2006}. 

\begin{itemize}
     \item For any $v,w \in T_{x}\mathcal{M}$, $R_{x}(v,w)$ denotes the {\em Riemannian curvature} while $K_{x}(v,w)$ denotes the {\em sectional curvature}. In this notation, $R_{x}(v,w)$ is a matrix while $K_{x}(v,w)$ is a scalar. These notions of curvature are related:
    \begin{equation}
 \langle R_{x}(w,v)w,v\rangle = K_{x}(v,w)\left( \|v\|^{2}\|w\|^{2} - \langle v,w\rangle \right).
 \label{eq:Rearranged_Sec_Curvature_main}
\end{equation}
We shall drop the ``$x$'' in the subscript when it is clear from context. Let $K_{\min}$ be such that $K_{x}(v,w) \geq K_{\min}$ for all $x\in\mathcal{M}$ and $v,w\in T_{x}\mathcal{M}$. Because $\mathcal{M}$ is compact, such a $K_{\min}$ exists. Similarly, $S_{x}$ denotes the {\em scalar curvature} and $S_{\min} = \displaystyle\min_{x\in\mathcal{M}}S_{x}$ while $S_{\max} = \displaystyle\max_{x\in\mathcal{M}}S_{x}$. 
	\item For $r > 0$, define $V_{\min}(r) := \displaystyle\min_{x\in\mathcal{M}}\text{vol}(B_{\D}(x,r))$, $V_{\max}(r) = \displaystyle\max_{x\in\mathcal{M}}\text{vol}(B_{\D}(x,r))$. Both quantities can be expressed as a Taylor series in $r$, with coefficients depending on the curvature of $\mathcal{M}$ \cite{gray1974volume}:
	\begin{align}
V_{\min}(r)  &= \left(1 - \frac{S_{\max}}{6(d+2)}r^{2} + O(r^{4})\right) \text{vol}(B(0,1))r^{d}, \label{eq:SmallBall1}\\
V_{\max}(r) &= \left(1 - \frac{S_{\min}}{6(d+2)}r^{2} + O(r^{4})\right) \text{vol}(B(0,1))r^{d}.
\label{eq:SmallBall2}
\end{align}	
    \item For any $x\in\mathcal{M}$, $\exp_{x}: T_{x}\mathcal{M}\to \mathcal{M}$ denotes the {\em exponential map.} By construction, $\D(x,\exp_{x}(v)) = g_{x}(v,v)$. The exponential map is used to construct {\em normal coordinates}.
    \item The {\em injectivity radius} at $x$ is denoted by $\text{inj}(x)$, while $\displaystyle\text{inj}(\mathcal{M})$ denotes the injectivity radius of $\mathcal{M}$.  Because $\mathcal{M}$ is closed, the injectivity radius is bounded away from 0.
\end{itemize}

\begin{prop}
In normal coordinates the metric has the following expansion: \[g_{kl}(x) = \delta_{kl} + \frac{1}{3} R_{ijkl}x^{i}x^{j} + O(\|x\|^3),\]
where $\delta_{kl} = 1$ if $k=l$ and is zero otherwise. Moreover, for any $v,w\in B(0,1)$ and $ r\leq \text{inj}(\mathcal{M})$:
\begin{equation}
\D\left(\exp_p(rv),\exp_p(rw)\right)^2 = r^2\|v - w\|^{2} + \frac{r^4}{3}\langle R(w,v)w,v\rangle +  O(r^5).
\label{eq:Metric_Expansion}
\end{equation}
\end{prop}

\begin{proof}
See the discussion in \cite{Petersen2006} above Theorem 5.5.8 and Exercises 5.9.26 and 5.9.27.
\end{proof}Combining (\ref{eq:Rearranged_Sec_Curvature_main}) and \eqref{eq:Metric_Expansion} yields:\[\D\left(\exp_p(rv),\exp_p(rw)\right)^2  = r^2\|v - w\|^{2} + \frac{r^4K(v,w)}{3}\left(\|v\|^{2}\|w\|^{2} - \langle v,w\rangle^{2}\right)   + O(r^{5}).\]
For any $x_i\in\mathcal{X}$ let $r_{i,k}$ denote the distance to the $k$NN of $x_i$. Because $r_{i,k} \to0^{+}$ as $n\to\infty$, we have that $r_{i,k} \leq \text{inj}(\mathcal{M})$ for all $x_i$, almost surely for $n$ large enough. \\

The proof of Theorem \ref{thm:k_nn_Riemannian_case} proceeds in a similar manner to the proof of Theorem 3.9. However, care must be taken to account for the curvature. Note that for any quantity $c := c(n)$ implicitly dependent on $n$, we shall say that $c = o(1)$ if for any $\epsilon > 0 $ there exists an $N$ such that for all $n > N$ we have that $c(n) < \epsilon$.  Let $\mathcal{G}_{\mathcal{M},\mathcal{X}}^{p}$ denote the complete graph on $\mathcal{X}$ with edge weights $\D(x_i,x_j)^{p}$ while $\mathcal{G}^{p,k}_{\mathcal{M},\mathcal{X}}$ shall denote the $k$NN subgraph of $\mathcal{G}_{\mathcal{M},\mathcal{X}}^{p}$.  

\begin{thm}
\label{thm:k_nn_Riemannian_case}
Let $(\mathcal{M},g)$ be a closed and compact Riemannian manifold. Let $\mathcal{X} = \{x_i\}_{i=1}^{n}$ be drawn i.i.d. from $\mathcal{M}$ according to a probability distribution with continuous density $f$ satisfying $0<f_{\min}\le f(x) \leq f_{\max}$ for all $x \in \mathcal{M}$.  For $p > 1$, $n$ sufficiently large, and
\begin{equation}
k \geq 3\left[1 + o(1)\right]\left[\frac{f_{\max}}{f_{\min}}\right] \left[\frac{4}{4^{1-1/p}(1-o(1))^{2/p} - 1}\right]^{d/2}\log(n),
\end{equation}
$\mathcal{G}^{p,k}_{\mathcal{M},\mathcal{X}}$ is a $1$-spanner of $\mathcal{G}^{p}_{\mathcal{M},\mathcal{X}}$ with probability at least $1-1/n$.
\end{thm} 

\begin{proof}
As in Theorem~3.9, we prove this by showing that every edge of $\mathcal{G}^{p}_{\mathcal{M},\mathcal{X}}$ not contained in $\mathcal{G}^{p,k}_{\mathcal{M},\mathcal{X}}$ is not critical. As before, w.h.p. $\ell_{\mathcal{M},p}(x,y) \to 0$ uniformly in $x,y$ \cite{Hwang2016_Shortest}. So, let $n$ be large enough so that $\Prob\left[\ell_{\mathcal{M},p}(x,y)\le \text{inj}(\mathcal{M}) \text{ for all } x,y\in \mathcal{X}\right]\ge  \left(1-\frac{1}{2n}\right)$.  

Pick any $x,y\in\mathcal{X}$ which are not $k$NNs and let $r:=\D(x,y)$.  If $r> \text{inj}(\mathcal{M})$, then $\ell_{p}(x,y)< \D(x,y)$ and thus the edge $\{x,y\}$ is not critical.  So, suppose without loss of generality in what follows that $r\le \text{inj}(\mathcal{M})$. Let $x_{i_1},\ldots, x_{i_k}$ denote the $k$NNs of $x$. Because $y$ is not a $k$NN of $x$, $\D(x,x_{i_j}) \leq \D(x,y) = r$ for $j=1,\ldots, k$. We show $\{x,y\}$ is not critical by showing there exists an $\ell\in\{1,\dots, k\}$ such that $\D(x,x_{i_\ell})^{p} + \D(x_{i_\ell}, y)^{p} \leq \D(x, y)^{p}$ with probability at least $1-1/n$. 

Let $x_{M}$ be the midpoint of the (shortest) geodesic from $x$ to $y$. As $r \leq \text{inj}(\mathcal{M})$ the exponential map $ \exp := \exp_{x_M}$ is a diffeomorphism onto $B_{\D}(x_M,r)$. Choose Riemannian normal coordinates centered at $x_{M}$ such that $y = \exp_{x_M}(\frac{r}{2}e_1)$ and $x = \exp_{x_M}(-\frac{r}{2}e_1)$. For any $z\in B_{\D}(x_M,r)$, we may write $z = \exp_{x_M}(rv)$ for some $v\in B(0,1) \subset T_{x_M}\mathcal{M}$. Now, by \eqref{eq:Metric_Expansion}
\begin{align*}
\D(x,z)^{2} &=   \D\left(\exp_{x_M}\left(-\frac{r}{2}e_1\right),\exp_{x_m}(rv)\right)^{2} \\
    & = r^2\left\|\frac{1}{2}e_1 + v\right\|^{2} - \frac{r^4K(e_1,v)}{3}\left(\frac{1}{4}\|v\|^{2} - \frac{1}{4}\langle e_1,v\rangle^{2}\right)   + O(r^{5}) \\
    & \leq r^2\left\|\frac{1}{2}e_1 + v\right\|^{2} - \frac{r^{4}K_{\min}}{12}\left(\|v\|^{2} - \langle e_1,v\rangle^{2}\right) + O(r^{5})
\end{align*}
and similarly: $ \D(z,y)^{2} \leq r^2\left\|-\frac{1}{2}e_1 + v\right\|^{2} - \frac{r^{4}K_{\min}}{12}\left(\|v\|^{2} - \langle e_1,v\rangle^{2}\right) + O(r^{5}).$ We split the analysis into the case where $K_{\min} \geq 0$ and where $K_{\min} < 0$. 

\underline{Case $K_{\min}\ge 0$ (Positive Sectional Curvature):} If $K_{\min} \geq 0$ then the terms proportional to $r^4$ are strictly non-positive, and hence may be dropped. We get:
\begin{align*}
\D(x,z)^{2} &\leq r^{2}\left\|\frac{1}{2}e_1 + v\right\|^{2} + O(r^5) \text{ and } \D(z,y)^{2} \leq r^2\left\|-\frac{1}{2}e_1 + v\right\|^{2} + O(r^5),
\end{align*}and hence $\D(x,z)^{p} + \D(z,y)^{p} \leq r^{p}\left(\left\|\frac{1}{2}e_1 + v\right\|^{p} + \left\|-\frac{1}{2}e_1 + v\right\|^{p}\right) + O(r^{p+3}).$  Thus, for $r$ sufficiently small we may guarantee that 
\begin{equation}
\D(x,z)^{p} + \D(z,y)^{p} \leq r^p = \D(x,y)^{p} 
\label{eq:Pos_curvature_distance_bound}
\end{equation}
by ensuring $\left\|\frac{1}{2}e_1 + v\right\|^{p} + \left\|-\frac{1}{2}e_1 + v\right\|^{p} \leq 1 - \alpha,$ where $\alpha\in (0,1)$ is such that the $O(r^{p+3})$ term is less than $\alpha r^{p}$. As $r\to 0$ with $n\to \infty$, we observe that $\alpha = o(1)$. 

\underline{Case $K_{min}<0$ (Negative Sectional Curvature):} If $K_{\min} < 0$ then one can upper bound the terms proportional to $r^4$ by $-r^{4}K_{\min}/12$ to obtain:
\begin{align*}
& \D(x,z)^{2} \leq r^2\|\frac{1}{2}e_1 + v\|^{2} - \frac{r^{4}K_{\min}}{12} + O(r^{5}),\\
&\D(z,y)^{2} \leq r^2\|-\frac{1}{2}e_1 + v\|^{2} - \frac{r^{4}K_{\min}}{12} + O(r^{5}).
\end{align*}
and so:
\begin{align}
\D(x,z)^{p} +  \D(z,y)^{p} \nonumber \leq & r^p\left(\left\|\frac{1}{2}e_1 + v\right\|^{p}  + \left\|-\frac{1}{2}e_1 + v\right\|^{p}\right) \nonumber \\
& + r^{p+2}\frac{-K_{\min}}{12}\left(\left\|\frac{1}{2}e_1 + v\right\|^{p-2}  + \left\|-\frac{1}{2}e_1 + v\right\|^{p-2}\right) + O(r^{p+3}). \label{eq:negative_curvature_bottleneck} 
\end{align}
As in the positive sectional curvature case, one can guarantee:
\begin{equation}
 \D(x,z)^{p} + \D(z,y)^{p} \leq r^{p} = \D(x,y)^{p} 
 \label{eq:negative_curvature_needed_to_guarantee}
\end{equation}
by ensuring 
\begin{equation}
\left\|\frac{1}{2}e_1 + v\right\|^{p} + \left\|-\frac{1}{2}e_1 + v\right\|^{p} \leq 1 - \alpha
\label{eq:Bound_norms_by_alpha}
\end{equation}
 where $\alpha \in (0,1)$ is such that the $O(r^{p+2})$ term is less than $\alpha r^{p}$. Again, we observe that $\alpha = o(1)$. Note that if \eqref{eq:Bound_norms_by_alpha} holds with $\alpha < 1$ then $\left\|\frac{1}{2}e_1 + v\right\|^{p-2}  + \left\|-\frac{1}{2}e_1 + v\right\|^{p-2} < 1$, and so \eqref{eq:negative_curvature_bottleneck} becomes:
\begin{equation}
\D(x,z)^{p} +  \D(z,y)^{p} \leq \alpha r^{p}  + \frac{-K_{\min}}{12}r^{p+2} + O(r^{p+3}).
\label{eq:negative_curvature_bottleneck_2} 
\end{equation}
For both cases, consider the $p$-elongated set defined in the tangent space:
\[\mathcal{D}_{1-\alpha,p} := \mathcal{D}_{1-\alpha,p}\left(\frac{1}{2}e_1,-\frac{1}{2}e_1\right)  = \left\{v\in T_{x_M}\mathcal{M} \ \bigg| \ \left\|\frac{1}{2}e_1 + v\right\|^{p} + \left\|-\frac{1}{2}e_1 + v\right\|^{p} \leq 1-\alpha\right\}\] as well as its scaled image under the exponential map: $\exp\left(r\mathcal{D}_{1-\alpha,p}\right)$. From the above arguments, \eqref{eq:Pos_curvature_distance_bound} (resp. \eqref{eq:negative_curvature_needed_to_guarantee}) will hold as long as $z\in \exp\left(r\mathcal{D}_{1-\alpha,p}\right)$. By Lemma 3.6 as long as $n$ is sufficiently large that $1-\alpha > 2^{1-p}$ we have $B\left(0,\sqrt{\frac{(1-\alpha)^{2/p}}{4^{1/p}} - \frac{1}{4}}\right)\subset \mathcal{D}_{1-\alpha,p}$ so $B(0,r_1^{\star}) \subset r \mathcal{D}_{1-\alpha,p}$ where $r^{\star} := r\sqrt{\frac{(1-\alpha)^{2/p}}{4^{1/p}} - \frac{1}{4}}$. Hence:
\begin{equation}
B_{\D}(x_M,r^{\star}) = \exp\left(B(0,r^{\star}) \right) \subset \exp\left(r\mathcal{D}_{1-\alpha,p}\right) \subset B_{\D}(x_M,r).
\label{eq:Inclusions}
\end{equation}
As in Theorem~3.9:
\begin{align*}
  \Prob\left[ x_{i_j}\in \exp\left(\mathcal{D}_{1-\alpha,p}\right) \ | \ x_{i_j}\in B_{\D}(x,r)\right] & = \frac{\Prob\left[ x_{i_j}\in \exp\left(\mathcal{D}_{1-\alpha,p}\right) \right]}{\Prob\left[x_{i_j}\in B_{\D}(x,r)\right]} \\
  	& \geq \frac{\Prob\left[ x_{i_j}\in B_{\D}(x_{M},r^{\star})\right]}{\Prob\left[x_{i_j}\in B_{\D}(x,r)\right]} \quad \text{\em (using \eqref{eq:Inclusions})}\\
  	& \geq \frac{f_{\min}V_{\min}(r^{\star})}{f_{\max}V_{\max}(r)}. \quad \text{\em (by definition of $V_{\min}$ and $V_{\max}$)}
\end{align*}

Using \eqref{eq:SmallBall1}, \eqref{eq:SmallBall2} and $r = o(1)$:\[\frac{V_{\min}(r^{\star})}{V_{\max}(r)} = \left(1 - o(1)\right)\frac{\left(r^{\star}\right)^{d}}{r^{d}} = \left(1 - o(1)\right)\left(\frac{(1-\alpha)^{2/p}}{4^{1/p}} - \frac{1}{4}\right)^{d/2}.\]
Recalling that $\alpha = o(1)$:
\begin{align*}
\Prob\left[ x_{i_j}\in \exp\left(\mathcal{D}_{1-\alpha,p}\right) \ | \ x_{i_j}\in B_{\D}(x,r)\right] \geq& \left(1 - o(1)\right)\frac{f_{\min}}{f_{\max}} \left(\frac{(1-o(1))^{2/p}}{4^{1/p}} - \frac{1}{4}\right)^{d/2} \\
=& \varepsilon_{\mathcal{M},p,f}.
\end{align*}

As in Theorem 3.9 for $k \geq \frac{3\log n}{-\log(1-\varepsilon_{\mathcal{M},p,f})}$,
\begin{align}
\Prob\left[\exists j \text{ with } x_{i_j}\in \exp\left(\mathcal{D}_{1-\alpha,p}\right)\right] = 1 - \Prob\left[ \not\exists j \text{ with } x_{i_j}\in \exp\left(\mathcal{D}_{1-\alpha,p}\right)\right] \ge 1 - \frac{1}{n^3}.
\label{eq:Apply_Union_Bound_2}
\end{align}
If $x_{i_j} \in \exp\left(\mathcal{D}_{1-\alpha,p}\right)$ then from \eqref{eq:Pos_curvature_distance_bound} (resp. \eqref{eq:negative_curvature_needed_to_guarantee}):
\begin{equation}
 \D(x,x_{i_j})^{p} + \D(x_{i_j},y)^{p} \leq r^{p} = \D(x,y)^{p} 
 \label{eq:Final_equation}
\end{equation}
and so $\{x,y\}$ is not critical. Thus, $\{x,y\}$ is not critical with probability exceeding $1 - \frac{1}{n^3}$. These edges $\{x,y\}$ are precisely those contained in $\mathcal{G}^{p}_{\mathcal{M},\mathcal{X}}$ but not in $\mathcal{G}^{p,k}_{\mathcal{M},\mathcal{X}}$. There are fewer than $n(n-1)/2$ such non $k$-NN pairs $x,y\in\mathcal{X}$. By the union bound and \eqref{eq:Apply_Union_Bound_2} we conclude that none of these are critical with probability greater than $1 - \frac{n(n-1)}{2}\frac{1}{n^3} \geq 1 - \frac{1}{2n}$.  This was conditioned on $\ell_{\mathcal{M},p}(x,y)\le \text{inj}(\mathcal{M})$ for all $x,y\in \mathcal{X}$, which holds with probability $1-\frac{1}{2n}$.  Thus, all critical edges are contained in $\mathcal{G}_{p,k}^{\mathcal{M},\mathcal{X}}$ with probability exceeding $1-\left(\frac{1}{2n}+\frac{1}{2n}\right)=1-\frac{1}{n}$.  Unpacking $\varepsilon_{\mathcal{M},p,f}$ yields the claimed lower bound on $k$. 
\end{proof}
\section{Estimating the Fluctuation Exponent}
\label{app:Estimate_Fluctuation_Exponent}
Table \ref{tab:FluctuationRatesCI} shows confidence interval estimates for $\chi$ obtained by computing $\tilde{\ell}_{p}$ in a sparse graph. 

\begin{table}[tbh]	
	\begin{center}
		\begin{tabular}{cccc|cccc|cccc}
			\hline
			$d$ &	$p$ &	$\hat{\chi}$ & CI for $\chi$  &	$d$ & $p$ & $\hat{\chi}$ & CI for $\chi$  &	$d$ & $p$ &	$\hat{\chi}$ & CI for $\chi$  \\ \hline
			2 & $1.5$ & 0.30 	& (0.28, 0.32) & 3 & $1.5$ & 0.28  & (0.20, 0.36) & 4 & $1.5$ & 0.19 & (0.03, 0.36) \\ 
			2 & $2$ & 0.31 	&	(0.30, 0.32) & 3 & $2$ & 0.23 	& (0.20, 0.25) & 4 & $2$ & 0.16 & (0.13, 0.19) \\
			2 & $4$ & 0.33	&	(0.31, 0.34) & 3 & $4$ & 0.24 	& (0.22, 0.25) & 4 & $4$ & 0.14 	& (0.11,  0.18) \\
			2 & $8$ & 0.34  & (0.32, 0.37) & 3 & $8$ & 0.29 	& (0.27, 0.32) & 4 & $8$ & 0.19 	& (0.14, 0.23) \\ \hline
		\end{tabular}
		\caption{\label{tab:FluctuationRatesCI}Confidence interval estimates of $\chi$ for uniform data for different density weightings ($p$) and different dimensions $(d)$.  }
	\end{center}
\end{table}
\end{document}